\def\eqref#1{equation~\ref{#1}}
\def\1{\bm{1}}
\def\vi{{\bm{i}}}
\def\vj{{\bm{j}}}
\def\vl{{\bm{l}}}
\def\vu{{\bm{u}}}
\def\vv{{\bm{v}}}
\def\vx{{\bm{x}}}
\def\mA{{\bm{A}}}
\def\mB{{\bm{B}}}
\def\mC{{\bm{C}}}
\def\mK{{\bm{K}}}
\def\mQ{{\bm{Q}}}
\def\mW{{\bm{W}}}
\def\mX{{\bm{X}}}
\def\mY{{\bm{Y}}}
\def\mZ{{\bm{Z}}}
\DeclareMathAlphabet{\mathsfit}{\encodingdefault}{\sfdefault}{m}{sl}
\SetMathAlphabet{\mathsfit}{bold}{\encodingdefault}{\sfdefault}{bx}{n}
\theoremstyle{plain}
\newtheorem{theorem}{Theorem}[section]
\newtheorem{proposition}[theorem]{Proposition}
\newtheorem{lemma}[theorem]{Lemma}
\newtheorem{corollary}[theorem]{Corollary}
\theoremstyle{definition}
\newtheorem{definition}[theorem]{Definition}
\theoremstyle{remark}
\newcommand*{\addFileDependency}[1]{
\typeout{(#1)}
%
%
\@addtofilelist{#1}
%
\IfFileExists{#1}{}{\typeout{No file #1.}}
}\makeatother
\definecolor{darkred}{rgb}{0.95, 0.1, 0.1}
\newcommand{\yusu}[1] {{\textcolor{darkred}{{\sc [From YW:]} #1}}}
\newcommand{\denselist}{\itemsep 0pt\parsep=1pt\partopsep 0pt}
\newcommand{\Acal} {\mathcal{A}}
\newcommand{\mset}[1]{\{\!\{#1\}\!\}}
\begin{document}

%
\runningtitle{Higher-Order Graph Transformers}

%
\runningauthor{Cai Zhou, Rose Yu, Yusu Wang}

\twocolumn[

\aistatstitle{On the Theoretical Expressive Power and the Design Space of \\Higher-Order Graph Transformers}

\aistatsauthor{Cai Zhou$\dagger$\footnotemark[1] \And Rose Yu$^*$ \And  Yusu Wang$^*$ }

\aistatsaddress{$\dagger$Tsinghua University \qquad  $^*$University of California, San Diego \\
{\texttt{zhouc20@mails.tsinghua.edu.cn}, \qquad  \texttt{\{roseyu,yusuwang\}@ucsd.edu}}
} ]

\begin{abstract}
  Graph transformers have recently received significant attention in graph learning, partly due to their ability to capture more global interaction via self-attention. Nevertheless, while higher-order graph neural networks have been reasonably well studied, the exploration of extending graph transformers to higher-order variants is just starting. Both theoretical understanding and empirical results are limited. 
  In this paper, we provide a systematic study of the theoretical expressive power of order-$k$ graph transformers and sparse variants. We first show that, an order-$k$ graph transformer without additional structural information is less expressive than the $k$-Weisfeiler Lehman ($k$-WL) test despite its high computational cost. 
  We then explore strategies to both sparsify and enhance the higher-order graph transformers, aiming to improve both their efficiency and expressiveness. Indeed, sparsification based on neighborhood information can enhance the expressive power, as it provides additional information about input graph structures. In particular, we show that a natural neighborhood-based sparse order-$k$ transformer model is not only computationally efficient, but also expressive -- as expressive as $k$-WL test. We further study several other sparse graph attention models that are computationally efficient and provide their expressiveness analysis. Finally, we provide experimental results to show the effectiveness of the different sparsification strategies. 
\end{abstract}

\section{INTRODUCTION}

Recent years have witnessed great success in using the Transformer architecture for various applications, e.g., in natural language processing~\citep{AttentionIsAll}, computer vision~\citep{Vit}, and more recently graph learning~\citep{GPS, Specformer}. However, applying transformers to graphs is fundamentally different from texts or images because the graph topology and structure information cannot be easily captured by standard transformers. Furthermore, while graph neural networks benefit from higher-order extensions known as Invariant Graph Networks (IGN)~\citep{IGN, ExpressivepowerkIGN}, higher-order transformers have not yet been well studied, whose theoretical benefits and empirical strengths are still largely unexplored. 

Specifically, on the theoretical front, analysis of the expressive power of graph transformers is currently limited, especially for the higher-order variants. \citet{ConnectionMPNNGT} show that a Message Passing Neural Network (MPNN) with a virtual node can approximate certain kernelized graph transformers (Performer \citep{Performer} and Linear Transformer \citep{LinearTransformerRNN}). 
\citet{TransformersDeepsetsGraphs} proposed a higher-order attention mechanism operating on $k$-tuples (instead of graph nodes), inspired by a generalization of linear equivariant layers \citep{IGN}. \citet{PureTransformerspowerful} introduced a high-order graph transformer model called TokenGT, and showed that order-$k$ TokenGT is as expressive as the so-called $k$-WL (order-$k$ Weisfeiler-Lehman \citep{CFIgraph}) test. However, their result requires that the input $k$-tuples be equipped with orthonormal vectors (of length $O(n)$) as node identifiers. Furthermore, it is not clear how to choose such node identifiers in a permutation-invariant manner. 

On the empirical front, $k$-transformers are computationally expensive, and the order-$k$ transformer of \citep{PureTransformerspowerful} takes time $O(n^{2k})$ for a graph with $n$ nodes. (Note that higher-order graph networks are also expensive; for example, the expressive order-$k$ Invariant Graph Network (i.e. $k$ -IGN) also takes $O(n^{2k})$ time to compute for each layer.) Hence, it is valuable to explore sparse versions for improved efficiency.

\paragraph{Our Work.}
In this paper, we provide a systematic study of the theoretical expressive power of order-$k$ graph transformers and their sparse variants. Our main contributions are as follows. See also the summary of some theoretical results in Table \ref{Table_summary_design_space}. 
\begin{itemize}\denselist 
    \item In \cref{SectionTheory}, after formally introducing a natural formulation of order-$k$ transformers $\Acal_k$, we show that without ``indices'' information of $k$-tuples, $\Acal_k$ is strictly less expressive than $k$-WL. But when augmented with the indices information, its expressive power is at least that of $k$-WL. 
    Note that however, each layer of $\Acal_k$ takes $O(n^{2k}d)$ time (see Table \ref{Table_summary_design_space}) where $d$ is the network width (latent dimension). Unfortunately, similar to \citep{PureTransformerspowerful}, the resulting model may not be invariant to the choices of the indices. 
    \item In \cref{SectionPractical}, we explore strategies to improve the efficiency of higher-order graph transformers while maintaining strong expressive power. In fact, sparsifying attention using the graph structure can enhance the expressive power. We propose several sparse high-order transformers in \cref{SectionPractical}, and analyze their expressiveness and time complexity. A particularly interesting one is what we call the \emph{neighbor-attention} mechanism, which, as shown in Table \ref{Table_summary_design_space}, significantly improves the time complexity (from $O(n^{2k})$ to $O(n^k)$ roughly speaking), while is as expressive as $k$-WL. Furthermore, note that this model doesn't use indices, and is permutation invariant. 
    In \cref{SectionSimplicialTransformer}, we also study simplicial complexes-based higher-order transformers. 
    \item In \cref{SectionExperiment} and \cref{SectionExperimentsAppendix}, we also provide experimental results to show the relative performances of these higher-order transformers (although mostly of order $2$), and compare them with SOTA methods in graph learning. 
\end{itemize}

\begin{table*}[t]
\caption{Summary of expressiveness and complexity of high-order transformers $\mathcal A_{k_1,k_2}$, their sparse variants, and simplicial transformer $\mathcal {AS}_{k_1:k_2}$. We also include the results of $k$-IGN of \citep{IGN} for reference. 
Here $n$ is the number of nodes; $\bar D$ is the average node degree; $d$ is the latent dimension of model (sometimes omitted in main text); $\mathcal S_k$ is the set of $k$-simplices; \textit{simplex neighbor} includes coboundaries, boundaries, upper and lower adjacent neighbors, while $\bar{ D_{\mathcal S}}$ is the average number of these extended neighbors. Details of the simplicial transformers $\mathcal AS_{*}$s are in \cref{SectionSimplicialTransformer}.}
\label{Table_summary_design_space}
\vskip -0.1in
\begin{center}
\begin{small}
\resizebox{2.\columnwidth}{!}{
\begin{tabular}{ccccccc}
\toprule
Base & \multirow{2}{*}{Type} & Computation & Sparse & \multirow{2}{*}{Enhancement} & \multirow{2}{*}{Complexity} & Expressive \\
model & & reduction & attention & & & power\\
\midrule
$k$-IGN~\citep{IGN} & - & - & - & - & $O(n^{2k}d)$  & $=k$-WL\\
\midrule
$\mathcal A_k$ & dense & - & - & - & $O(n^{2k}d)$  & $\prec k$-WL\\
$\mathcal A_k$ & dense & - & - & input indices & $O(n^{2k}d)$  & $\succeq k$-WL(with ReLU)\\
$\mathcal A_k$ & dense & kernelization & - & - & $O(n^{k}d^2)$  & $\prec k$-WL\\
$\mathcal A_k^{\mathsf{Ngbh}}$ & sparse & - & neighbor & - & $O(n^{k+1}kd)$  & $=k$-WL\\
$\mathcal A_k^{\mathsf{LN}}$ & sparse & - & local neighbor & - & $O(n^{k}k\bar D d)$  & $\succeq \delta\text{-}k$-LWL\\
$\mathcal A_k^{\mathsf{VT}}$ & sparse & - & virtual tuple & - & $O(n^{k} d)$  & $\simeq$ kernelized $\mathcal A_k$\\
\midrule
$\mathcal {AS}_k$ & dense & - & - & $\mathbf L_k$ & $O(|\mathcal S_k|^2d)$  & -\\
$\mathcal {AS}_{0:K}$ & dense & - & - & $\mathcal L_{0:K}$ & $O((\sum_{k=0}^K|\mathcal S_k|)^2d)$  & $\succeq $MPSN \\
$\mathcal {AS}_{0:K}^{\mathsf {SN}}$ & sparse & - & simplex neighbor & - & $O((\sum_{k=0}^K|\mathcal S_k|) \bar {D_{\mathcal S}}d)$  & $= $MPSN \\
$\mathcal {AS}_{0:K}^{\mathsf {VS}}$ & sparse & - & virtual simplex & - & $O((\sum_{k=0}^K|\mathcal S_k|) d)$  & $\simeq$ kernelized $\mathcal {AS}_{0:K}$\\

\bottomrule
\end{tabular}
}
\end{small}
\end{center}
\vskip -0.1in
\end{table*}

\section{PRELIMINARIES AND BACKGROUND}

In this section, we briefly review two notions: the order $k$-Weisfeiler-Lehman ($k$-WL) test, which is commonly used as a way to measure expressiveness of graph neural networks, as well as the $k$-IGN \citep{IGN}, which is the most expressive graph networks of a given order($k$-IGN is as expressive as $k$-WL).  

\textbf{$k$-WL test} is a procedure to test whether two input attributed graphs are potentially isomorphic or not. 
In the literature on graph networks, the $k$-WL procedure usually refers to the following iterative color-assignment scheme for an input graph $G$. The $k$-WL test will return that two graphs are ``not isomorphic'' if the collections of colors assigned at any iteration differ; otherwise, it will return ``potentially isomorphic". 

In particular, given a graph $G$, the $k$-WL procedure assigns colors to all $k$-tuples of $V(G)$ and iteratively updates them. The initial color $c_k^{0}(\mathbf v, G)$ of the tuple $\mathbf v\in V(G)^k$ is determined by the isomorphism type of the tuple $\mathbf v$~\citep{PPGN}. At the $t$-th iteration, the color updating scheme is
\begin{equation}\label{equation_kWL_color}
\begin{aligned}
    c_k^{t}(\mathbf v, G)=& \text{Hash}\Big(c_k^{t-1}(\mathbf v, G),\\ \big(\mset{&
    c_k^{t-1}(\psi_i(\mathbf v, u), G)|u\in V(G)}|i\in [k]\big)\Big)
\end{aligned}
\end{equation}
where $\psi_i(\mathbf v, u)$ means replacing the $i$-th element in $\mathbf v$ with $u$. 
The color of the entire graph is the multiset of all tuple colors,
\begin{equation}
    c^{t}_k(G)={\rm Hash}\big(\mset{c^{t}_k(\mathbf v, G)|\mathbf v\in V(G)^k}\big)
\end{equation}
As mentioned earlier, two graphs are considered ``potentially isomorphic" by $k$-WL if they have identical tuple color multisets for all iterations $t$s. It is known that $1$-WL is equivalent to $2$-WL in terms of differentiating graphs, while $k$-WL is strictly less powerful than ($k+1$)-WL for $k \ge 2$~\citep{CFIgraph}. 

\textbf{$k$-IGN} was introduced by \cite{IGN}. An order $k$ Invariant Graph Network ($k$-IGN) is defined as a function $F_k:\mathbb R^{n^k\times d_o}\rightarrow \mathbb R$ of the following form:
\begin{equation}
    F_k={\rm MLP} \ \circ \ L_{k\rightarrow 0} \ \circ \ L_{k\rightarrow k}^{(T)}  \ \circ \ \sigma  \ \circ \ \dots  \ \circ \ \sigma  \ \circ \ L_{k\rightarrow k}^{(1)},
\end{equation}
where each $L_{k\rightarrow k}^{(t)}$ is a (permutation) \emph{equivariant linear layer} $\mathbb R^{n^k\times d_{t-1}}\rightarrow \mathbb R^{n^k\times d_t}$, $L_{k\rightarrow 0}$ is an \emph{invariant linear layer} $\mathbb R^{n^k\times d_T}\rightarrow \mathbb R$, while $\sigma$ is nonlinear activation function such as ReLU. In particular, it turns out that there exist ${\rm bell}(k+l)$ number of basis tensors $\mathbf B^\mu$ and ${\rm bell}(l)$ number of basis tensors $\mathbf C^\lambda$, such that any equivariant linear layer $L_{k\rightarrow l}: \mathbb R^{n^k\times d}\rightarrow \mathbb R^{n^l\times d'}$ from a $k$-order tensor to a $l$-order tensor can be written as follows: for order $k$ input $\mX \in \mathbb R^{n^k\times d}$,
\begin{equation}\label{equation_kIGN}
    L_{k\rightarrow l}(\mX)_{\vi} =\sum_{\mu}\sum_{\vj} \mathbf B_{\vi,\vj}^\mu \mX_{\vj} w_\mu + \sum_\lambda \mathbf C_{\vi}^\lambda b_\lambda. 
\end{equation} 
Here $\vi\in [n]^l,\vj \in [n]^k$ are multi-indices, $w_\mu \in \mathbb R^{d\times d'}, b_\lambda \in \mathbb R^{d'}$ are weight and bias parameters, and $\mathbf B^\mu \in\mathbb R^{n^{l+k}}$.
Note that an invariant layer is simply a special case of an equivariant layer $L_{k\rightarrow l}$ when $l=0$. 
It is known that $k$-IGNs are as expressive as $k$-WL \citet{PPGN,ExpressivepowerkIGN,ExpressiveInvariantEqui}. Analogous to the construction of $k$-IGN, all graph networks including graph transformers have permutation equivariant intermediate layers, and permutation invariant final graph-level outputs.

\section{THEORETICAL ANALYSIS OF HIGH-ORDER TRANSFORMERS}\label{SectionTheory}

In this section, we first introduce a natural notion of higher-order transformers, which is slightly more general than the one in \cite{PureTransformerspowerful}. We show that a plain order-$k$ transformer without additional structural information is \textit{strictly less expressive} than $k$-WL. On the other hand, adding explicit tuple indices as part of the input tuple features leads to a $k$-transformer that is at least as powerful as $k$-WL. However, it may also break the permutation invariance (not invariant to the choice of the indices). The proofs and some additional results can be found in \cref{SubsecProofSec3}. 

\begin{definition}[Order $k_1,k_2$-Transformer Layer]\label{def_k1k2-transformer}
A \emph{(cross-attention) transformer layer} $\mathcal A_{k_1,k_2}:\mathbb R^{n^{k_1}\times d}\times \mathbb R^{n^{k_2}\times d}\rightarrow \mathbb R^{n^{k_1}\times d'}$ takes a query tensor $\mX\in\mathbb R^{n^{k_1}\times d}$ and a key tensor $\mY\in \mathbb R^{n^{k_2} \times d}$ as input, and is defined as
\begin{equation}\label{k1k2-s1s2transformer}
    \mathcal A_{k_1,k_2}(\mX, \mY)={\rm softmax}\Big(\underbrace{(\mX Q)}_{\in \mathbb R^{n^{k_1}\times d_k}} \underbrace{(\mY K)^\top }_{\in \mathbb R^{d_k\times n^{k_2}}}\Big) \underbrace{(\mY V)}_{\in \mathbb R^{n^{k_2}\times d'}}. 
\end{equation}
    Here $Q\in \mathbb R^{d\times d_k}$, $K\in \mathbb R^{d\times d_k}$, $V\in \mathbb R^{d\times d'}$ are the learnable weight matrices where $d_k$ is the latent dimension, and the softmax is performed row-wise on an $n^{k_1}\times n^{k_2}$ matrix. In practice $d, d_k, d'$ are usually of the same magnitude and are thus both denoted as $O(d)$ regarding complexity. We may also sometimes omit the $O(d)$ factor in the main text when we care more about the complexity w.r.t number of nodes $n$.
    
    A \textit{self-attention transformer layer} is when $\mX =\mY, \ k_1=k_2=k$, which we also refer to as \textit{(order) $k$-transformer layer}, and denoted by $\mathcal A_k$ for simplicity. 
\end{definition}

Throughout the paper, we mostly talk about self-attention transformers (which is standard in the literature). However, some experiments using cross-attention transformers are given in \cref{SectionExperimentsAppendix}. 


A $k$-transformer is composed of many layers of the above $k$-transformer layers. Note that the above definition is for single-head attention, but it is easy to extend it to multi-head attention analogous to standard transformers. Also, following the standard setting, we allow bias terms after multiplying $\mX$ with query, key and value weight matrices; although these are omitted in the above definition for clarity. Furthermore, residual connection and input/output MLPs are always allowed between these $k$-transformer layers.
Note that Definition \ref{def_k1k2-transformer} is generic and not limited to graph data. For graph input, similar to the $k$-WL procedure, the feature initialization of $k$-tuples can be based on the isomorphism types of the tuples~\citep{PPGN}. 

\paragraph{Theoretical Expressive Power of Order-$k$ Transformer.}

Next, we investigate the expressive power of order-$k$ transformer in terms of the $k$-WL hierarchy. A subtle issue here is the use of indices: $k$-WL test assumes that the graph nodes are indexed, and a $k$-tuple is explicitly {\bf indexed} by $k$ indices of those nodes in this tuple. This index is only used to compute the ``neighbors'' of a specific $k$-tuple (see the use of $\psi_i(\bm v, u)$ in Eqn (\ref{equation_kWL_color}). The entire procedure is still \emph{independent} to the indexing. 
Such structural information unfortunately is lost in a $k$-transformer layer, where the self-attention mechanism cannot differentiate such ``neighbors''. It is therefore not surprising that we have the following negative result (proof in \cref{subsubsec_proof_kWL_kFWL}). 


\begin{theorem}\label{Theorem_Akk<kWL}
    Without taking tuple indices as inputs, $\mathcal A_k$ is strictly less expressive than $k$-WL.
\end{theorem}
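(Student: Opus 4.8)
The plan is to isolate the one structural weakness the theorem points to: once tuple indices are removed, a self-attention layer can only mix tuple features through a \emph{global, permutation-symmetric} aggregation, so it never sees the neighborhood relations that $k$-WL exploits. Concretely, I would prove the key lemma that the output of $\mathcal A_k$ on a graph $G$ depends \emph{only} on the multiset $M^0:=\mset{c_k^0(\mathbf v,G)\mid \mathbf v\in V(G)^k}$ of initial tuple isomorphism types. Both halves of ``strictly less expressive'' then follow from this single fact.

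First I would record the action of one layer. Writing $h^t(\mathbf v)$ for the feature of tuple $\mathbf v$ after $t$ layers, the row-wise softmax of \cref{def_k1k2-transformer} (together with the pointwise residual connections, MLPs and normalizations) gives $h^{t+1}(\mathbf v)=\Phi_t\big(h^t(\mathbf v),\,\mset{h^t(\mathbf w)\mid \mathbf w\in V(G)^k}\big)$, where $\Phi_t$ is symmetric in its multiset argument because the attention weights, their normalizer, and the value aggregation all range over \emph{all} tuples $\mathbf w$ with no reference to indices. The crucial point is that the global multiset $M^t:=\mset{h^t(\mathbf w)}$ is one and the same for every query $\mathbf v$ in a fixed graph; hence \emph{within} that graph $h^{t+1}(\mathbf v)$ is a pointwise function of $h^t(\mathbf v)$, so the partition of $V(G)^k$ into equal-feature classes can only coarsen and never refines below the $t=0$ partition, and $M^{t+1}$ is a function of $M^t$.

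Chaining this, $M^t$ is determined by $M^{t-1}$ and hence by $M^0$, and a permutation-invariant graph-level readout is a function of the final multiset $M^T$; therefore the whole output of $\mathcal A_k$ is a function of $M^0$ alone. Since, without index inputs, tuples are initialized by isomorphism type, $M^0=\mset{c_k^0(\mathbf v,G)}$ is exactly the $0$-th $k$-WL coloring. As $k$-WL only \emph{refines} $c_k^0$, any two graphs with equal $M^0$ produce equal $\mathcal A_k$ outputs, so $\mathcal A_k\preceq k\text{-WL}$.

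For strictness I would exhibit non-isomorphic graphs with identical $M^0$ that $k$-WL separates after one or more refinement rounds; by the lemma $\mathcal A_k$ cannot tell them apart. For $k=2$ this is immediate: $P_4$ and $K_3\sqcup K_1$ have equal vertex, edge and non-edge counts, hence equal $M^0$, but are split by $1$-WL (thus $2$-WL) via their differing degree sequences. The main obstacle is the general-$k$ case, where equal $M^0$ forces the two graphs to share the \emph{entire} profile of induced configurations on at most $k$ vertices while still differing under $k$-WL. I would obtain such pairs either by a counting/pigeonhole argument (the number of possible $\le k$-vertex profiles is only $\mathrm{poly}(N)$ in the number of vertices $N$, whereas the number of $k$-WL types is not, forcing two graphs with the same profile but different $k$-WL type) or by adapting known constructions of graphs that agree on all small-subgraph counts yet remain WL-separable; verifying that a concrete such pair has matching $M^0$ is the technical heart of this direction.
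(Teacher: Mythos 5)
Your proposal is correct and takes essentially the same route as the paper's proof: the paper likewise observes that without indices every tuple attends to the same global multiset, so identically-initialized tuples receive identical updates, the tuple partition never refines, and the model's distinguishing power collapses to the $0$-th round of $k$-WL, after which strictness is witnessed by non-isomorphic graphs with equal initial type multisets that $k$-WL separates. The paper only exhibits explicit witness pairs for $k\le 2$ (a $4$-cycle versus a triangle with a pendant vertex, playing the role of your $P_4$ versus $K_3\sqcup K_1$) and merely asserts existence for general $k$, so your flagged ``technical heart'' is a gap the paper shares; your pigeonhole sketch does close it, since $k$-WL-equivalent graphs have equal $\le k$-vertex profiles, the number of such profiles is polynomial in $N$, and already the number of $1$-WL classes (distinct degree sequences) grows superpolynomially.
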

The limited expressive power of $\mathcal A_k$ is not desirable: It suffers from high computational cost -- indeed, the complexity for one $\mathcal A_k$ layer is the same as $k$-IGN, which is $O(n^{2k})$. Furthermore, it is less expressive than $k$-IGN (which is as expressive as $k$-WL).

A natural step forward is to make $\mathcal A_k$ more expressive through structural enhancements.
Given the discussion of the use of ``indices'' in $k$-WL earlier, a natural strategy is to bring in structural information by augmenting the inputs with tuple indices. Indeed, as Theorem \ref{theorem_Akk=kWL} below shows, this enhancement improves the expressiveness to the same as $k$-WL. Here we take the $k$-dimensional tuple indices as the model inputs, where each index $\vi \in [n]^k$ is the multi-index of the same definition as in $k$-IGN~\citep{IGN}.

%
\begin{theorem}\label{theorem_Akk=kWL}
    For inputs $\mX \in \mathbb R^{n^k\times (d+k)}$ where each element is a concatenation of a $d$-dimensional tuple feature and $k$-dimensional index of the tuple, one layer of $\mathcal A_k$ with latent dimension $O(k)$ and $k$ heads augmented with input MLPs, residual connection feed-forward layers can approximate one $k$-WL iteration arbitrarily well. If the softmax function is replaced by element-wise ReLU activation, $\mathcal A_k$ can exactly simulate $k$-WL.
\end{theorem}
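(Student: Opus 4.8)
The plan is to realize one $k$-WL iteration (Eqn~(\ref{equation_kWL_color})) with a single multi-head $\mathcal{A}_k$ layer, devoting one attention head to each of the $k$ neighborhoods $\mset{c_k^{t-1}(\psi_i(\mathbf v,u),G)\mid u\in V(G)}$, $i\in[k]$. The core difficulty is that self-attention aggregates over \emph{all} key tuples weighted by query--key similarity, whereas $k$-WL aggregates in head $i$ only over the $i$-neighbors of $\mathbf v$, i.e.\ those tuples $\mathbf w$ with $w_j=v_j$ for all $j\ne i$. First I would use the $k$-dimensional index carried in the input to make head $i$'s attention pattern select exactly these $i$-neighbors, then let a feed-forward block combined with a residual connection play the role of the injective $\text{Hash}$.

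The key step is to express the $i$-neighbor indicator as a bilinear (dot-product) score. Writing $\mathbf v=(v_1,\dots,v_k)$ and $\mathbf w=(w_1,\dots,w_k)$ for the indices of the query and key tuples, I would arrange the head-$i$ query/key maps so that the pre-softmax score equals
\begin{equation}
    s_i(\mathbf v,\mathbf w)=-\sum_{j\ne i}(v_j-w_j)^2 = 2\sum_{j\ne i}v_jw_j-\sum_{j\ne i}v_j^2-\sum_{j\ne i}w_j^2.
\end{equation}
The cross term is already a dot product of the coordinate vectors $(v_j)_{j\ne i}$ and $(2w_j)_{j\ne i}$; the two pure-quadratic terms are absorbed by augmenting the query with an extra coordinate $-\sum_{j\ne i}v_j^2$ paired against a constant-$1$ key coordinate, and symmetrically for the key. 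The quantities $v_j^2$ are produced from the scalar indices by the input MLP (on the finite index set $[n]$ a ReLU network computes $x\mapsto x^2$ exactly). This needs only $O(k)$ latent coordinates per head, matching the stated budget. Because indices are integers, $s_i(\mathbf v,\mathbf w)=0$ precisely on $i$-neighbors and $s_i(\mathbf v,\mathbf w)\le -1$ otherwise, giving a unit separation gap that is the crux of the whole construction.

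With this gap in hand, both claims follow by choosing how the scores are converted into weights. For the \emph{exact} ReLU version, a constant $+1$ shift (realizable by one more constant query/key coordinate) turns the score into $\mathrm{ReLU}(s_i(\mathbf v,\mathbf w)+1)$, which equals $1$ on $i$-neighbors and $0$ elsewhere; the attention output of head $i$ is then the exact sum over the $i$-neighborhood of the value-encoded colors. Choosing the value map to be a one-hot (or otherwise injective) encoding of the finitely many colors present at iteration $t-1$ makes this sum an injective encoding of the multiset $\mset{c_k^{t-1}(\psi_i(\mathbf v,u),G)\mid u\in V(G)}$ (each $i$-neighborhood has exactly $n$ members, so the counts suffice). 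Concatenating the $k$ heads and adding the residual copy of $c_k^{t-1}(\mathbf v,G)$ presents the FFN with exactly the argument tuple of $\text{Hash}$; since this argument ranges over a finite set, a ReLU MLP realizes an injective $\text{Hash}$ exactly. For the softmax version, scaling all scores by a large factor $\beta$ makes $\softmax$ concentrate on the $i$-neighbors (leakage $O(n\, e^{-\beta})$), so the head output approaches the uniform neighborhood average arbitrarily well as $\beta\to\infty$, and universal approximation lets the FFN approximate $\text{Hash}$ to any target accuracy, yielding the ``arbitrarily well'' statement.

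The main obstacle, and where I would spend the most care, is the simultaneous requirements on the bilinear score: it must encode the equality test across all $k-1$ positions $j\ne i$, keep the latent dimension at $O(k)$, and preserve the integer separation gap so that ReLU gives a \emph{clean} $0/1$ mask rather than an approximate one. The remaining pieces---exactness and injectivity of the multiset encoding via the fixed value map, and exact realizability of $\text{Hash}$ on the finite color domain by a ReLU FFN---are standard Deep-Sets and universal-approximation arguments once the attention mask is correct. (As noted in the surrounding text, this construction uses the indices and therefore need not be invariant to their choice; that caveat is orthogonal to the expressiveness claim proved here.)
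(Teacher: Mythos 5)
Your proposal is correct and follows essentially the same route as the paper's proof: one attention head per neighbor index $i$, a query--key score that attains its maximum exactly on the $i$-neighbors, softmax with a large scale for the approximate claim and a shifted ReLU for the exact claim, summed injective value encodings for the multisets, and residual plus FFN for the Hash. The only difference is the equality-test gadget --- you use the polarization of $-\sum_{j\ne i}(v_j-w_j)^2$ with explicit quadratic features and an integer separation gap of $1$, whereas the paper embeds each index as $\bigl(\cos(2\pi i_l/M),\sin(2\pi i_l/M)\bigr)$ and gets a gap of $c/M^2$; both use $O(k)$ latent dimensions and yield the same conclusion, with yours arguably slightly cleaner for the exact ReLU case since $x\mapsto x^2$ on $[n]$ is exactly realizable by a ReLU network.
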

Here we provide some brief comments on \cref{theorem_Akk=kWL}. It is interesting to see that $\mathcal A_k$ with ReLU activation augmented by tuple indices can simulate $k$ -WL and thus is permutation invariant. Unfortunately, unlike $k$-IGN, the resulting model is not guaranteed to be permutation invariant to choices of tuple indices - for example, the output of the softmax version of $\mathcal A_k$ is not permutation invariant to the tuple indices. Hence the above theorem is only of theoretical interest. Note that the same issue also applies to TokenGT~\citep{PureTransformerspowerful} -- in fact, they assume that each node has a distinct orthonormal vector as a ``node identifier'', which is an even stronger requirement than using just indices. Each node identifier needs a size of $O(n)$, thus TokenGT requires more time complexity ($O(n^{2k+1})$) to achieve $k$-WL expressiveness than our construction ($O(n^{2k})$). 
Our proof is also different from that of \cite{PureTransformerspowerful}. Instead of approximating the equivalence class basis in $k$-IGN as they do, we directly simulate $k$-WL. See \cref{subsubsec_proof_kWL_kFWL} for the proof of the above theorem, as well as more discussions regarding our advantages over the result of \cite{PureTransformerspowerful}.

\section{EFFICIENT AND EXPRESSIVE HIGH-ORDER GRAPH TRANSFORMERS}\label{SectionPractical}

The plain order-$k$ graph transformer model $\mathcal A_k$ suffers from  $O(n^{2k})$ time complexity, limited expressive power (\cref{Theorem_Akk<kWL}) or broken permutation invariance (\cref{theorem_Akk=kWL}). In this section, we explore sparse high-order graph transformers to address these issues. In particular, in Section \ref{subsec:kernelization}, we study a more general kernelization strategy to sparsify self-attentions. The more interesting exploration is presented in Section \ref{subsec:neighborhoodbased}, where we inject graph structure inductive biases to improve efficiency, while maintaining or even improving the expressive power of the resulting model. 
In Section \ref{subsec:sampling}, we also discuss how to reduce the computational complexity by using a reduced set of $k$-tuples with most details in \cref{SectionSimplicialTransformer}. The proofs of all the theorems in this section are in \cref{SubsecProofSec4}.

Contrary to the explicit use of indexing as in Theorem \ref{theorem_Akk=kWL} (as well as the use of orthonormal node identifiers for TokenGT \cite{PureTransformerspowerful}), all the models in this section do not require \textit{explicit indices as part of input features}, and the resulting attention layer is \textit{permutation equivariant}. 




\begin{figure*}[t]
\begin{center}
\centerline{\includegraphics[width=2.0\columnwidth]{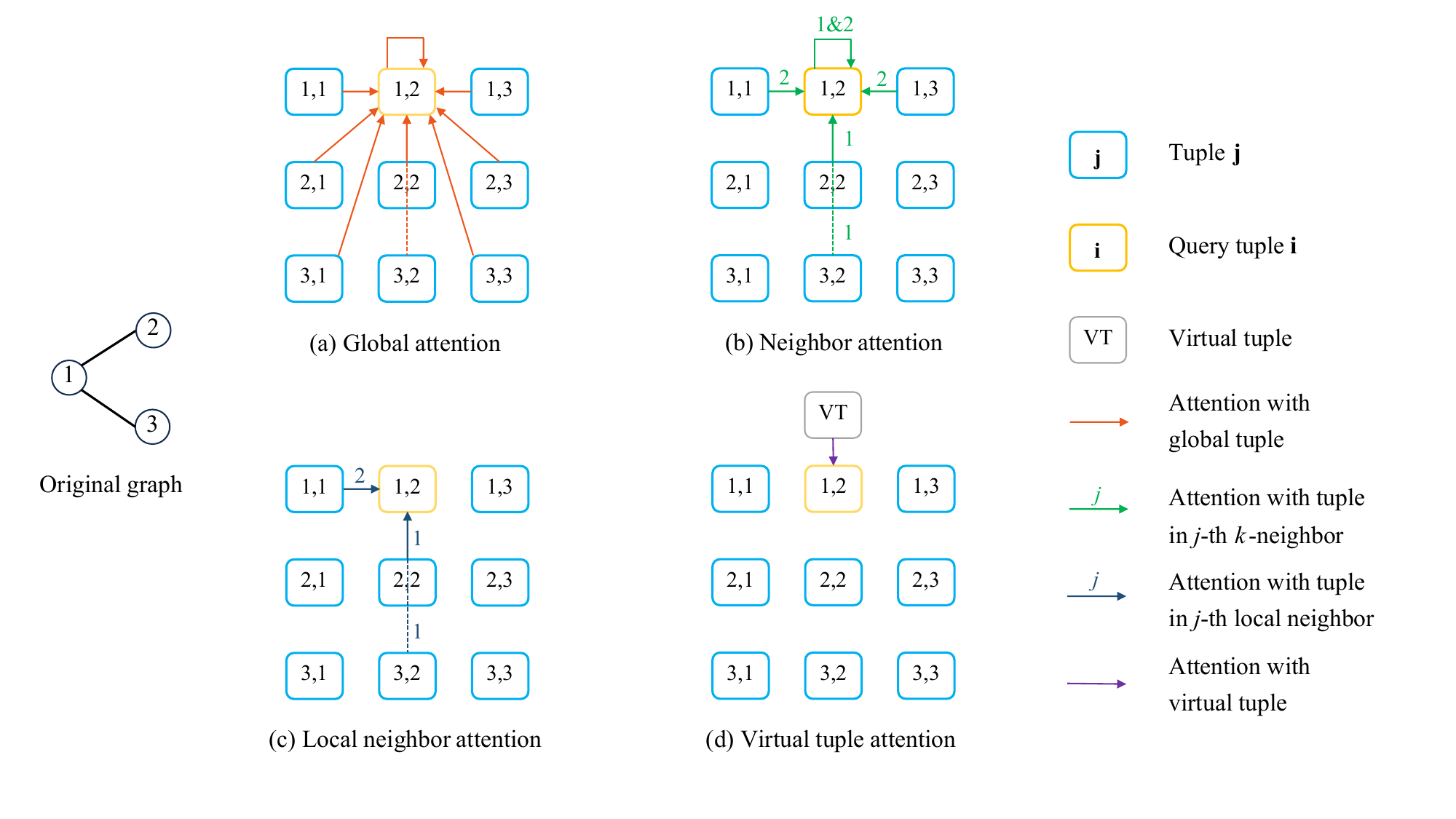}}
\vskip -0.15in
\caption{Variants of $k$-th order self-attention, i.e. $\mathcal A_k$ and its sparse forms. In the figure order $k=2$, number of nodes $n=3$. For simplicity, we only show the attention of query token $\vi=(1,2)$, and all $n^k$ (real) tuples are calculated with the same rule. The dashed lines are only for aesthetic illustration. (a) Global attention (plain $\mathcal A_k$), the query token computes attention with all $n^k$ tuples. (b) Neighbor attention, the query token computes attention with its $k$-neighbors; $k$-neighbor is of the same definition as in $k$-WL. (c) Local neighbor attention, where the query token computes attention with only its local neighbors; local neighbor is of the same definition as in \citep{WLgoSparse}. (d) Virtual tuple attention, the query token only computes attention with the virtual tuples (we only display one for simplicity), while each virtual tuple computes attention with all other real tuples.}
\label{figure_attention_illustration}
\end{center}
\vskip -0.15in
\end{figure*}

\subsection{Kernelized Attention}\label{subsec:kernelization}


Kernelized attention has been commonly used for (standard) transformers~\citep{Performer, LinearTransformerRNN}. In this subsection, we generalize kernelized attention to higher-order transformers, reducing the complexity from $O(n^{2k}d)$ to $O(n^k d^2)$, where $d$ is the feature dimension. We also provide a theoretical analysis of such generalization. 

Such a generalization is rather straightforward. This is because the order-$k$ self-attention layer essentially has the same structure as a standard self-attention layer, where the only difference is that the input tokens are now those $k$-tuples (instead of nodes in the standard transformer). 
For completeness, we include the formulation here. 
Specifically, a single head order-$k$ self-attention layer can be re-formulated as
\begin{equation}
\begin{aligned}
    \mX_{i}&=\sum_{j=1}^{n^k} \frac{\kappa(\mX_i Q, \mX_j K)}{\sum_{l=1}^{n^k} \kappa(\mX_i Q, \mX_l K)}\cdot (\mX_j V)\\
\end{aligned}
\end{equation}
where $\kappa:\mathbb R^d\times \mathbb R^d \rightarrow \mathbb R$ is the softmax kernel $\kappa(\mathbf x,\mathbf y)=\exp(\mathbf x^\top \mathbf y)$. The kernel trick approximates the softmax via
$\kappa(\mathbf x,\mathbf y)=<\Phi(\mathbf x),\Phi(\mathbf y)>\approx \phi(\mathbf x)^T\phi(\mathbf y)$
where the first equation is by Mercer's theorem and the latter is a low-dimensional approximation with transformation $\phi:\mathbb R^d\rightarrow \mathbb R^m$. In Performer \citep{Performer}, $\phi(\mathbf x)=\frac{\exp \big(-\frac{||X||_2^2}{2}\big)}{\sqrt m}[\exp(\mathbf w_1^\top \mathbf x),\dots, \exp(\mathbf w_m^\top\mathbf x)]$ where $\mathbf w_k\sim \mathcal N(0,I_d)$. In Linear Transformer \citep{LinearTransformerRNN}, $\phi(\mathbf x)=elu(\mathbf x)+1$. Then we can further rewrite the attention as
\begin{equation}
    \mX_{i}= \frac{\Big(\phi(\mX_i Q)^\top \sum_{j=1}^{n^k}\big( \phi(\mX_j K) \otimes (\mX_j V)\big) \Big)^\top}{\phi(\mX_i Q)^\top \sum_{l=1}^{n^k} \phi(\mX_l K)}
\end{equation}
where $\otimes$ is the outer product. Note that the two summations $\sum_{j=1}^{n^k}\big( \phi(\mX_j K) \otimes (\mX_j V)\big)\in\mathbb R^{md'}$ and $\sum_{l=1}^{n^k} \phi(\mX_l K)$ are shared for all query tokens, while the former has the bottleneck complexity $O(n^kmd')$. Again, since $m,d'$ are comparable to $d$ in scale , the total time complexity of kernelized $\mathcal A_k$ is $O(n^kd^2)$. 
The following results show whether $k$-IGN and kernelized $\mathcal A_k$ can approximate each other.
\begin{theorem}\label{Theorem_kIGN>linearA_kk11}
    $k$-IGN can approximate kernelized $\mathcal A_k$ with Linear Transformer or Performer architectures arbitrarily well. Kernelized $\mathcal A_k$ is strictly less powerful than $k$-IGN.
\end{theorem}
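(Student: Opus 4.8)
The plan is to establish the two assertions separately. For the first claim --- that $k$-IGN approximates kernelized $\mathcal A_k$ with either architecture --- I would decompose a single kernelized self-attention layer into elementary operations and show each is realizable by $k$-IGN primitives up to arbitrary uniform error on a compact domain. Recall that the kernelized layer maps $\mX_\vi$ to $\big(\phi(\mX_\vi Q)^\top S\big)^\top \big/ \big(\phi(\mX_\vi Q)^\top Z\big)$ with $S = \sum_\vj \phi(\mX_\vj K)\otimes(\mX_\vj V)$ and $Z = \sum_\vl \phi(\mX_\vl K)$. The output at tuple $\vi$ is thus assembled from (a) the \emph{pointwise} maps $\vi\mapsto\phi(\mX_\vi Q)$, $\vi\mapsto\phi(\mX_\vi K)\otimes(\mX_\vi V)$, and $\vi\mapsto\phi(\mX_\vi K)$; (b) the two \emph{global sums} $S$ and $Z$; and (c) the final pointwise combination of $\phi(\mX_\vi Q)$ with $S$ and $Z$.

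Both primitive types are available in $k$-IGN. The equivariant linear basis for $L_{k\to k}$ contains the identity operator $\mathbf B_{\vi,\vj}=\1[\vi=\vj]$ and the all-ones operator $\mathbf B_{\vi,\vj}=1$. Composing identity-basis layers with the activation $\sigma$ yields a position-shared MLP, which by universal approximation realizes the continuous pointwise maps of (a) and (c) to arbitrary accuracy on any compact set (the outer product and the matrix--vector contractions are polynomial, hence continuous), while the all-ones operator broadcasts $S$ and $Z$ to every position, implementing (b). Hence a constant number of $k$-IGN layers produce, at each tuple $\vi$, the concatenation $[\phi(\mX_\vi Q),\,S,\,Z]$ and then apply the map of (c). Two technical points must be checked: the working domain must be compact, which follows from assuming bounded input features since all intermediate maps are continuous and therefore bounded; and the denominator in (c) must be bounded away from zero. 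The latter holds for both architectures because $\phi$ is entrywise positive (Performer uses exponentials, Linear Transformer uses $\mathrm{elu}+1>0$), so $\phi(\mX_\vi Q)^\top Z \ge \delta > 0$ uniformly. Since every map is Lipschitz on the relevant compact sets, the errors from the finitely many approximation steps accumulate in a controlled way, giving approximation of the full multi-layer, multi-head kernelized $\mathcal A_k$, so $k\text{-IGN}\succeq$ kernelized $\mathcal A_k$.

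For strictness it remains to exhibit a graph pair separated by $k$-IGN but by no kernelized $\mathcal A_k$; since $k$-IGN is as expressive as $k$-WL, I would reuse the pair constructed in the proof of \cref{Theorem_Akk<kWL}. The key observation is that the kernelized layer, exactly like the softmax layer, updates each tuple by a rule of the form $\mX_\vi^{\mathrm{out}} = F\big(\mX_\vi,\ \mset{\mX_\vj}\big)$: the output depends only on the tuple's own feature and on permutation-invariant sums over all tuples, with no access to tuple indices or to the $\psi_i$-neighbor structure exploited by $k$-WL. Consequently any argument showing that $\mathcal A_k$ fails to separate that pair transfers verbatim to kernelized $\mathcal A_k$, since at every layer both models see only $\big(\mX_\vi, \mset{\mX_\vj}\big)$. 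Combining this with the first part yields kernelized $\mathcal A_k \prec k\text{-IGN}$.

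The hard part will be the approximation analysis in step (c): unlike $\phi$ and the outer product, this step is a \emph{ratio}, so the argument must control the denominator away from its singularity uniformly over all tuples and over the depth of the network. This is precisely where entrywise positivity of $\phi$ is essential, and where I would fix a compact working domain and track Lipschitz constants so that per-layer errors do not blow up under composition. The remainder is a routine assembly of standard $k$-IGN primitives, and the strictness direction follows immediately once the shared ``own-feature plus global-multiset'' structure of the two attention mechanisms is made explicit.
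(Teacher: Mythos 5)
Your proposal is correct and follows essentially the same route as the paper: both realize the kernelized layer by combining the all-ones equivariant basis element (global sums $S$, $Z$) with position-shared MLPs for the pointwise maps, and both verify approximability via continuity on a compact domain plus the denominator being bounded below by entrywise positivity of $\phi$. Your strictness argument is phrased slightly more directly (observing that the kernelized update has the form $F(\mX_\vi,\mset{\mX_\vj})$ and transferring the argument of \cref{Theorem_Akk<kWL} verbatim, rather than first reducing to non-kernelized $\mathcal A_k$), but it rests on the same counterexample and reaches the same conclusion.
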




\subsection{Sparse Attention via Neighbor, Local Neighbor and Virtual Tuple} 
\label{subsec:neighborhoodbased}

The aforementioned kernel tricks reduce model complexity from the perspective of computation and are applicable to general higher-order transformer. However, their attention mechanisms are still dense, in the sense that every query token aggregates information from all other tokens. Additionally, we haven't incorporated any inductive biases of graph and topology. 

In this section, we aim to design sparse attention mechanisms that have the same or even stronger expressive power compared with full attention $\mathcal A_k$. This could be possible, because when each tuple only computes attention with local tuples (generalized 'neighbors' defined according to certain rules), the model becomes not only sparse in computation, but also aware of part of structural information, which could be regarded as some sort of enhancement. 

In particular, we explore three types of sparsification strategies (see \cref{figure_attention_illustration}), and study both their computational complexity and expressiveness. 
Let $[n]$ denote the set of integers from $1$ to $n$. In all the constructions below, we will index all graph nodes (arbitrarily) from $1$ to $n$, and index each $k$-tuple by $\bm i = (i_1, \ldots, i_k) \in [n]^k$. These indices will only be used in the selection of ``neighbors'', and the resulting attention layer will be permutation equivariant/invariant.


\paragraph{Neighbor Attention.}

The first variant is called \textit{neighbor attention} mechanism. Similar to the $k$-WL procedure (recall Eqn (\ref{equation_kWL_color})), given a $k$-tuple $\bm i\in [n]^k$, its $j$-th $k$-neighbor is the $n$ number of $k$-tuples in the set $\mset{\psi_j(\bm i, u) \mid u\in [n]}$, where $\psi_j(\vi, u)$ means replacing the $j$-th element in $\vi$ with $u$.
In the (multi-head) $k$-order neighbor attention, denoted as $\mathcal A_k^{\mathsf{Ngbh}}$, each query tuple $\vi$ only computes its attention with its $k$ number of $k$-neighbors (each consists of $n$ number of $k$-tuples) in each head respectively and concatenates the results. Particularly, when the model has $k$ heads, 
\begin{equation}\label{equation_neighbor_attention}
\begin{aligned}
    \Big(\mathcal A_k^{\mathsf{Ngbh}}(\mX, \mX)\Big)_{\vi}={\rm Concat}\bigg[{\rm softmax} \Big((x_\vi Q^j)^\top  \\ \big(x_{[\psi_j(\vi, u)|u\in[n]]} K^j\big)\Big) \big(x_{[\psi_j(\vi, u)|u\in[n]]} V^j \big| j\in[k]\big) \bigg]
\end{aligned}
\end{equation}
where 
$x_{[\psi_j(\vi, u)|u\in[n]]}\in \mathbb R^{n\times d},\ j\in[k]$ is just the feature of $j$-th neighbor of $\vi$, and $Q^j\in\mathbb R^{d\times d_k}, K^j\in \mathbb R^{d\times d_k}, V^j \in \mathbb R^{d\times d'}(j\in[k])$ 
are weight matrices (parameters) of the $j$-th head. One may easily observe that this neighbor attention $\mathcal A_k^{\mathsf{Ngbh}}$ resembles $k$-WL algorithm, as they both update their representations using its $k$-neighbors. 
Now we show (proof in \cref{SubsubsecProofSparse}) that neighbor attention $\mathcal A_k^{\mathsf{Ngbh}}$ is as powerful as $k$-WL, making it more expressive than $\mathcal A_k$ while enjoying much lower complexity. 
\begin{theorem}\label{TheoremNeighborAttn>=kWL}
    Neighbor attention $\mathcal A_k^{\mathsf{Ngbh}}$ with residual connection, output MLPs, and $k$ heads is as powerful as $k$-WL. Each such layer has $O(n^{k+1}kd)$ time complexity.
\end{theorem}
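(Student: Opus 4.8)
The plan is to establish the two directions of the expressiveness equivalence separately --- that $\mathcal A_k^{\mathsf{Ngbh}}$ is at least as powerful as $k$-WL, and at most as powerful as $k$-WL --- and then to read off the stated time complexity by a direct count. The guiding observation is that, by construction, the neighbor attention in \cref{equation_neighbor_attention} aggregates over exactly the same $k$ neighbor sets $\mset{\psi_j(\vi,u)\mid u\in[n]}$, $j\in[k]$, that the $k$-WL update in \cref{equation_kWL_color} uses; hence the two updates consume the same information, and the task reduces to matching their aggregation and combination steps.

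For the direction $\mathcal A_k^{\mathsf{Ngbh}}\succeq k$-WL, I would exhibit a choice of parameters under which one layer simulates one $k$-WL iteration. I would encode each tuple's current color as a (one-hot) vector, which is possible since the number of colors is finite. In head $j$ I would set the query and key weights (and biases) to zero so that the softmax logits vanish and the attention weights become uniform $1/n$ over the $n$ tuples of the $j$-th neighbor set; taking $V^j$ to be a copy of the identity then makes head $j$ output the normalized color histogram of that neighbor set. Because every neighbor set has exactly $n$ elements, the normalization by $n$ is information-lossless, so this histogram is an injective invariant of the neighbor multiset. Concatenating the $k$ heads collects all $k$ neighbor histograms, the residual connection retains the tuple's current color, and the output MLP then injectively hashes the pair (current color, $k$ histograms) into a fresh one-hot color by universal approximation of MLPs on the finite color domain. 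Since colors are discrete, approximating the hash to sufficiently small error preserves the partition exactly, so by induction over layers, $T$ neighbor-attention layers reproduce $T$ rounds of $k$-WL, and the final permutation-invariant readout distinguishes every pair of graphs that $k$-WL distinguishes.

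For the converse $\mathcal A_k^{\mathsf{Ngbh}}\preceq k$-WL, I would argue by refinement. Softmax attention over a set of key--value tokens is invariant under permutations of those tokens, so the head-$j$ output for query $\vi$ is a function solely of $x_\vi$ and the multiset $\mset{x_{\psi_j(\vi,u)}\mid u\in[n]}$; consequently the whole layer's update of $\vi$ depends only on the current feature together with the $k$ neighbor multisets --- precisely the arguments of the $k$-WL Hash in \cref{equation_kWL_color}. Since Hash is injective, $k$-WL yields the finest coloring obtainable from this information, so any update reading the same inputs can only coarsen it. Assuming the features are initialized consistently with the $k$-WL isomorphism-type colors, a straightforward induction then shows that two tuples with equal $k$-WL color at every iteration keep equal neighbor-attention features, giving $\mathcal A_k^{\mathsf{Ngbh}}\preceq k$-WL. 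Combining the two directions yields the claimed equivalence.

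Finally, for the complexity, there are $n^k$ query tuples; each attends, across the $k$ heads, to $k$ neighbor sets of $n$ tuples, at cost $O(nd)$ per set for the dot products and the value aggregation, giving $O(knd)$ per query and $O(n^{k+1}kd)$ in total. The main obstacle I anticipate is the careful treatment of the multiset-injectivity step under the softmax normalization: I must ensure that averaging (rather than summing) over a neighbor set loses no multiset information, which hinges on all neighbor sets having the fixed cardinality $n$, and that the residual connection and output MLP together route the current color and the $k$ concatenated histograms so that the combined hash is injective --- together with the passage from the ``arbitrarily well'' approximation afforded by softmax to exact distinguishing power, which is precisely what the discreteness of the color set buys us.
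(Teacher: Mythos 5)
Your proposal is correct and follows essentially the same route as the paper's proof: both directions are handled identically (making the softmax logits constant so each head uniformly averages its $k$-neighbor set, then using concatenation, the residual connection, and the output MLP to realize the $k$-WL hash injectively; and observing for the converse that the layer consumes exactly the inputs of the $k$-WL update in \cref{equation_kWL_color}, so it can only refine as far as $k$-WL does). The only cosmetic difference is that the paper forces constant logits by setting the query/key outputs to $\vec 1$ via the bias terms rather than zeroing them, and it represents multisets via power-sum multi-symmetric polynomials rather than one-hot histograms; both choices are interchangeable here.
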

Interestingly, while $\mathcal A_k^{\mathsf{Ngbh}}$ may assign different weights to the $n$ tuples inside each $k$-neighbor via attentions, it has the same theoretical expressive power as $k$-IGN. But neighbor attention is much more efficient than $k$-IGN ($O(n^{k+1}kd)$ vs $O(n^{2k}d)$). Furthermore, we think that this more flexible attention might make neighbor attention better than $k$-IGN in real-world tasks, even regardless of their time complexity. 


\paragraph{Local Neighbor Attention.}

\citet{WLgoSparse} proposed a family of $\delta\text{-}k$-dimensional WL algorithms, which is strictly more powerful than $k$-WL. Denote by $\vi_j$ the $j$ th element in tuple $\vi$ and $N(i)$ the neighbors of node $i$. Compared with the $k$-neighbor in $k$-WL, $\delta\text{-}k$-WL augments each $j$-th $k$-neighbor with the connectivity between the node being replaced $\vi_j$ and the $n$ nodes replacing it. Formally, $\delta\text{-}k$-WL computes
\begin{equation}\label{equation_delta_kWL_color}
\begin{aligned}
    &c_{\delta\text{-}k}^{t}(\vi, G)=\text{Hash}\bigg(c_{\delta\text{-}k}^{t-1}(\vi, G),\\ \Big(\mset{&
    \big(c_{\delta\text{-}k}^{t-1}(\psi_j(\vi, u), G), {\rm adj}(\vi_j, u)\big)|u\in V(G)}\big|j\in [k]\Big)\bigg)
\end{aligned}
\end{equation}
where ${\rm adj}(\vi_j, u)=\mathds 1(u\in N(\vi_j))$ is a boolean variable indicating whether node $\vi_j$ and node $u$ are connected. Using the idea of $\delta\text{-}k$-WL, we can make $\mathcal A_{k}^{\mathsf {Ngbh}}$ potentially more expressive than $k$-WL by incorporating ${\rm adj}(\vi_j, u)$ via attention bias or attention reweighting, which we denote as $\mathcal A_{k}^{\mathsf {Ngbh+}}$, see \cref{SubsecProofSec4} for details. 

However, despite being potentially more expressive, $\mathcal A_{k}^{\mathsf {Ngbh+}}$ has the same $O(n^{k+1}kd)$ complexity as $\mathcal A_{k}^{\mathsf {Ngbh}}$. Now we continue to present a more sparse attention variant, namely \textit{local neighbor attention}. Recall \citet{WLgoSparse} proposed a local variant algorithm named $\delta\text{-}k$-{LWL}, which only updates each tuple with its local neighbors. Specifically, the $j$-th \textit{local neighbor} of a $k$-tuple $\vi$ is defined as
\begin{equation}\label{equation_local_neighbor}
    \mathcal N_j^{Local}(\vi):=\mset{\psi_j(\vi, v)|v\in N(\vi_j)}
\end{equation}
which is a multi-set consists of $D(\vi_j)$ $k$-tuples where $D(\vi_j)$ is the degree of node $\vi_j$ - each tuple replacing element $\vi_j$ with its neighboring nodes $N(\vi_j)$ (instead of all $n$ possible nodes as in $k$-WL). More detailed descriptions are given in \cref{SubsubsecProofSparse}. 

Inspired by \cite{WLgoSparse}, we propose the ($k$-head) order-$k$ \textit{local neighbor attention} $\mathcal A_k^{\mathsf{LN}}$,
\begin{equation}
\begin{aligned}
    \Big(\mathcal A_k^{\mathsf{LN}}(\mX, \mX)\Big)_{\vi}={\rm Concat}\bigg[{\rm softmax} \Big((x_\vi Q^j)^\top\\ \big(x_{[\psi_j(\vi, u)|u\in N(\vi_j)]} K^j\big) \Big) \big(x_{[\psi_j(\vi, u)|u\in N(\vi_j)]} V^j \big| j\in[k]\big) \bigg]
\end{aligned}
\end{equation}
where $Q^j, K^j, V^j (j\in[k])$ are weight parameters of the $j$-th head. In other words, $\mathcal A_k^{\mathsf{LN}}$ updates each query tuple according to its local neighbors defined in \cref{equation_local_neighbor}. 
For example, it is easy to see that the attention in \cref{figure_attention_illustration} (c) is sparser than \cref{figure_attention_illustration} (b); e.g., tuple $(1,3)$ is in the $2$-th $2$-neighbor of query tuple $(1,2)$, but not in the $2$-th local neighbor of $(1,2)$ since nodes $2$ and $3$ are disconnected. 


\begin{theorem}\label{Theorem_LocalNA>=deltakLWL}
    Local neighbor attention $\mathcal A_k^{\mathsf{LN}}$ with residual connection, output MLPs, and $k$ heads is at least as powerful as $\delta\text{-}k$-LWL. Each such layer has $O(n^kk\bar D d)$ time complexity, where $\bar D$ is the average node degree.
\end{theorem}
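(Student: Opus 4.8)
The plan is to mirror the argument that $\mathcal A_k^{\mathsf{Ngbh}}$ is as powerful as $k$-WL (\cref{TheoremNeighborAttn>=kWL}), adapting it to the local neighbor structure and the $\delta\text{-}k$-LWL update of Eqn~(\ref{equation_delta_kWL_color}). Concretely, I would argue by induction on the iteration count $t$, showing that a single $\mathcal A_k^{\mathsf{LN}}$ layer (with $k$ heads, residual connection, and output MLP) can simulate one $\delta\text{-}k$-LWL refinement step, so that whenever $\delta\text{-}k$-LWL assigns distinct colors to two $k$-tuples, the model produces distinct feature vectors. The base case is immediate since both schemes initialize from the isomorphism type of each tuple, so the content lies entirely in the inductive step.

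For the inductive step, the key observation is that the $j$-th head of $\mathcal A_k^{\mathsf{LN}}$ aggregates exactly over the $j$-th local neighbor set $\mathcal N_j^{Local}(\vi)=\mset{\psi_j(\vi,v)\mid v\in N(\vi_j)}$, which is precisely the multiset refined in the $j$-th component of the $\delta\text{-}k$-LWL update. I would first force uniform attention weights by choosing $Q^j=0$ (equivalently $K^j=0$), so that the softmax over the $D(\vi_j)$ local neighbors collapses to the uniform average $\frac{1}{D(\vi_j)}\sum_{v\in N(\vi_j)} x_{\psi_j(\vi,v)} V^j$. Choosing $V^j$ so as to map the previous-layer colors into a Deep Sets style injective-moment embedding makes the sum $\sum_{v\in N(\vi_j)} f\big(c^{t-1}(\psi_j(\vi,v))\big)$ an injective encoding of the multiset $\mset{c^{t-1}(\psi_j(\vi,v))\mid v\in N(\vi_j)}$. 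The residual connection preserves the query color $c^{t-1}(\vi)$, concatenation across the $k$ heads collects all $j\in[k]$ components, and the output MLP then realizes the injective Hash in Eqn~(\ref{equation_delta_kWL_color}). Since there are finitely many colors and finitely many tuples, a softmax layer suffices to approximate these injective maps closely enough to preserve every distinction, which is all that is required for the ``at least as powerful'' direction.

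The main obstacle, and the essential difference from the $\mathcal A_k^{\mathsf{Ngbh}}$ case, is that the local neighbor set has variable cardinality $D(\vi_j)$ rather than the constant $n$. Because the softmax normalizes this count away, the uniform average alone recovers only the mean of the neighbor embeddings, which is not injective over multisets of differing sizes (for instance $\mset{a,a}$ and $\mset{a}$ share the same mean). To resolve this I would make the node degree $D(\vi_j)$ available as a feature: degrees are a permutation-equivariant structural quantity that can be supplied at initialization and carried through the residual stream, after which the output MLP recovers the sum $=\text{mean}\times D(\vi_j)$ and hence the injective multiset encoding. The adjacency indicator $\text{adj}(\vi_j,u)$ in Eqn~(\ref{equation_delta_kWL_color}) is automatically satisfied, since $u\in N(\vi_j)$ forces $\text{adj}(\vi_j,u)=1$, so it contributes no information beyond the degree in this local variant.

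Finally, for the complexity bound I would count the aggregation cost directly. For a fixed tuple $\vi$ and head $j$, the attention touches $D(\vi_j)$ local neighbors at $O(d)$ cost each. Summing over all tuples, $\sum_{\vi\in[n]^k} D(\vi_j) = n^{k-1}\sum_{i=1}^n D(i) = n^{k}\bar D$ for each fixed $j$, and summing over the $k$ heads yields $O(n^k k \bar D d)$ total time per layer, as claimed.
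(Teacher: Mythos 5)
Your proposal is correct and follows essentially the same route as the paper's proof: force uniform attention so each head reduces to a mean over the $j$-th local neighbor, recognize that the mean loses the multiset cardinality, and repair this exactly as the paper does by feeding the node degree $D(\vi_j)$ as an input feature so the output MLP can recover the injective sum-based multiset encoding before applying the hash via concatenation, residual connection, and output MLPs. The complexity count $\sum_{\vi} D(\vi_j) = n^k \bar D$ per head also matches the paper's bound.
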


\paragraph{Virtual Tuple Attention.}

We now present the last model of sparse attention, \textit{ virtual tuple attention}. Virtual node is a widely adopted heuristic technique for message-passing neural networks (MPNNs), and more recently graph transformers \citep{Exphormer}. The approximation power and expressive power of MPNN + virtual node have been studied in \cite{ConnectionMPNNGT}.

Similar to the idea of the virtual node, we introduce a virtual tuple and propose the virtual tuple attention $\mathcal A_k^{\mathsf{VT}}$, where the virtual tuple computes attention with all other real tuples. Each real tuple only computes attention with the virtual tuple (as there is only one key in this case, the softmax always outputs $1$, thus similar to message passing). The time complexity of virtual tuple attention is $O(n^kd)$, since each of the $n^k$ real $k$-tuples only needs to compute attention with the virtual tuple. Note that in practice, we can use multiple virtual tuples to capture more complex patterns. 

Mathematically, denote the feature of the virtual tuple as $x'$, the input is augmented to $\mX'=[\mX, x']\in\mathbb R^{(n^k+1)\times d}$, then $\mathcal A_k^{\mathsf{VT}}$ is calculated as
\begin{align}
    \mathcal A_k^{\mathsf{VT}}(\mX', \mX')_{n^k+1}&= {\rm softmax}\Big((x'Q^1)^\top (\mX K^1)\Big) \mX V^1   \\
    \mathcal A_k^{\mathsf{VT}}(\mX', \mX')_{\vi}&= x' V^{2}
\end{align}
Note that as $\mathcal A_k$ does not include tuple indices nor equivalence class basis, its properties are the same as first-order transformer with $n^k$ one-dimensional input tokens. Therefore, when allowed to be augmented with input and output MLPs, virtual tuple attention can be regarded as a natural extension of ``simplified MPNN + virtual node''~\citep{ConnectionMPNNGT}, and analysis of that ``simplified MPNN + virtual node'' can be applied directly to $\mathcal A_k^{\mathsf{VT}}$. 
See \cref{SubsecProofSec4} for more precise descriptions and details. Following the results of \citep{ConnectionMPNNGT}, we thus obtain: 

\begin{proposition}\label{Proposition_virtual_tuple=Linear_A_kk11}
    $O(1)$ depth and $O(1)$ width virtual tuple attention $\mathcal A_k^{\mathsf{VT}}$ can approximate kernelized $\mathcal A_k$ with Performer or Linear-Transformer architecture arbitrarily well. 
\end{proposition}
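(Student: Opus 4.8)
The plan is to prove the statement by a reduction to the known approximation result of \citet{ConnectionMPNNGT} for the ``simplified MPNN + virtual node'' model, exploiting the structural equivalence already observed in the text: since $\mathcal A_k$ uses neither tuple indices nor any equivariant basis, an order-$k$ self-attention layer is structurally identical to a \emph{first-order} self-attention layer acting on a set of $n^k$ abstract tokens, one per $k$-tuple. First I would make this identification precise, viewing each multi-index $\vi \in [n]^k$ simply as the label of a token carrying feature $\mX_{\vi} \in \mathbb R^d$ and discarding all internal tuple structure. Under this identification the kernelized $\mathcal A_k$ becomes exactly a kernelized (Performer or Linear-Transformer) standard transformer on $n^k$ tokens, and the virtual tuple attention $\mathcal A_k^{\mathsf{VT}}$ becomes exactly the ``MPNN + virtual node'' architecture on the same $n^k$ tokens, with a single global token providing the only channel of communication.

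Next I would verify that the two update rules defining $\mathcal A_k^{\mathsf{VT}}$ coincide with the virtual-node message-passing scheme analyzed in \citet{ConnectionMPNNGT}. Concretely, the virtual-tuple update $\mathcal A_k^{\mathsf{VT}}(\mX', \mX')_{n^k+1} = {\rm softmax}\big((x'Q^1)^\top(\mX K^1)\big)\,\mX V^1$ plays the role of the virtual node aggregating information from all real tokens, while each real-tuple update $\mathcal A_k^{\mathsf{VT}}(\mX', \mX')_{\vi} = x' V^{2}$ is precisely the broadcast step in which every node reads only from the virtual node (the softmax over a single key trivially equals $1$, as noted in the text, so this reduces to message passing). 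Together with the input/output MLPs and residual connections that the model is permitted to use, this is a faithful instantiation of the simplified MPNN + virtual node model, with no local edge-based messaging required.

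I would then invoke the main approximation theorem of \citet{ConnectionMPNNGT}, which states that an $O(1)$-depth, $O(1)$-width simplified MPNN augmented with a virtual node can approximate a Performer/Linear-Transformer layer arbitrarily well on a fixed bounded token set. Applying this result verbatim to the $n^k$ tokens above yields that a constant-depth, constant-width $\mathcal A_k^{\mathsf{VT}}$ approximates the kernelized $\mathcal A_k$ arbitrarily well, which is exactly the claim; note that the constants depend only on the target accuracy and the feature geometry, not on $k$ or on the number of tokens $n^k$.

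The main obstacle, and essentially the only step beyond bookkeeping, is confirming that the reduction is faithful at the level of the \emph{kernelized} attention rather than the plain softmax attention. I would need to check that the per-token kernel feature map $\phi$ of Performer/Linear Transformer is applied identically whether the inputs are interpreted as $n^k$ tuples or as $n^k$ abstract tokens — which holds because $\phi$ acts token-wise and $\mathcal A_k^{\mathsf{VT}}$ never exploits the internal $k$-tuple structure — and that the depth, width, and accuracy $\epsilon$ transfer unchanged under the identification. Once this correspondence is established, the approximation guarantee carries over directly from \citet{ConnectionMPNNGT}.
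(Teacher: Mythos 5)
Your proposal is correct and follows essentially the same route as the paper: both arguments identify the $n^k$ tuples with abstract tokens (since $\mathcal A_k$ ignores all internal tuple structure), recognize $\mathcal A_k^{\mathsf{VT}}$ as an instance of the simplified MPNN + virtual node scheme, and transfer the $O(1)$-depth/width approximation guarantee of \citet{ConnectionMPNNGT} for Performer/Linear-Transformer attention. The only difference is that the paper additionally spells out the two-layer construction (aggregate $\sum_j \phi(\mX_j K)\otimes(\mX_j V)$ and $\sum_l \phi(\mX_l K)$ at the virtual tuple, then broadcast and divide) together with the compactness and lower-bounded-denominator checks, rather than citing the prior result verbatim.
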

Analogously, as $O(1)$ depth and $O(n^d)$ width MPNN + VN can simulate full standard transformer~\citep{ConnectionMPNNGT}, the $O(1)$ depth and $O(n^{kd})$ width virtual tuple attention can thus simulate full $\mathcal A_k$ transformer. 

\subsection{Reducing Input $k$-tuples}
\label{subsec:sampling}

The three sparse attention models in the previous section require $O(n^k)$ complexity which is dictated by the number of input tokens (i.e. the number of $k$-tuples). 
An orthogonal direction to reduce computational complexity is to reduce the number of query tokens. 

\paragraph{Simplicial Attention.} 
Instead of all $k$-tuples, one can select a subset of $k$-tuples according to certain systematic rules.
Simplicial complexes provide a language to model such choices. For example, in applied and computational topology, one approach is to view an input graph as the $1$-skeleton of a hidden space, and there have been various works to construct a simplicial complex from the graph that can reflect low or high-dimensional topological features of this hidden space \citep{DW22}. 
Note that a $p$-dimensional simplex is intuitively spanned by $p+1$ number of vertices. As an example, a simple way to construct a $p$-simplicial is to include all those ($p+1$)-tuples of graph nodes that form a clique in the input graph. (The resulting simplicial complex is the so-called flag complex, or clique complex.) In general, the number of ($k-1$)-simplices is much smaller than all possible $k$-tuples of graph nodes. 
In \cref{SectionSimplicialTransformer}, we propose \textbf{simplicial attention variants} and analyze their theoretical properties. To distinguish from the tuple-based transformers in our main text, we use $\mathcal{AS}$ to denote simplicial transformers both in Table \ref{Table_summary_design_space} and in experimental results.

\paragraph{Random Sampling.} Finally, we remark that one can also use a random subset of $k$-tuples (either uniformly, or w.r.t. some probabilistic distribution depending on input graph structure). We provide some empirical results of sampling connected $3$-tuples in \cref{SectionExperimentsAppendix}. It will be an interesting future direction to explore how to obtain theoretical guarantees in expressiveness or approximation power under certain sampling strategies. 

\section{EXPERIMENTS}\label{SectionExperiment}

We conduct experiments on both synthetic datasets and real-world datasets. Using our sparse attention techniques, we can now scale order-$2$ graph transformers to datasets containing relatively large graphs, such as the long-range graph benchmark (LRGB)~\citep{LRGB}. 
Our higher-order graph transformers and simplicial transformers show superior expressivity on synthetic datasets, and achieve competitive performance across several real-world datasets. 

Due to limited space, we leave most experimental results, implementation details, and in-depth analysis in \cref{SectionExperimentsAppendix}. As representative results, we report the performance of our different higher-order transformer variants on (1) synthetic datasets with structure awareness tasks~\citep{AttendingGT}, including detecting edges and distinguishing non-isomorphic circular skip links (CSL) graphs, see \cref{Table_edge_CSL}; (2) Zinc12k~\citep{Zinc}, a popular molecular property prediction dataset, see \cref{Table_zinc}. See \cref{SectionExperimentsAppendix} for more results, including substructure counting for synthetic tasks, as well as OGB~\citep{OGB} and LRGB~\citep{LRGB} benchmarks for real-world tasks. 
Our main goal is to verify the theoretical properties or scalability of different models, and provide empirical analysis on their pros and cons. 


\begin{table}[ht]
\caption{Structure awareness tasks on synthetic datasets. Shown is the mean $\pm$ std of $5$ runs with different random seeds. Perfect results are shown in \textbf{bold}. The experimental settings and baseline results are adopted from \citep{AttendingGT}.}
\label{Table_edge_CSL}
\begin{center}
\begin{small}
\resizebox{1.\columnwidth}{!}{
\begin{tabular}{lcc}
\toprule
\multirow{2}{*}{Model} & Edge detection & CSL\\
 & $2$-way Accuracy $\uparrow$ & $10$-way Accuracy $\uparrow$\\
\midrule
GIN & $98.11\pm 1.78$ & $10.00\pm0.00$\\
Graphormer & $97.67\pm 0.97$ & $90.00\pm0.00$\\
\midrule
Transformer & $55.84\pm 0.32$ & $10.00\pm 0.00$\\
Transformer+LapPE & $98.00\pm 1.03$ & $\textbf{100.00}\pm0.00$\\
Transformer+RWSE & $97.11\pm 1.73$ & $\textbf{100.00}\pm0.00$\\
\midrule
$\mathcal A_{2}$ & $\textbf{100.00}\pm0.00$ & $10.00\pm0.00$\\
$\mathcal A_{2}$+LapPE & $\textbf{100.00}\pm0.00$ & $\textbf{100.00}\pm0.00$\\
$\mathcal A_{2}$+RWSE & $\textbf{100.00}\pm0.00$ & $\textbf{100.00}\pm0.00$\\
\midrule
$\mathcal A_{2}^{\mathsf {LN}}$ & $\textbf{100.00}\pm 0.00$ & $\textbf{100.00}\pm 0.00$\\
$\mathcal A_{2}^{\mathsf {LN}}$+LapPE & $\textbf{100.00}\pm 0.00$ & $\textbf{100.00}\pm 0.00$\\
$\mathcal A_{2}^{\mathsf {LN}}$+RWSE & $\textbf{100.00}\pm 0.00$ & $\textbf{100.00}\pm 0.00$\\
\midrule
$\mathcal {AS}_{0:1}$+attn.bias & $97.80\pm 0.33$ & $36.67\pm 0.00$\\
$\mathcal {AS}_{0:1}$+attn.bias+LapPE & $98.47\pm 0.11$ & $\textbf{100.00}\pm 0.00$\\
$\mathcal {AS}_{0:1}$+attn.bias+RWSE & $98.10\pm 0.25$ & $\textbf{100.00}\pm 0.00$\\
\midrule
$\mathcal {AS}_{0:1}^{\mathsf {SN}}$ & $96.16\pm 0.37$ & $20.00\pm 0.00$ \\
$\mathcal {AS}_{0:1}^{\mathsf {SN}}$+LapPE & $99.98\pm 0.01$ & $\textbf{100.00}\pm 0.00$ \\
$\mathcal {AS}_{0:1}^{\mathsf {SN}}$+RWSE & $99.54\pm 0.08$ & $\textbf{100.00}\pm 0.00$ \\
\bottomrule
\end{tabular}
}
\end{small}
\end{center}
\end{table}

\paragraph{Results on Synthetic Datasets.} \cref{Table_edge_CSL} shows the results on some synthetic datasets to study the capability of various models to capture graph ``structures''. Edge detection is a binary classification task to predict whether there is an edge connecting two given nodes, while the CSL test is a ten-way classification task to distinguish non-isomorphic circular skip links (CSL) graphs, which requires awareness of distance. We adopt the same experimental settings and baseline choices as \citep{AttendingGT}. 
Edge detection is an easier task, while CSL requires expressivity more powerful than $1$-WL. 
Indeed, without position/structure encodings (PE/SE), we see that GIN, Transformer, and $\mathcal A_2$ fail in CSL task. Order-$2$ transformer with \textit{local neighbor attention} $\mathcal A_{2}^{\mathsf {LN}}$ achieves perfect results in both tasks without any PE/SE, indicating that it sometimes can distinguish non-isomorphism graphs that $1$-WL fails. This is consistent with our theory in Theorem \ref{Theorem_LocalNA>=deltakLWL}, as it is at least as expressive as $\delta$-2-LWL, which might differentiate graphs not distinguished by 1-WL (although it may also fail to distinguish graphs that can be differentiated by 1-WL). In general, these high-order models also benefit from PE and SE. 


\begin{table}[ht]
\caption{Results on ZINC~\citep{Zinc}. Shown is the mean $\pm$ std of 5 runs with different random seeds. Highlighted are the \textcolor{orange}{first}, \textcolor{teal}{second} and \textcolor{violet}{third} results. Experimental settings and baseline results are adopted from \citep{GPS}.}
\label{Table_zinc}
\begin{center}
\begin{small}
\begin{tabular}{lc}
\toprule
Model & Test MAE $\downarrow$ \\
\midrule
GCN~\citep{GCN} & $0.367\pm 0.011$ \\
GAT~\citep{GAT} & $0.384\pm 0.007$ \\
GatedGCN~\citep{ResidualGatedGCN} & $0.282\pm 0.015$\\
PNA~\citep{PNA} & $0.188\pm 0.004$\\
\midrule
CIN~\citep{CWNetworks} & $0.079\pm 0.006$\\
GIN-AK+~\citep{GNNAK} & $0.080\pm0.001$ \\
\midrule 
SAN~\citep{RethinkingGTLap} & $0.139\pm0.006$\\
Graphormer~\citep{Graphormer} & $0.122\pm 0.006$\\
EGT~\citep{EGT} & $0.108\pm 0.009$ \\
GPS~\citep{GPS} & \textcolor{teal}{$0.070\pm 0.004$} \\
\midrule
$\mathcal A_{2}\text{-}$Performer (ours) & $0.155\pm 0.008$ \\
$\mathcal A_{2}^{\mathsf {VT}}$ (ours) & $0.186\pm 0.009$ \\
$\mathcal A_{2}^{\mathsf {Ngbh}}$ (ours) & $0.081\pm 0.005$\\
$\mathcal A_{2}^{\mathsf {Ngbh+}}$ (ours) & $0.075\pm 0.005$\\
$\mathcal A_{2}^{\mathsf {LN}}$ (ours) & $0.086\pm 0.006$\\
$\mathcal A_{2}^{\mathsf {LN+VT}}$ (ours) & \textcolor{orange}{$0.069\pm 0.005$}\\
\midrule
$\mathcal {AS}_{0:1}^{\mathsf {SN}}$ (ours) & $0.080\pm 0.004$\\
$\mathcal {AS}_{0:1}^{\mathsf {SN+VS}}$ (ours) & \textcolor{violet}{$0.073\pm 0.004$}\\

\bottomrule
\end{tabular}
\end{small}
\end{center}
\end{table}

\paragraph{Results on Real-World Datasets.} Zinc-12k~\citep{Zinc} is a popular real-world dataset containing 12k molecules. The task is a graph-level molecular property (constrained solubility) regression. We adopt the experimental settings and SOTA baseline results from \citep{GPS}. The results in \cref{Table_zinc} reveal that: (1) $\mathcal A_2$-Performer and $\mathcal A_2^{\mathsf VT}$ achieve similar results, verifying \cref{Proposition_virtual_tuple=Linear_A_kk11}; however, there are no theoretical guarantees in their expressive power, which aligns with the fact that their results are not highly competitive. (2) Both $\mathcal A_{2}$ and simplicial transformer $\mathcal {AS}_{0:1}$ achieve satisfactory results ($<0.09$ test MAE) when using only (local) neighbors or simplex neighbors, revealing the importance of local structure awareness. (3) Although virtual tuple/simplex attention does not provably enhance expressivity, these empirical mechanisms can improve the practical performance of other sparse attentions: $\mathcal A_2^{\mathsf {LN+VT}}$ and $\mathcal {AS}_{0:1}^{\mathsf {SN+VS}}$ achieve $0.069$ and $0.073$ test MAE respectively (better or comparable with SOTA GPS results~\citep{GPS}, indicating the empirical strength of global information. 

In addition, we also consider positional encoding for our high-order transformers. The approaches to add positional encoding to our high-order transformers are detailed in \cref{SubsecImplementationAppendix}. We verify the effectiveness of positional encoding to our models on Zinc and Alchemy~\citep{alchemy}, both of which are graph-level regression datasets to predict molecule properties. In both datasets, we consider positional encodings (PE), including SignNet/BasisNet~\cite{SignBasisNet}, SPE~\cite{StabilityPE} and MAP~\cite{LaplacianCanonization}. Baseline results are adopted from these papers correspondingly. 

\begin{table}[t]
\caption{More results on ZINC with positional encodings (PE). Shown is the mean $\pm$ std of 4 runs with different random seeds.}
\label{Table_zinc_rebuttal}
\begin{center}
\begin{tabular}{lcc}
\toprule
Model & PE & Test MAE $\downarrow$ \\
\midrule
GatedGCN & SignNet(8) & $0.121\pm 0.005$\\
GatedGCN & SignNet(All) & $0.100\pm 0.007$\\
GatedGCN & MAP(8) & $0.120\pm 0.002$ \\
GINE & SignNet(16) & $0.147\pm 0.005$\\
GINE & SignNet(All) & $0.102\pm 0.002$\\
PNA & SignNet(8) & $0.105\pm 0.007$\\
PNA & SignNet(All) & $0.084\pm 0.006$\\
PNA & MAP(8) & $0.101 \pm 0.005$\\
GIN & SPE(8) & $0.074\pm0.001$\\
GIN & SPE(All) & \textcolor{teal}{$0.069\pm 0.004$}\\
\midrule
$\mathcal {AS}_{0:1}^{\mathsf {SN}}$ (ours) & SignNet(8) & $0.079\pm 0.005$\\
$\mathcal {AS}_{0:1}^{\mathsf {SN}}$ (ours) & SignNet(All) & $0.078\pm 0.006$\\
$\mathcal {AS}_{0:1}^{\mathsf {SN+VS}}$ (ours) & SPE(8) & \textcolor{violet}{$0.072\pm 0.005$}\\
$\mathcal {AS}_{0:1}^{\mathsf {SN+VS}}$ (ours) & SPE(All) & \textcolor{orange}{$0.067\pm 0.005$}\\

\bottomrule
\end{tabular}
\end{center}
\end{table}

\begin{table}[t]
\caption{Experiments on Alchemy~\cite{alchemy} with positional encodings (PE). Shown is the mean $\pm$ std of 4 runs with different random seeds.}
\label{Table_alchemy_rebuttal}
\begin{center}
\begin{tabular}{lcc}
\toprule
Model & PE & Test MAE $\downarrow$ \\
\midrule
GIN & None & $0.112\pm0.001$\\
GIN & SignNet(All) & $0.113\pm0.001$\\
GIN & BasisNet(All) & $0.110\pm0.001$\\
GIN & SPE(All) & $0.108\pm 0.001$\\
\midrule
$\mathcal A_{2}^{\mathsf {Ngbh+}}$ (ours) & SPE(All) & \textcolor{violet}{$0.094\pm0.001$}\\
$\mathcal A_{2}^{\mathsf {LN+VT}}$ (ours) & SPE(All) & \textcolor{teal}{$0.090\pm0.001$}\\
$\mathcal {AS}_{0:1}^{\mathsf {SN+VS}}$ (ours) & SPE(All) & \textcolor{orange}{$0.087\pm 0.001$}\\

\bottomrule
\end{tabular}
\end{center}
\end{table}

As shown in \cref{Table_zinc_rebuttal} and \cref{Table_alchemy_rebuttal}, on both Zinc and Alchemy datasets, our high-order transformers can benefit from positional encoding, and achieve much more competitive performance compared with simple message-passing based GNNs.

In summary, our models are the first second-order attention-only methods that achieve results comparable to SOTA methods on Zinc (and also other real-world datasets in \cref{SectionExperimentsAppendix}). From the perspective of model size and time complexity, our models are comparable to graphGPS \cite{GPS} (SOTA graph transformer): our models have similar numbers of parameters as GPS. In terms of running time: local neighbor, virtual tuple attention $\mathcal A_2^{\mathsf {LN+VT}}$ and simplicial transformers have similar running time as GPS ($\sim 20$s/epoch). Our slowest variant $\mathcal A_2^{\mathsf {Ngbh}}$ is only about two times slower than GPS. See \cref{SectionExperimentsAppendix} for full details.




\section{CONCLUDING REMARKS}

In this work, we theoretically analyze the expressive power of higher-order graph transformers and systematically explore the design space for efficient and expressive high-order graph transformers. We propose sparse high-order attention mechanisms that enjoy both low computational complexity and high expressivity. Moreover, the theoretical results and architectural designs can be naturally extended to simplicial transformers. We provide preliminary experimental results to verify the performance of our high-order graph transformers and their scalability.

For future work, we note that most existing theoretical analysis of graph neural networks and their higher-order analysis center around expressiveness w.r.t. $k$-WL hierarchies. However, expressive power is only one interesting factor and may not be able to capture other dimensions of the effectiveness of a graph model. For example, while a graph transformer does not have more expressive power than a standard MPNN, it is believed to be more effective in capturing certain long-range interactions. It will be interesting to explore other ways to measure the power of graph learning models and study the pros and cons of transformer vs. non-transformer models.

\subsubsection*{Acknowledgements}
This work was supported in part by the U.S. Army Research Office
under Army-ECASE award W911NF-07-R-0003-03, the U.S. Department Of Energy, Office of Science, IARPA HAYSTAC Program, CDC-RFA-FT-23-0069, NSF Grants \#2205093, \#2146343,\#2134274,  \#2112665 and \#2310411.

\balance
\bibliography{ref}

\begin{thebibliography}{}

\bibitem[Alberti et~al., 2023]{Sumformer}
Alberti, S., Dern, N., Thesing, L., and Kutyniok, G. (2023).
\newblock Sumformer: Universal approximation for efficient transformers.
\newblock {\em ArXiv}, abs/2307.02301.

\bibitem[Azizian and Lelarge, 2020]{ExpressiveInvariantEqui}
Azizian, W. and Lelarge, M. (2020).
\newblock Expressive power of invariant and equivariant graph neural networks.
\newblock In {\em International Conference on Learning Representations}.

\bibitem[Beaini et~al., 2020]{DGN}
Beaini, D., Passaro, S., L'etourneau, V., Hamilton, W.~L., Corso, G., and
  Lio’, P. (2020).
\newblock Directional graph networks.
\newblock In {\em International Conference on Machine Learning}.

\bibitem[Beltagy et~al., 2020]{Longformer}
Beltagy, I., Peters, M.~E., and Cohan, A. (2020).
\newblock Longformer: The long-document transformer.
\newblock {\em ArXiv}, abs/2004.05150.

\bibitem[Bo et~al., 2023]{Specformer}
Bo, D., Shi, C., Wang, L., and Liao, R. (2023).
\newblock Specformer: Spectral graph neural networks meet transformers.
\newblock {\em ArXiv}, abs/2303.01028.

\bibitem[Bodnar et~al., 2021a]{CWNetworks}
Bodnar, C., Frasca, F., Otter, N., Wang, Y.~G., Lio’, P., Mont{\'u}far, G.,
  and Bronstein, M.~M. (2021a).
\newblock Weisfeiler and lehman go cellular: Cw networks.
\newblock In {\em Neural Information Processing Systems}.

\bibitem[Bodnar et~al., 2021b]{MPSimplicialN}
Bodnar, C., Frasca, F., Wang, Y.~G., Otter, N., Mont{\'u}far, G., Lio’, P.,
  and Bronstein, M.~M. (2021b).
\newblock Weisfeiler and lehman go topological: Message passing simplicial
  networks.
\newblock {\em ArXiv}, abs/2103.03212.

\bibitem[Bouritsas et~al., 2023]{GSN}
Bouritsas, G., Frasca, F., Zafeiriou, S., and Bronstein, M.~M. (2023).
\newblock Improving graph neural network expressivity via subgraph isomorphism
  counting.
\newblock {\em IEEE Transactions on Pattern Analysis and Machine Intelligence},
  45(1):657--668.

\bibitem[Bresson and Laurent, 2017]{ResidualGatedGCN}
Bresson, X. and Laurent, T. (2017).
\newblock Residual gated graph convnets.
\newblock {\em ArXiv}, abs/1711.07553.

\bibitem[Briand, 2004]{PMP}
Briand, E. (2004).
\newblock When is the algebra of multisymmetric polynomials generated by the
  elementary multisymmetric polynomials.

\bibitem[Cai et~al., 2023]{ConnectionMPNNGT}
Cai, C., Hy, T.~S., Yu, R., and Wang, Y. (2023).
\newblock On the connection between mpnn and graph transformer.
\newblock {\em ArXiv}, abs/2301.11956.

\bibitem[Cai et~al., 1989]{CFIgraph}
Cai, J.-Y., F{\"u}rer, M., and Immerman, N. (1989).
\newblock An optimal lower bound on the number of variables for graph
  identification.
\newblock {\em Combinatorica}, 12:389--410.

\bibitem[Chen et~al., 2019]{alchemy}
Chen, G., Chen, P., Hsieh, C.-Y., Lee, C.-K., Liao, B., Liao, R., Liu, W., Qiu,
  J., Sun, Q., Tang, J., et~al. (2019).
\newblock Alchemy: A quantum chemistry dataset for benchmarking ai models.
\newblock {\em arXiv preprint arXiv:1906.09427}.

\bibitem[Chen et~al., 2020]{CountSubstructures}
Chen, Z., Chen, L., Villar, S., and Bruna, J. (2020).
\newblock Can graph neural networks count substructures?
\newblock {\em CoRR}, abs/2002.04025.

\bibitem[Choromanski et~al., 2020]{Performer}
Choromanski, K., Likhosherstov, V., Dohan, D., Song, X., Gane, A., Sarl{\'o}s,
  T., Hawkins, P., Davis, J., Mohiuddin, A., Kaiser, L., Belanger, D., Colwell,
  L.~J., and Weller, A. (2020).
\newblock Rethinking attention with performers.
\newblock {\em ArXiv}, abs/2009.14794.

\bibitem[Correia et~al., 2019]{AdaptivelySparseT}
Correia, G.~M., Niculae, V., and Martins, A. F.~T. (2019).
\newblock Adaptively sparse transformers.
\newblock In {\em Conference on Empirical Methods in Natural Language
  Processing}.

\bibitem[Corso et~al., 2020]{PNA}
Corso, G., Cavalleri, L., Beaini, D., Lio’, P., and Velickovic, P. (2020).
\newblock Principal neighbourhood aggregation for graph nets.
\newblock {\em ArXiv}, abs/2004.05718.

\bibitem[Cybenko, 1989]{Approximationsuperposition}
Cybenko, G.~V. (1989).
\newblock Approximation by superpositions of a sigmoidal function.
\newblock {\em Mathematics of Control, Signals and Systems}, 2:303--314.

\bibitem[Dey and Wang, 2022]{DW22}
Dey, T.~K. and Wang, Y. (2022).
\newblock {\em Computational Topology for Data Analysis}.
\newblock Cambridge University Press.
\newblock 452 pages.

\bibitem[Dosovitskiy et~al., 2020]{Vit}
Dosovitskiy, A., Beyer, L., Kolesnikov, A., Weissenborn, D., Zhai, X.,
  Unterthiner, T., Dehghani, M., Minderer, M., Heigold, G., Gelly, S.,
  Uszkoreit, J., and Houlsby, N. (2020).
\newblock An image is worth 16x16 words: Transformers for image recognition at
  scale.
\newblock {\em ArXiv}, abs/2010.11929.

\bibitem[Dwivedi et~al., 2020]{Zinc}
Dwivedi, V.~P., Joshi, C.~K., Laurent, T., Bengio, Y., and Bresson, X. (2020).
\newblock Benchmarking graph neural networks.
\newblock {\em ArXiv}, abs/2003.00982.

\bibitem[Dwivedi et~al., 2021]{RWSELearnable}
Dwivedi, V.~P., Luu, A.~T., Laurent, T., Bengio, Y., and Bresson, X. (2021).
\newblock Graph neural networks with learnable structural and positional
  representations.
\newblock {\em ArXiv}, abs/2110.07875.

\bibitem[Dwivedi et~al., 2022]{LRGB}
Dwivedi, V.~P., Ramp{\'a}ek, L., Galkin, M., Parviz, A., Wolf, G., Luu, A.~T.,
  and Beaini, D. (2022).
\newblock Long range graph benchmark.
\newblock {\em ArXiv}, abs/2206.08164.

\bibitem[Ebli et~al., 2020]{SimplicialNeuralNetworks}
Ebli, S., Defferrard, M., and Spreemann, G. (2020).
\newblock Simplicial neural networks.
\newblock {\em ArXiv}, abs/2010.03633.

\bibitem[Geerts, 2020a]{ExpressivepowerkIGN}
Geerts, F. (2020a).
\newblock The expressive power of kth-order invariant graph networks.
\newblock {\em ArXiv}, abs/2007.12035.

\bibitem[Geerts, 2020b]{WalkMPNNSecondOrder}
Geerts, F. (2020b).
\newblock Walk message passing neural networks and second-order graph neural
  networks.
\newblock {\em ArXiv}, abs/2006.09499.

\bibitem[Ghani, 2001]{WebKB}
Ghani, R. (2001).
\newblock Cmu. world wide knowledge base (web-kb) project.

\bibitem[Giusti et~al., 2022]{SimplicialAttentionNeuralNet}
Giusti, L., Battiloro, C., Lorenzo, P.~D., Sardellitti, S., and Barbarossa, S.
  (2022).
\newblock Simplicial attention neural networks.
\newblock {\em ArXiv}, abs/2203.07485.

\bibitem[Goh et~al., 2022]{SimplicialAttentionNet}
Goh, C. W.~J., Bodnar, C., and Lio', P. (2022).
\newblock Simplicial attention networks.
\newblock {\em ArXiv}, abs/2204.09455.

\bibitem[Hornik, 1991]{ApproximationMLP}
Hornik, K. (1991).
\newblock Approximation capabilities of multilayer feedforward networks.
\newblock {\em Neural Networks}, 4:251--257.

\bibitem[Hu et~al., 2020]{OGB}
Hu, W., Fey, M., Zitnik, M., Dong, Y., Ren, H., Liu, B., Catasta, M., and
  Leskovec, J. (2020).
\newblock Open graph benchmark: Datasets for machine learning on graphs.
\newblock {\em ArXiv}, abs/2005.00687.

\bibitem[Huang et~al., 2023]{StabilityPE}
Huang, Y., Lu, W., Robinson, J., Yang, Y., Zhang, M., Jegelka, S., and Li, P.
  (2023).
\newblock On the stability of expressive positional encodings for graph neural
  networks.
\newblock {\em arXiv preprint arXiv:2310.02579}.

\bibitem[Hussain et~al., 2021]{EGT}
Hussain, M.~S., Zaki, M.~J., and Subramanian, D. (2021).
\newblock Global self-attention as a replacement for graph convolution.
\newblock {\em Proceedings of the 28th ACM SIGKDD Conference on Knowledge
  Discovery and Data Mining}.

\bibitem[Katharopoulos et~al., 2020]{LinearTransformerRNN}
Katharopoulos, A., Vyas, A., Pappas, N., and Fleuret, F. (2020).
\newblock Transformers are rnns: Fast autoregressive transformers with linear
  attention.
\newblock In {\em International Conference on Machine Learning}.

\bibitem[Kim et~al., 2022]{PureTransformerspowerful}
Kim, J., Nguyen, T.~D., Min, S., Cho, S., Lee, M., Lee, H., and Hong, S.
  (2022).
\newblock Pure transformers are powerful graph learners.
\newblock {\em ArXiv}, abs/2207.02505.

\bibitem[Kim et~al., 2021]{TransformersDeepsetsGraphs}
Kim, J., Oh, S., and Hong, S. (2021).
\newblock Transformers generalize deepsets and can be extended to graphs and
  hypergraphs.
\newblock {\em ArXiv}, abs/2110.14416.

\bibitem[Kipf and Welling, 2016]{GCN}
Kipf, T. and Welling, M. (2016).
\newblock Semi-supervised classification with graph convolutional networks.
\newblock {\em ArXiv}, abs/1609.02907.

\bibitem[Kitaev et~al., 2020]{Reformer}
Kitaev, N., Kaiser, L., and Levskaya, A. (2020).
\newblock Reformer: The efficient transformer.
\newblock {\em ArXiv}, abs/2001.04451.

\bibitem[Knill, 2023]{SpectralMonotonicity}
Knill, O. (2023).
\newblock Spectral monotonicity of the hodge laplacian.
\newblock {\em ArXiv}, abs/2304.00901.

\bibitem[Kreuzer et~al., 2021]{RethinkingGTLap}
Kreuzer, D., Beaini, D., Hamilton, W.~L., L'etourneau, V., and Tossou, P.
  (2021).
\newblock Rethinking graph transformers with spectral attention.
\newblock {\em ArXiv}, abs/2106.03893.

\bibitem[Lim et~al., 2022]{SignBasisNet}
Lim, D., Robinson, J., Zhao, L., Smidt, T., Sra, S., Maron, H., and Jegelka, S.
  (2022).
\newblock Sign and basis invariant networks for spectral graph representation
  learning.
\newblock {\em arXiv preprint arXiv:2202.13013}.

\bibitem[Ma et~al., 2023a]{LaplacianCanonization}
Ma, J., Wang, Y., and Wang, Y. (2023a).
\newblock Laplacian canonization: A minimalist approach to sign and basis
  invariant spectral embedding.
\newblock {\em arXiv preprint arXiv:2310.18716}.

\bibitem[Ma et~al., 2023b]{GraphIBwithoutMP}
Ma, L., Lin, C., Lim, D., Romero-Soriano, A., Dokania, P.~K., Coates, M., Torr,
  P. H.~S., and Lim, S.~N. (2023b).
\newblock Graph inductive biases in transformers without message passing.
\newblock {\em ArXiv}, abs/2305.17589.

\bibitem[Maron et~al., 2019]{PPGN}
Maron, H., Ben-Hamu, H., Serviansky, H., and Lipman, Y. (2019).
\newblock Provably powerful graph networks.
\newblock {\em ArXiv}, abs/1905.11136.

\bibitem[Maron et~al., 2018]{IGN}
Maron, H., Ben-Hamu, H., Shamir, N., and Lipman, Y. (2018).
\newblock Invariant and equivariant graph networks.
\newblock {\em ArXiv}, abs/1812.09902.

\bibitem[Morris et~al., 2020]{WLgoSparse}
Morris, C., Rattan, G., and Mutzel, P. (2020).
\newblock Weisfeiler and leman go sparse: Towards scalable higher-order graph
  embeddings.
\newblock In {\em Advances in Neural Information Processing Systems}.

\bibitem[Muller et~al., 2023]{AttendingGT}
Muller, L., Galkin, M., Morris, C., and Ramp{\'a}vsek, L. (2023).
\newblock Attending to graph transformers.
\newblock {\em ArXiv}, abs/2302.04181.

\bibitem[Rampasek et~al., 2022]{GPS}
Rampasek, L., Galkin, M., Dwivedi, V.~P., Luu, A.~T., Wolf, G., and Beaini, D.
  (2022).
\newblock Recipe for a general, powerful, scalable graph transformer.
\newblock {\em ArXiv}, abs/2205.12454.

\bibitem[Roddenberry and Segarra, 2019]{HodgeNetGNedgedata}
Roddenberry, T.~M. and Segarra, S. (2019).
\newblock Hodgenet: Graph neural networks for edge data.
\newblock {\em 2019 53rd Asilomar Conference on Signals, Systems, and
  Computers}, pages 220--224.

\bibitem[Roy et~al., 2020]{RoutingTransformer}
Roy, A., Saffar, M.~T., Vaswani, A., and Grangier, D. (2020).
\newblock Efficient content-based sparse attention with routing transformers.
\newblock {\em Transactions of the Association for Computational Linguistics},
  9:53--68.

\bibitem[Sanford et~al., 2023]{RepresentationalStrengthsTransformer}
Sanford, C., Hsu, D., and Telgarsky, M. (2023).
\newblock Representational strengths and limitations of transformers.
\newblock {\em ArXiv}, abs/2306.02896.

\bibitem[Segol and Lipman, 2019]{UniversalEquiSet}
Segol, N. and Lipman, Y. (2019).
\newblock On universal equivariant set networks.
\newblock {\em ArXiv}, abs/1910.02421.

\bibitem[Shirzad et~al., 2023]{Exphormer}
Shirzad, H., Velingker, A., Venkatachalam, B., Sutherland, D.~J., and Sinop,
  A.~K. (2023).
\newblock Exphormer: Sparse transformers for graphs.
\newblock {\em ArXiv}, abs/2303.06147.

\bibitem[Spielman, 2009]{SpectralGraphTheory}
Spielman, D. (2009).
\newblock Spectral graph theory.
\newblock In {\em lecture notes}.

\bibitem[Vaswani et~al., 2017]{AttentionIsAll}
Vaswani, A., Shazeer, N.~M., Parmar, N., Uszkoreit, J., Jones, L., Gomez,
  A.~N., Kaiser, L., and Polosukhin, I. (2017).
\newblock Attention is all you need.
\newblock In {\em NIPS}.

\bibitem[Velickovic et~al., 2017]{GAT}
Velickovic, P., Cucurull, G., Casanova, A., Romero, A., Lio’, P., and Bengio,
  Y. (2017).
\newblock Graph attention networks.
\newblock {\em ArXiv}, abs/1710.10903.

\bibitem[Wang et~al., 2020]{Linformer}
Wang, S., Li, B.~Z., Khabsa, M., Fang, H., and Ma, H. (2020).
\newblock Linformer: Self-attention with linear complexity.
\newblock {\em ArXiv}, abs/2006.04768.

\bibitem[Xu et~al., 2018]{HowPowerfulGNNGIN}
Xu, K., Hu, W., Leskovec, J., and Jegelka, S. (2018).
\newblock How powerful are graph neural networks?
\newblock {\em ArXiv}, abs/1810.00826.

\bibitem[Yang and Isufi, 2023]{ConvolutionalLearningOnSimplicial}
Yang, M. and Isufi, E. (2023).
\newblock Convolutional learning on simplicial complexes.
\newblock {\em ArXiv}, abs/2301.11163.

\bibitem[Ying et~al., 2021]{Graphormer}
Ying, C., Cai, T., Luo, S., Zheng, S., Ke, G., He, D., Shen, Y., and Liu, T.-Y.
  (2021).
\newblock Do transformers really perform bad for graph representation?
\newblock In {\em Neural Information Processing Systems}.

\bibitem[Zaheer et~al., 2020]{BigBird}
Zaheer, M., Guruganesh, G., Dubey, K.~A., Ainslie, J., Alberti, C.,
  Onta{\~n}{\'o}n, S., Pham, P., Ravula, A., Wang, Q., Yang, L., and Ahmed, A.
  (2020).
\newblock Big bird: Transformers for longer sequences.
\newblock {\em ArXiv}, abs/2007.14062.

\bibitem[Zaheer et~al., 2018]{DeepSet}
Zaheer, M., Kottur, S., Ravanbakhsh, S., Poczos, B., Salakhutdinov, R., and
  Smola, A. (2018).
\newblock Deep sets.

\bibitem[Zhao et~al., 2021]{GNNAK}
Zhao, L., Jin, W., Akoglu, L., and Shah, N. (2021).
\newblock From stars to subgraphs: Uplifting any gnn with local structure
  awareness.
\newblock {\em ArXiv}, abs/2110.03753.

\bibitem[Zhou et~al., 2023a]{HodgeRandomWalk}
Zhou, C., Wang, X., and Zhang, M. (2023a).
\newblock Facilitating graph neural networks with random walk on simplicial
  complexes.
\newblock In {\em Thirty-seventh Conference on Neural Information Processing
  Systems}.

\bibitem[Zhou et~al., 2023b]{RelationalklWL}
Zhou, C., Wang, X., and Zhang, M. (2023b).
\newblock From relational pooling to subgraph gnns: A universal framework for
  more expressive graph neural networks.
\newblock In {\em International Conference on Machine Learning}.

\end{thebibliography}

\section*{Checklist}



 \begin{enumerate}

 \item For all models and algorithms presented, check if you include:
 \begin{enumerate}
   \item A clear description of the mathematical setting, assumptions, algorithm, and/or model. [\textbf{Yes}/No/Not Applicable]
   \item An analysis of the properties and complexity (time, space, sample size) of any algorithm. [\textbf{Yes}/No/Not Applicable]
   \item (Optional) Anonymized source code, with specification of all dependencies, including external libraries. [Yes/No/Not Applicable]
 \end{enumerate}

 \item For any theoretical claim, check if you include:
 \begin{enumerate}
   \item Statements of the full set of assumptions of all theoretical results. [\textbf{Yes}/No/Not Applicable]
   \item Complete proofs of all theoretical results. [\textbf{Yes}/No/Not Applicable]
   \item Clear explanations of any assumptions. [\textbf{Yes}/No/Not Applicable]     
 \end{enumerate}

 \item For all figures and tables that present empirical results, check if you include:
 \begin{enumerate}
   \item The code, data, and instructions needed to reproduce the main experimental results (either in the supplemental material or as a URL). [\textbf{Yes}/No/Not Applicable]
   \item All the training details (e.g., data splits, hyperparameters, how they were chosen). [\textbf{Yes}/No/Not Applicable]
         \item A clear definition of the specific measure or statistics and error bars (e.g., with respect to the random seed after running experiments multiple times). [\textbf{Yes}/No/Not Applicable]
         \item A description of the computing infrastructure used. (e.g., type of GPUs, internal cluster, or cloud provider). [\textbf{Yes}/No/Not Applicable]
 \end{enumerate}

 \item If you are using existing assets (e.g., code, data, models) or curating/releasing new assets, check if you include:
 \begin{enumerate}
   \item Citations of the creator If your work uses existing assets. [\textbf{Yes}/No/Not Applicable]
   \item The license information of the assets, if applicable. [\textbf{Yes}/No/Not Applicable]
   \item New assets either in the supplemental material or as a URL, if applicable. [\textbf{Yes}/No/Not Applicable]
   \item Information about consent from data providers/curators. [\textbf{Yes}/No/Not Applicable]
   \item Discussion of sensible content if applicable, e.g., personally identifiable information or offensive content. [Yes/No/\textbf{Not Applicable}]
 \end{enumerate}

 \item If you used crowdsourcing or conducted research with human subjects, check if you include:
 \begin{enumerate}
   \item The full text of instructions given to participants and screenshots. [Yes/No/\textbf{Not Applicable}]
   \item Descriptions of potential participant risks, with links to Institutional Review Board (IRB) approvals if applicable. [Yes/No/\textbf{Not Applicable}]
   \item The estimated hourly wage paid to participants and the total amount spent on participant compensation. [Yes/No/\textbf{Not Applicable}]
 \end{enumerate}

 \end{enumerate}

\newpage
\appendix
\onecolumn

\section{RELATED WORK}\label{Section_relatedwork}

\paragraph{Sparse Transformers.}

Transformers have achieved great success in natural language processing and, more recently, computer vision tasks. Since Transformers were proposed, researchers have been making an effort to reduce the computational complexity through linear or sparse attention mechanisms. It is noticeable that most relevant works focus on transformers for NLP and are not specially designed for graph learning. These methods can be broadly classified into two categories: (1) computing internal sparse attention through architecture design, which reduces the complexity; (2) computing full attention while encouraging the models to learn sparse patterns. The former categories try to capture global information at a lower computation cost. In comparison, the latter categories do not reduce complexity; instead, they expect the learned patterns are sparse so that the extracted representations contain important features of the data, which also eases interpretability.  

The first class of internal sparse transformers have low complexity via designing their algorithms and mechanisms to compute attention scores. In the following complexity analysis, the length of sequence is denoted as $n$, while we ignore the hidden dimension $d$. Linformer \citep{Linformer} proposes to project $n$ keys and values into $k$ groups, so that the complexity is reduced from $O(n^2)$ to $O(n\times k)$. In practice, however, $\frac{n}{k}$ is usually set to a constant factor like $4$, so Linformer actually reduces the complexity by a constant scale, while revealing worse performance than full attention. Routing Transformer \citep{RoutingTransformer} only computes attention between queries and a small subset of their nearest keys found by $k$-means, reducing the complexity to $O(n^{1.5})$. Similarly, Reformer \citep{Reformer} also computes local attention, except that the nearest keys are sorted by locality sensitive hashing (LSH), which results in a complexity of $O(n\log n)$. Longformer~\citep{Longformer} introduces a localized sliding window based mask and a few global mask to reduce computation. Moreover, BigBird \citep{BigBird} combines three types of sparse attention mechanisms: random attention, sliding window attention, and attention with global tokens. Further, Performer \citep{Performer} uses the Fast Attention through Positive Orthogonal Random features (FAVOR+) mechanism to approximate Softmax kernels in attention, which reduces the computation to linear complexity. Linear transformer \citep{LinearTransformerRNN} also applies linearized attention with kernel tricks, which approximates another nonlinear activation kernel.

Now we turn to the second category of transformers, which compute full attention but encourage the models to discover sparse patterns. Since reducing complexity is our main purpose, we will not pay much attention on this type. As a representative model, \citet{AdaptivelySparseT} enables attention heads to have flexible, context-dependent sparse patterns, which is achieved by replacing softmax with $\alpha$-entmax: a differentiable generalization of softmax that allows low-scoring items to be assigned precisely zero weight. 

\paragraph{Graph Transformers.}

Graph transformers have recently achieved great attention. Since global self-attention across nodes is unable to reflect graph structures, there are a number of works exploring graph-specific architectural designs. For example, GAT \citep{GAT} restricts attention within local neighboring nodes, and Graphormer \citep{Graphormer} injects edge information into the attention mechanism via attention bias. More recently, some graph transformers \citep{GPS, Specformer, GraphIBwithoutMP, Exphormer} have achieved even greater success with State-of-the-Art performance in a variety of tasks. GPS \citep{GPS} combines the attention mechanism with message passing and positional/structure encodings, while Specformer \citep{Specformer} embeds spectral features into graph transformers. GRIT \citep{GraphIBwithoutMP} builds a transformer architecture without message passing, which consists of learned relative positional encodings initialized with random walk probabilities and a flexible attention mechanism that updates node and node-pair representations. However, while there are a great number of high-order graph networks like $k$-IGN~\citep{IGN}, high-order graph transformers have been rarely studied besides primary results in \citet{TransformersDeepsetsGraphs, PureTransformerspowerful}. People also have limited understanding towards the theoretical expressive power of graph transformers, especially for high-order cases.


\paragraph{Simplicial Networks.}\label{SubsecSimplicialNetworks}

Besides applying transformers directly on higher order $k$-tuples, there is another direction worth exploring, namely higher order simplicial complexes. Instead of all $k$-tuples, simplicial networks and simplicial transformers focus on $k$-order (directed) simplices that are usually much more sparse. For example, there are always $n^2$ $2$-tuples in a graph with $n$ vertex, while there are $m$ $2$-simplices in the simplicial complex extended from the vertex of the original (directed) graph, where $m\leq n^2$ is the number of edges in the graph. The internal sparsity nature of simplices indicates the advantages of attention computed within higher-order simplices over higher-order tuples.

Early simplicial networks are mainly based on message-passing or convolution mechanisms. \citet{HodgeNetGNedgedata} is the first to generalize GNN to $1$-simplicial (edge) data. \citet{MPSimplicialN, ConvolutionalLearningOnSimplicial} formulate the message-passing and convolutional networks on simplices and simplicial complexes. \citet{CWNetworks} further generalize the message-passing and convolutional networks to cellular complexes. In comparison, attention-based methods on simplicial complexes are less complete. \citet{SimplicialAttentionNet, SimplicialAttentionNeuralNet} still restrict their attention within the scope of upper and lower adjacent simplices. There is no current work that computes full attention across all $k$-simplices (and more generally simplices of different orders), nor has the expressive power of these simplicial attention networks been analyzed.

\paragraph{High Order Transformers.} \cite{PureTransformerspowerful} proposed a possible form of high-order transformers, which computes attention for input order-$k$ tensors $\mX\in\mathbb R^{n^k}$ as follows,
\begin{align}\label{kk-attention}
    {\rm Attn}(\mX)_{\vj}=\sum_{h=1}^H \sum_{\vi}\mathbf {\alpha}_{\vi,\vj}^h \mX_{\vi}V^hO^h\\
    \mathbf \alpha^h={\rm softmax}\bigg(\frac{\mX Q^h(\mX K^h)^\top}{\sqrt{d_k}}\bigg)
\end{align}
where the attention $\alpha^h\in \mathbb R^{n^k\times n^k}$, and $Q^h,K^h\in \mathbb R^{d\times d_k}, V^h\in \mathbb R^{d\times d_v},O^h\in \mathbb R^{d_v\times d}$ are learnable weights.

\citet{PureTransformerspowerful} proves that with augmented node and type identifiers as input, the above attention can approximate any equivalence class basis tensor $\mathbf B^\mu \in R^{n^{2k}}$ of linear equivariant layer $L_{k\rightarrow k}$~\citep{IGN} arbitrary well. Consequently, \citet{PureTransformerspowerful} concludes that a $k$-Transformer with node and type identifiers is at least as expressive as $k$-IGN, and hence $k$-WL. In our work, we incorporate this type of architecture into a larger family of transformers. We also show that a slightly modified transformer can simulate $k$-WL in a different way from approximating the equivalence class basis as described in \citet{PureTransformerspowerful}.


In the previously described transformer, the input is a high-order tensor $\mX\in \mathbb R^{n^k}$, while the parameters $Q^h,K^h, V^h, O^h$ are actually the same as the standard transformer. \citet{RepresentationalStrengthsTransformer} propose another form of ``high-order'' transformer, which has the same $\mX\in \mathbb R^n$ input tokens as in the standard transformer, but is parameterized with high-order weight.

Recall the notations for the \textit{column-wise Kronecker product}. For vectors $\vv^1 \in \mathbb R^{n_1}, \vv^2\in \mathbb R^{n_2}$, their \textit{Kronecker product} $\vv^1 \otimes \vv^2 \in \mathbb R^{n_1n_2}$ is defined as $(\vv^1 \otimes \vv^2)_{i_1-1}n_2+i_2=\vv_{i_1}^1\vv_{i_2}^2$. The \textit{column-wise Kronecker product} of matrices $\mA^1\in \mathbb R^{n_1\times m}$ and $\mA^2\in\mathbb R^{n_2\times m}$ is defined as
\begin{equation}
    \mA^1 \star \mA^2=\Big[ \mA_1^1|\dots|\mA_m^1\Big] \star \Big[ \mA_1^2|\dots|\mA_m^2\Big] = \Big[\mA_1^1\otimes \mA_1^2|\dots| \mA_m^1\otimes \mA_m^2\Big] \in\mathbb R^{n_1n_2\times m}
\end{equation}

The $s$-order self-attention in \cite{RepresentationalStrengthsTransformer} is then defined as follows (Definition 7 in \cite{RepresentationalStrengthsTransformer}). For order $s\geq 2$, input dimension $d$, output dimension $d'$, and weight matrices $Q, K^1,\dots, K_{s-1}\in \mathbb R^{d\times d_k}$, $V^1,\dots,V^{s-1}\in \mathbb R^{d\times d'}$, an \textit{$s$-order self-attention unit} is a function $f_{Q,K,V}:\mathbb R^{n\times d}\rightarrow \mathbb R^{n\times d'}$ defined as
\begin{equation}\label{1s-1attention}
    f_{Q,K,V}(\mX)={\rm softmax}\Big(\underbrace{\mX Q}_{\in \mathbb R^{n\times d_k}} \underbrace{\big((\mX K^1)\star \dots \star (\mX K^{s-1})\big)^\top \Big)}_{\in \mathbb R^{d_k\times n^{s-1}}} \underbrace{\big((\mX V^1)\star \dots\star (\mX V^{s-1})\big)}_{\in \mathbb R^{n^{s-1}\times d'}}
\end{equation}
\citet{RepresentationalStrengthsTransformer} gives some primary analysis on the representation strength of the above high-order transformer over standard transformer (referred as $2$-order in their paper, yet $1$-order in our framework). Since the transformers in \citep{RepresentationalStrengthsTransformer} are different from our formulation, we leave relevant discussions in \cref{SectionAppendixDiscussion}.

\section{PROOF AND ADDITIONAL RESULTS}\label{SectionProofAppendix}

\subsection{Proof and Additional Results for Section 3}\label{SubsecProofSec3}

\subsubsection{Notations and Preliminaries}\label{SubsubsecNotation}

To start with, we first recall the definitions of $k$-neighbors in $k$-WL and $k$-FWL. For $k$-WL, recall 

\begin{equation}\label{equation_k-neighbor}
    \mathcal N_j(\vi)=\bigg\{(i_1,\dots, i_{j-1},i',i_{j+1},\dots,i_k)\Big|i'\in [n]\bigg\} . 
\end{equation}

$\mathcal N_j(\vi), j\in[k]$ is the $j$-th neighbor of tuple $\vi$ in WL algorithm, which is a set of $n$ different $k$-tuples. Each tuple $\vi$ has $k$ such $k$-neighbors, and during the update stage of the $k$-WL algorithm, these $k$ neighbors are aggregated as an {\bf ordered set} of multisets:
\begin{equation}\label{equation_kWL}
    {\rm WL:} \mC_{\vi}^{t+1}={\rm hash}\Bigg(\mC_{\vi}^t, \bigg(\mset{\mC_{\vv}^t\big| \vv \in \mathcal N_j(\vi)}\Big|j\in [k] \bigg)\Bigg)
\end{equation}

Note that this expression is equivalent to \cref{equation_kWL_color} in the main text. There is another form of graph isomorphic algorithm called Folklore Weisfeiler-Lehman (FWL) test, and the difference between $k$-WL and $k$-FWL lies in their tuple color update process. Concretely, the neighbors for $k$-FWL~\citep{PPGN} are defined as

\begin{equation}\label{equation_kFneighbor}
    \mathcal N_j^F(\vi)=\bigg((j,i_2,\dots, i_k), (i_1,j,\dots,i_k), \dots,(i_1,\dots,i_{k-1},j)\bigg)
\end{equation}

$\mathcal N_j^F(\vi),j\in[n]$ is the $j$-th neighborhood of tuple $\vi$ used by FWL, which is an ordered set of $k$ different $k$-tuples. Each tuple has $n$ such $k$-neighbors, while they are aggregated as a {\bf multi-set} of ordered sets in FWL update rule:
\begin{equation}\label{equation_kFWL}
    {\rm FWL:} \mC_{\vi}^{t+1}={\rm hash}\Bigg(\mC_{\vi}^t, \bigg\{\!\bigg\{\Big(\mC_{\vv}^t\big| \vv \in \mathcal N_j^F(\vi) \Big)\Big|j\in [n] \bigg\}\!\bigg\}\Bigg)
\end{equation}

We clarify the following technical details following \citep{PPGN}, which are applied in most of the relevant papers on expressive power. We adopt these techniques and conclusions throughout our proof. The techniques explain how neural networks (including transformer variants) can implement WL-like algorithms. 

\begin{itemize}
    \item (Remark 1) Color representation. We use tensors to represent colors. Concretely, the color of the $k$-tuple $\vi\in [n]^k$ is represented by a vector $\vx\in \mathbb R^d$ for some latent dimension $d$, and the entire color set of all $k$-tuples is $\mX\in\mathbb R^{n^k\times d}$.
    \item (Remark 2) Multiset representation. Unlike tuples, the multiset should be invariant to the order of nodes, i.e., $g\cdot \mX$ should be the same for all permutations $g\in S_n$, where $S_n$ is the symmetric group. As pointed out by \citep{PPGN}, \textit{Power-sum Multi-symmetric Polynomials} (PMP)~\citep{PMP} are $S_n$ invariant, thus can be used to encode multisets. As shown in \citep{PPGN}, PMP generates a unique representation of each multiset, which enables us to represent multisets as tensors.
    \item (Remark 3) Hash function. Suppose that there are two color tensors $\mC \in\mathbb R^{n^k\times a},\mC'\in\mathbb R^{n^k\times b}$, the hash function in \cref{equation_kWL} and \cref{equation_kFWL} can be implemented through a simple concatenation, that is, the new tensor representation for each $k$-tuple $\vi$ of color pair $(\mC_i, \mC_i')$ is simply $(\mC, \mC')\in\mathbb R^{n^k\times(a+b)}$.
    \item (Remark 4) Input and initialization. We adopt the same input and initialization of $k$-tuples as in Appendix C.1 of \citep{PPGN}, which can faithfully represent the isomorphism type of each $k$-tuple. That is, two tuples have identical initialization if and only if they have the same isomorphism types.
    \item (Remark 5) Update step of WL-like algorithm. In every update step of WL-like algorithms, when we use PMP to represent tuple colors, the color representation of a new multiset $\mset{\vj}$ consisting of tuples $\vj$ with known colors $\mB_\vj$ can be implemented by the summation of polynomial transform over $\mB_\vj$, i.e. $\mC(\mset{\vj})=\sum_\vj \tau(\mB_\vj)$, where $\tau$ is a specifically designed polynomial function that is injective, see \citep{PPGN} for more details. In practical networks, the polynomial function can be replaced by an MLP using the universal approximation power of MLPs~\citep{ApproximationMLP}. When the approximation power is sufficiently small, the injective properties can still be preserved to achieve the upper bound of the theoretical expressive power~\citep{PPGN}. Most relevant papers adopt this assumption, and we follow this as well in this paper. However, we emphasize that practical networks may not achieve their full expressive power when they are parameterized with learnable MLPs and embedding layers, other than strictly using PMP and polynomial functions. As for new tuples consisting of known tuples, as discussed above, the new color representation is simply the concatenation of known tuple colors.
    \item (Remark 6) Histogram computation. The final step to determine the graph isomorphism is the histogram of tuple colors $H(\mB)$. Suppose the number of colors is $b$ (which is finite for finite graphs); we apply a tuple-wise MLP $m:\mathbb R^d\rightarrow \mathbb R^b$ mapping each tuple color to a one-hot vector in $\mathbb R^b$. Summing over the one-hot vectors via a summing-invariant operator $h:\mathbb R^{n^k\times b}\rightarrow \mathbb R^b$, we obtain an injective and permutation-invariant representation of the color histogram: $H(\mB)=h(m(\mB))$.
\end{itemize}

In addition, while considering the approximation power of neural networks, we always assume that the inputs and network weights are compact. Specifically, for input $\mX$, we have $\forall \vi\in [n]^k, ||\mX_\vi||< C_1$, and for weight matrices of the transformer, we have $||Q||<C_2, ||K||<C_2, ||V||<C_2$. This is a mild and proper condition on feature space and parameter space which generally holds for practical networks.

\subsubsection{Approximating Power and Theoretical Expressive Power}\label{subsubsec_proof_kWL_kFWL}

Now we prove the results of the expressive power of higher-order transformers.

\begin{theorem}
    {\rm (\cref{Theorem_Akk<kWL} in the main text.)} Without additional input, $\mathcal A_k$ is strictly less expressive than $k$-WL and $k$-IGN.
\end{theorem}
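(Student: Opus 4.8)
The plan is to establish both halves of ``strictly less expressive'': that $\mathcal A_k$ is at most as powerful as $k$-WL (and hence as $k$-IGN, which the excerpt notes is equivalent to $k$-WL), and that this containment is strict via a separating pair. The engine for both is a structural lemma isolating exactly what a stack of $\mathcal A_k$ layers can compute. First I would observe that one self-attention layer updates each token by $\mathcal A_k(\mX,\mX)_{\vi}=\sum_{\vj}\alpha_{\vi\vj}(\mX_{\vj}V)$ with softmax weights $\alpha_{\vi\vj}$ depending only on $\mX_{\vi}$ and $\mX_{\vj}$; thus the output at $\vi$ is a permutation-invariant function of the pair $\bigl(\mX_{\vi},\,\mset{\mX_{\vj}\mid \vj\in[n]^k}\bigr)$, i.e. of its own feature together with the \emph{global multiset} of all tuple features. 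Since residual connections, token-wise MLPs, and multi-head concatenation all preserve this form, I would prove by induction on the layer index $t$ that every tuple's representation satisfies $\mX^{(t)}_{\vi}=G_t\bigl(\mX^{(0)}_{\vi},M_0\bigr)$, where $M_0:=\mset{\mX^{(0)}_{\vj}\mid \vj\in[n]^k}$ is the multiset of \emph{initial} features. The induction closes because under the hypothesis the layer-$t$ global multiset $\mset{\mX^{(t)}_{\vj}}=\mset{G_t(\mX^{(0)}_{\vj},M_0)}$ is itself a function of $M_0$ alone. Crucially, by Remark 4 the initial feature $\mX^{(0)}_{\vi}$ is exactly the isomorphism type of $\vi$, so $M_0$ is precisely the iteration-$0$ color histogram of $k$-WL.

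The upper bound then follows immediately: the graph-level output is a permutation-invariant readout of $\mset{\mX^{(T)}_{\vi}}$, hence a function of $M_0$; but $k$-WL inspects the iteration-$0$ histogram (equal to $M_0$) among all its iterations, so whenever $\mathcal A_k$ separates two graphs their $M_0$'s already differ and $k$-WL separates them as well. This gives $\mathcal A_k\preceq k$-WL, and since $k$-IGN and $k$-WL are equivalent in power, also $\mathcal A_k\preceq k$-IGN.

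For strictness I would exhibit, for each $k\ge 2$, two graphs with identical $M_0$ that $k$-WL nonetheless distinguishes; by the lemma $\mathcal A_k$ returns identical outputs on them. For $k=2$ the pair $P_4$ (path on four vertices) and $K_{1,3}$ (star) works: both have four vertices and three edges, so their multisets of $2$-tuple isomorphism types coincide ($4$ diagonal, $6$ edge, and $6$ non-edge ordered pairs), yet their degree sequences differ, so already $1$-WL $=2$-WL separates them. For general $k\ge 3$ I would take $C_{2m}$ versus $C_m\sqcup C_m$ with $m>k$: these graphs are locally isomorphic out to a radius growing with $m$, so their induced-subgraph counts on at most $k$ vertices agree and $M_0$ matches, while their closed-walk (cycle-homomorphism) counts differ; since cycles have treewidth $2$, such counts are detected by $k$-WL once $k\ge 3$, so $k$-WL separates them. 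Combined with $\mathcal A_k\preceq k$-WL $\simeq k$-IGN, this yields $\mathcal A_k\prec k$-WL and $\mathcal A_k\prec k$-IGN.

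The hard part will be the strictness construction, specifically verifying rigorously that the two families have identical multisets of $k$-tuple isomorphism types while remaining $k$-WL-distinguishable: the local-isomorphism counting that forces the $k$-subgraph profiles of $C_{2m}$ and $C_m\sqcup C_m$ to coincide, and the choice of $m$ ensuring some spectral moment (closed-walk count) actually differs, are the delicate points. By contrast, the structural lemma and the upper bound are routine once the ``function of $(\mX^{(0)}_{\vi},M_0)$'' invariant is identified, since it shows the entire $\mathcal A_k$ trajectory is determined by iteration-$0$ information.
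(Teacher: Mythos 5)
Your upper-bound argument is the same as the paper's: the paper observes that, absent any extra input, tuples with identical initial colors receive identical updates under the homogeneous dense attention, so the color histogram never refines beyond the iteration-$0$ histogram of $k$-WL; your invariant $\mX^{(t)}_{\vi}=G_t(\mX^{(0)}_{\vi},M_0)$ is just a cleaner, inductive packaging of that observation, and it is correct (the softmax numerator and denominator are both functions of $\mX^{(0)}_{\vi}$ and the global multiset, and token-wise MLPs, residuals and heads preserve the form). Where you genuinely diverge is on strictness. The paper simply \emph{asserts} that some pair of non-isomorphic graphs share the iteration-$0$ histogram yet are separated by stable $k$-WL, and only works out a concrete witness for $k\le 2$ (a $4$-cycle versus a triangle with a pendant vertex, in the same spirit as your $P_4$ versus $K_{1,3}$, which is also correct). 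You instead propose an explicit family for every $k\ge 3$, namely $C_{2m}$ versus $C_m\sqcup C_m$ with $m>k$. This buys a fully constructive theorem, but at the price of the combinatorial verification you rightly flag as delicate. For what it is worth, that verification does go through cleanly via homomorphism counts rather than direct induced-subgraph counting: for every connected $F$ on at most $k<m$ vertices one has $\mathrm{hom}(F,C_{2m})=2\,\mathrm{hom}(F,C_m)$ (trees contribute $N\cdot 2^{|E(F)|}$, graphs with an odd cycle contribute $0$ to both since no odd closed walk of length $<m$ exists in either cycle, and connected bipartite $F$ with a cycle has image a path of at most $k$ vertices, so its count is linear in the cycle length), whence $\mathrm{hom}(F,C_{2m})=\mathrm{hom}(F,C_m\sqcup C_m)$ for all $F$ on at most $k$ vertices, and the standard M\"obius inversions then give equality of the induced $k$-tuple type multisets; separation by $k$-WL for $k\ge 3$ follows since the two graphs have different spectra and hence different closed-walk counts, which $k$-WL detects for $k\ge 3$ by the treewidth-$(k-1)$ homomorphism characterization. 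So your route is sound and strictly more informative than the paper's on the strictness half, but as written it leaves that one lemma unproved, whereas the paper sidesteps it by (non-constructively) asserting existence.
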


\begin{proof}
    The proof is straightforward. Without any extra inputs (including positional encodings, structural encodings, and tuple indices), the initialization of every $k$-tuple is the same as in $k$-WL, see \citet{PPGN} for more details. Then according to Definition~\ref{def_k1k2-transformer} in the main text, tuples with identical initialization representations (colors) always obtain identical updates, suggesting that the color histogram always remains the same as the initial one. Consequently, the expressive power of $\mathcal A_k$ is the same as the $0$-th step (initialization) of $k$-WL. There exist non-isomorphic graphs that can be distinguished by stable $k$-WL histograms but not its initialization, yet the other direction does not hold. Therefore, $\mathcal A_k$ is strictly less expressive than $k$-WL. 
\end{proof}

As a corollary, since the initialization of ($k+1$)-WL is always strictly more powerful than the initialization of $k$-WL, we conclude that $\mathcal A_{k+1}$ is strictly more powerful than $\mathcal A_k$.

\begin{corollary}
    $\forall k\geq 1$, $\mathcal A_{k+1}$ is strictly more powerful than $\mathcal A_k$.
\end{corollary}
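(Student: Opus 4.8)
The plan is to reduce the corollary to a statement purely about the $0$-th iteration (initialization) of $k$-WL, using the characterization already obtained in the proof of \cref{Theorem_Akk<kWL}. That proof shows that, with no auxiliary input, the graph-level distinguishing power of $\mathcal A_k$ coincides exactly with that of the $k$-WL initialization: two graphs are separated by $\mathcal A_k$ if and only if their multisets of $k$-tuple isomorphism types differ. Write $I_k(G)$ for this multiset (equivalently, the histogram of initial colors $c_k^0(\cdot,G)$). Thus it suffices to prove that $I_{k+1}$ \emph{strictly refines} $I_k$, i.e. (i) $I_{k+1}(G)=I_{k+1}(H)$ implies $I_k(G)=I_k(H)$, and (ii) there is a pair $G,H$ with $I_k(G)=I_k(H)$ but $I_{k+1}(G)\neq I_{k+1}(H)$.

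For the refinement direction (i) I would use a truncation argument. First, $n$ is recoverable from $I_{k+1}$, since the multiset $I_{k+1}(G)$ has cardinality $n^{k+1}$; hence $I_{k+1}(G)=I_{k+1}(H)$ forces $n_G=n_H=n$. Next I define a map $\rho$ on isomorphism types sending the type of a $(k+1)$-tuple $(v_1,\dots,v_{k+1})$ to the type of its truncation $(v_1,\dots,v_k)$; this is well defined because the induced labelled configuration on the first $k$ entries is determined by that on all $k+1$ entries. Pushing $I_{k+1}(G)$ forward through $\rho$ records the truncated type of every $(k+1)$-tuple, and since each $k$-tuple $\mathbf v$ is the truncation of exactly the $n$ tuples $(\mathbf v,u)$ with $u\in V(G)$, this pushforward equals $n\cdot I_k(G)$ (every multiplicity scaled by $n$). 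Applying the same to $H$ and cancelling the common factor $n$ yields $I_k(G)=I_k(H)$; contrapositively, every pair separated by $\mathcal A_k$ is separated by $\mathcal A_{k+1}$.

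The hard part will be the strict separation (ii). The key observation is that $I_k(G)$ is equivalent to the collection of induced-subgraph counts of $G$ over all labelled patterns on at most $k$ vertices (tuples with repeated entries account for patterns on fewer than $k$ vertices), whereas $I_{k+1}$ additionally encodes induced counts on $k+1$ vertices. So what is needed is a pair of graphs agreeing on all induced-subgraph counts up to size $k$ yet differing in some $(k+1)$-vertex count. I would supply this by exhibiting, for each $k$, such a \emph{homometric} pair: concretely, for $k=3$ one can take the Shrikhande graph and the $4\times 4$ rook graph, both strongly regular with parameters $(16,6,2,2)$, which therefore share all induced counts on $\le 3$ vertices but differ at $4$ vertices (the rook graph contains induced $K_4$'s, since each neighborhood induces $2K_3$, while Shrikhande neighborhoods induce $C_6$ and contain none), and to invoke analogous constructions for general $k$ together with the standard fact that the finite statistic ``induced counts up to size $k$'' fails to determine size-$(k{+}1)$ counts for large enough graphs. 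Pinning down a uniform-in-$k$ family realizing this separation is the genuinely delicate step; everything else is bookkeeping on multisets of isomorphism types.
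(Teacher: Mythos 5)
Your overall route is the same as the paper's: both reduce the corollary to comparing the \emph{initializations} of $k$-WL and $(k+1)$-WL, using the fact (established in the proof of \cref{Theorem_Akk<kWL}) that $\mathcal A_k$ distinguishes exactly those graph pairs whose multisets of $k$-tuple isomorphism types differ. Your treatment of the refinement direction (i) is in fact more careful than the paper's, which leaves that half implicit: the truncation/pushforward argument (each $k$-tuple is the truncation of exactly $n$ of the $(k+1)$-tuples, and $n$ is recoverable as the $(k+1)$-st root of the multiset's cardinality) is correct and self-contained.

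The one genuine gap is the one you flag yourself: for the strict separation (ii) you exhibit a witness only for $k=3$ (Shrikhande vs.\ the $4\times 4$ rook graph) and defer the uniform-in-$k$ family. The paper closes this step by observing that the presence of a $(k+1)$-clique is always readable off the multiset of $(k+1)$-tuple isomorphism types, while --- asserted as a known fact --- it is not determined by the multiset of $k$-tuple types; i.e., for every $k$ there exist two graphs agreeing on all induced-subgraph statistics up to size $k$ of which exactly one contains $K_{k+1}$. Your $k=3$ pair is precisely an instance of this ($K_4$ detection via neighborhoods inducing $2K_3$ versus $C_6$), and the small cases fit the same template ($k=1$: an $n$-vertex graph with one edge vs.\ the empty graph on $n$ vertices; $k=2$: $2C_3$ vs.\ $C_6$). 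So your argument becomes complete once you prove or cite this clique-indistinguishability statement for general $k$; as written, strictness for arbitrary $k$ rests on an unproved existence claim. In fairness, the paper itself only asserts that claim without constructing the witnessing family either, so your proposal is no less rigorous than the published proof at that point --- but you should not present the general-$k$ step as ``bookkeeping,'' since it is exactly the content of the corollary's strictness.
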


\begin{proof}
    We have already shown in the above proof of \cref{Theorem_Akk<kWL} that $\mathcal A_k$ is as powerful as the initialization as $k$-WL. It is known that for all $k\geq 1$, the initialization of $k+1$-WL is strictly more powerful than the initialization of $k$-WL. This holds because the initialization of ($k+1$)-WL can always distinguish whether there is a ($k+1$)-clique in the graph, while the initialization of $k$-WL fails.
\end{proof}

\begin{figure*}[t]
\vskip -0.1in
\begin{center}
\centerline{\includegraphics[width=0.5\columnwidth]{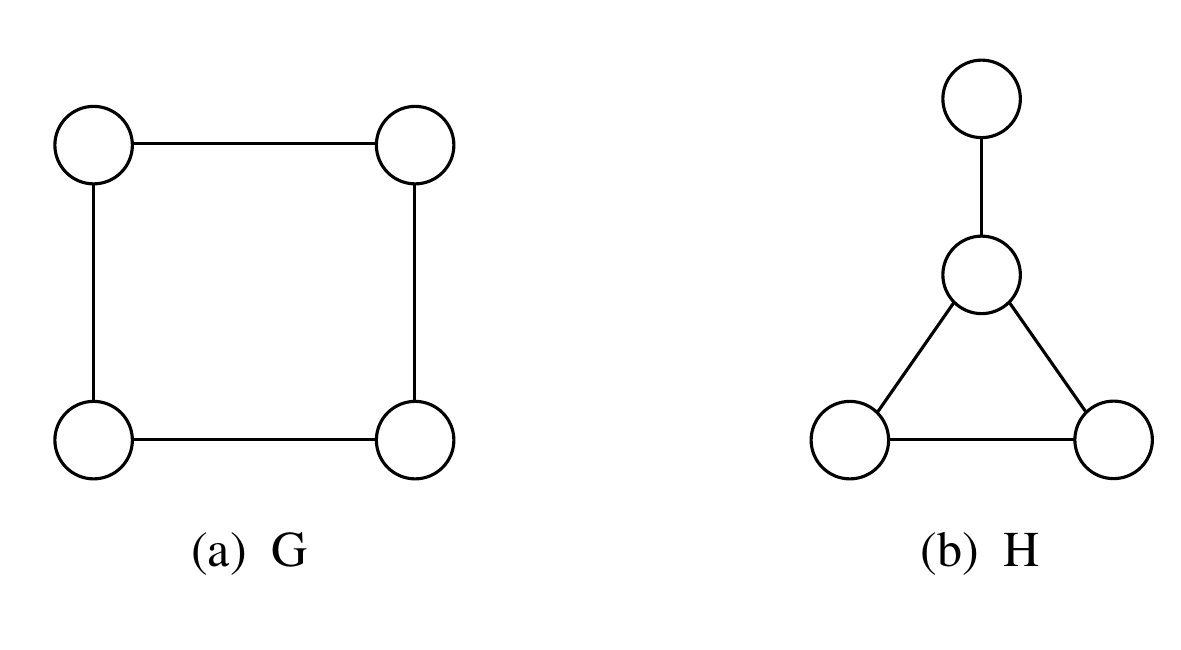}}
\vskip -0.1in
\caption{A pair of non-isomorphic graphs that can be distinguished by $1$-WL and $2$-WL, but cannot be distinguished by $\mathcal A_1$ and $\mathcal A_2$. For $k\geq 3$, both $k$-WL and $\mathcal A_k$ can distinguish them.}
\label{figure_example_nonisomorphic}
\end{center}
\vskip -0.1in
\end{figure*}

\cref{figure_example_nonisomorphic} provides an example of a pair of non-isomorphic graphs that can be distinguished by $1$-WL and $2$-WL, but cannot be distinguished by $\mathcal A_1$ and $\mathcal A_2$. Here we demonstrate the procedure. In the graph pairs, $n=4$. In our illustration, for simplicity, we use different alphabets $a, b,\dots$ to represent different features $\mX_\vi$ using a hash function. 

For $k=1$, we consider $1$-WL and $\mathcal A_1$ (i.e., the standard first-order graph transformer). In the initialization stage, both $1$-WL and $\mathcal A_1$ have identical $1$-tuple (node) feature initialization for both graphs $G$ and $H$: 

\begin{align}
    {\rm 1-WL}(G)^{(0)}=\mset{a,a,a,a}, \ {\rm 1-WL}(H)^{(0)}=\mset{a,a,a,a}\\
    \mathcal A_1(G)^{(0)}=\mset{a,a,a,a}, \ \mathcal A_1 (H)^{(0)}=\mset{a,a,a,a}
\end{align}

That is, both $1$-WL and $\mathcal A_1$ cannot distinguish $G$ and $H$ through their initialization. However, after one iteration, denote $b={\rm Hash}(a, \mset{a,a}), c={\rm Hash}(a, \mset{a}), d={\rm Hash}(a, \mset{a,a,a})$, and $e$ as the updated feature calculated by $\mathcal A_1$, we have

\begin{align}
    {\rm 1-WL}(G)^{(1)}=\mset{b,b,b,b}, \ {\rm 1-WL}(H)^{(1)}=\mset{b,b,c,d}\\
    \mathcal A_1(G)^{(1)}=\mset{e,e,e,e}, \ \mathcal A_1 (H)^{(1)}=\mset{e,e,e,e}
\end{align}

Therefore, $1$-WL can distinguish two graphs since all four nodes in $G$ have degree $2$, yet the degree histogram in $H$ is $\mset{2,2,1,3}$. In comparison, $\mathcal A_1$ has identical representation for $G$ and $H$. The histograms calculated by $\mathcal A_1$ are actually not updated compared with the initialization. By deduction, we can easily verify that for all iterations $t\geq 1$, we always have
\begin{equation}
    \mathcal A_1(G)^{(t)}=\mset{a^{(t)},a^{(t)},a^{(t)},a^{(t)}},\ 
    \mathcal A_1(H)^{(t)}=\mset{a^{(t)},a^{(t)},a^{(t)},a^{(t)}}
\end{equation}
which implies that $\mathcal A_1$ always fails in distinguishing $G$ and $H$. This observation is also in agreement with the fact that the plain first-order graph transformer cannot capture any edge information. 

For $k=2$ the same conclusion can be drawn. In the initialization stage, there are $n^2=16$ number of $2$-tuples. Denote $a=(i,i)$, $b=(i,j),i\neq j, e_{ij}=e_{ji}=1$ where $e_{ij}=e_{ji}=1$ indicates that there is an undirected edge between the nodes $i,j$, and finally $c=(i,j),i\neq j, e_{ij}=e_{ji}=0$ which implies that $i,j$ are not connected. Thus,

\begin{align}
    {\rm 2-WL}(G)^{(0)}=\mset{a,a,a,a,b,b,b,b,b,b,b,b,c,c,c,c},\ {\rm 2-WL}(H)^{(0)}=\mset{a,a,a,a,b,b,b,b,b,b,b,b,c,c,c,c}\\
    \mathcal A_2(G)^{(0)}=\mset{a,a,a,a,b,b,b,b,b,b,b,b,c,c,c,c},\  \mathcal A_2 (H)^{(0)}=\mset{a,a,a,a,b,b,b,b,b,b,b,b,c,c,c,c}
\end{align}

After one iteration, denote
\begin{align}
    d={\rm Hash}(a, (\mset{a,b,b,c}, \mset{a,b,b,c}))\\
    e={\rm Hash}(a, (\mset{a,b,c,c}, \mset{a,b,c,c}))\\
    f={\rm Hash}(a, (\mset{a,b,b,b}, \mset{a,b,b,b}))\\
    g={\rm Hash}(b, (\mset{a,b,b,c}, \mset{a,b,b,c}))\\
    h={\rm Hash}(b, (\mset{a,b,b,b}, \mset{a,b,c,c}))\\
    o={\rm Hash}(b, (\mset{a,b,c,c}, \mset{a,b,b,b}))\\
    p={\rm Hash}(b, (\mset{a,b,b,b}, \mset{a,b,b,c}))\\
    q={\rm Hash}(b, (\mset{a,b,b,c}, \mset{a,b,b,b}))\\
    r={\rm Hash}(c, (\mset{a,b,b,c}, \mset{a,b,b,c}))\\
    s={\rm Hash}(c, (\mset{a,b,c,c}, \mset{a,b,b,c}))\\
    u={\rm Hash}(c, (\mset{a,b,b,c}, \mset{a,b,c,c}))
\end{align}
Then $2$-WL gives
\begin{align}
    {\rm 2-WL}(G)^{(1)}=\mset{d,d,d,d,g,g,g,g,g,g,g,g,r,r,r,r}\\{\rm 2-WL}(H)^{(1)}=\mset{d,d,e,f,g,g,h,o,p,p,q,q,s,s,u,u}
\end{align}
The histogram of $G$ and $H$ are obvious different, therefore $2$-WL can distinguish them within one iteration, which is consistent with the fact that $2$-WL is as powerful as $1$-WL. 

In comparison, although $\mathcal A_k$ is able to incorporate edge information in the initialization stage, it is unable to learn further connectivity of the graph structure through the fully dense attention mechanism. Indeed, for $\forall t\geq 1$, the histogram of $\mathcal A_2$ after $t$ iterations always gives
\begin{align}
    \mathcal A_2(G)^{(t)}=\mset{a^{(t)},a^{(t)},a^{(t)},a^{(t)},b^{(t)},b^{(t)},b^{(t)},b^{(t)},b^{(t)},b^{(t)},b^{(t)},b^{(t)},c^{(t)},c^{(t)},c^{(t)},c^{(t)}}\\
    \mathcal A_2 (H)^{(t)}=\mset{a^{(t)},a^{(t)},a^{(t)},a^{(t)},b^{(t)},b^{(t)},b^{(t)},b^{(t)},b^{(t)},b^{(t)},b^{(t)},b^{(t)},c^{(t)},c^{(t)},c^{(t)},c^{(t)}}
\end{align}
hence always fail to distinguish two graphs.

For $k\geq3$, note that there is a $3$-clique in $H$ but not in $G$. Therefore, both $3$-WL and $\mathcal A_k$ can distinguish them via initialization.

In summary, a plain $\mathcal A_k$ is always strictly less expressive than $k$-WL and $k$-IGN in terms of distinguishing non-isomorphic graphs. However, for real-world graphs where the features are represented by a continuous-valued vector, the attention mechanism provides the potential for powerful representation learning. Particularly, with enhancements such as positional/structural encodings (PE/SE) and our sparse attention mechanisms, we can introduce asymmetry and variety into tuple features, obtaining powerful representations over graphs. See \cref{SectionProofAppendix} for more theoretical analysis, and \cref{SectionExperimentsAppendix} for more implementation details and practical benefits. 

Now we analyze the enhancement of augmenting inputs with tuple indices. As emphasized in our main text, this enhancement has potential drawbacks including breaking permutation invariance (which is also the case for \citep{PureTransformerspowerful}), hence is only of theoretical interest.

\begin{theorem}
    {\rm (\cref{theorem_Akk=kWL} in main text.)} For inputs $\mX \in \mathbb R^{n^k\times (d+k)}$ where each element is a concatenation of a $d$-dimensional tuple feature and $k$-dimension index of the tuple, one layer of $\mathcal A_k$ with hidden dimension $O(k)$ and $k$ heads augmented with input MLPs and residual connection can approximate one $k$-WL iteration arbitrarily well. If the softmax function is replaced by element-wise ReLU activation, $\mathcal A_k$ can exactly simulate $k$-WL.
\end{theorem}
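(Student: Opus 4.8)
The plan is to use the $k$ attention heads as $k$ independent ``neighborhood selectors'', one for each of the $k$ neighborhoods $\mathcal N_j(\vi)$ appearing in the $k$-WL update (\ref{equation_kWL_color}), and to exploit the tuple indices purely as \emph{addresses} that let the bilinear attention logits detect which keys are genuine neighbors of a given query. First I would use the input MLP to pre-process each token $x_\vi=(\,\text{feature}_\vi,i_1,\dots,i_k\,)$ into an augmented representation carrying three kinds of coordinates: (i) a PMP/polynomial transform $\tau(\cdot)$ of the current color $c^{t-1}(\vi)$ in the spirit of Remark 5, which serves as the \emph{content} routed through the value maps; (ii) the raw indices $i_1,\dots,i_k$; and (iii) the squared indices $i_1^2,\dots,i_k^2$ together with a constant $1$. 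Only $O(k)$ additional coordinates are needed for this addressing part. The squared indices are the crucial ingredient: an attention logit $x_\vi (Q^j)(K^j)^\top x_\vj^\top$ is \emph{bilinear} in query and key and therefore cannot on its own form the equality test $\sum_{l\ne j}(i_l-j_l)^2$.

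Next, in head $j$ I would choose $Q^j,K^j$ (of latent dimension $O(k)$) so that the logit between query $\vi$ and key $\vj$ equals $-M\sum_{l\ne j}(i_l-j_l)^2$ up to an additive term depending on the query alone. Expanding $(i_l-j_l)^2=i_l^2-2i_lj_l+j_l^2$, the cross term is exactly the bilinear part realized by $Q^j(K^j)^\top$ (query coordinate $2M i_l$ paired with key coordinate $j_l$); the key-only term $-M\sum_{l\ne j}j_l^2$ is produced by pairing the constant $1$ coordinate of the query with the linear combination $-M\sum_{l\ne j}j_l^2$ formed by $K^j$ from the precomputed squares; and the query-only term $-M\sum_{l\ne j}i_l^2$ is constant across keys. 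For the softmax version this query-only constant is irrelevant, since softmax is shift-invariant along each row, and as $M\to\infty$ the attention row concentrates uniformly on the $n$ tuples with $i_l=j_l$ for all $l\ne j$ -- precisely $\mathcal N_j(\vi)$ -- each receiving weight $\to \tfrac{1}{n}$ while every non-neighbor receives weight $\to 0$.

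Consequently head $j$ computes, to arbitrary accuracy, $\tfrac{1}{n}\sum_{u\in[n]}\tau(c^{t-1}(\psi_j(\vi,u)))$ via its value map $V^j$, which simply selects the precomputed $\tau$-coordinates; the fixed factor $\tfrac{1}{n}$ is absorbed by the output MLP. Concatenating the $k$ heads yields the ordered $k$-tuple of neighborhood-multiset encodings, the residual connection carries $c^{t-1}(\vi)$ forward as the first argument of the hash, and the output MLP realizes the injective hash (Remarks 3 and 5), reproducing one $k$-WL iteration to arbitrary precision. For the exact ReLU statement I would instead set the logit to $1-\sum_{l\ne j}(i_l-j_l)^2$, achievable verbatim once the constant $1$, the query-only $-i_l^2$, the key-only $-j_l^2$, and the cross terms are each placed in dedicated coordinates as above -- here ReLU is not shift-invariant, so the query-only term must be produced exactly rather than discarded. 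Because indices are integers, a genuine neighbor gives logit exactly $1$ and every non-neighbor gives logit $\le 0$, so element-wise ReLU turns the attention weight into the exact indicator of $\mathcal N_j(\vi)$ and head $j$ outputs the exact multiset \emph{sum} $\sum_{u\in[n]}\tau(c^{t-1}(\psi_j(\vi,u)))$, matching the aggregation in (\ref{equation_kWL}) and giving exact simulation.

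The main obstacle, and the step requiring the most care, is exactly this reduction of an index-equality test to a bilinear form: it forces the squared-index features into the input MLP and the careful splitting of each logit into cross, query-only, and key-only pieces, all while keeping the addressing overhead at $O(k)$ per head. A secondary point to verify is that, in the softmax case, the $M\to\infty$ limit is uniform over the finitely many colors and the compact feature domain, so that the downstream injective hash still separates colors correctly and the ``arbitrarily well'' approximation genuinely recovers one $k$-WL step.
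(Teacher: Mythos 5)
Your proposal is correct and follows essentially the same architecture as the paper's proof: one head per neighborhood $\mathcal N_j(\vi)$, an index-based equality test realized through the bilinear attention logits, a limit argument so that softmax concentrates weight $1/n$ on each genuine neighbor (with a threshold giving an exact $0/1$ indicator in the ReLU case), PMP-encoded colors routed through the values, and residual connection plus concatenation plus output MLP implementing the hash. The only difference is the gadget used for the equality test -- you expand $-M\sum_{l\neq j}(i_l-j_l)^2$ into cross, query-only, and key-only pieces using squared-index and constant coordinates, whereas the paper maps each index to $\bigl(\cos(2\pi i_l/M),\sin(2\pi i_l/M)\bigr)$ so the logit becomes $c\sum_{l\neq j}\cos\bigl(2\pi(i_l-j_l)/M\bigr)$; both realizations are valid, use $O(k)$ latent dimension, and admit the same integer-gap/threshold argument for the exact ReLU simulation.
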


\begin{proof}
    We breakdown the update function of $k$-WL \cref{equation_kWL} into several parts. The core idea of our construction is to select $k$-neighbors in \cref{equation_k-neighbor} via the multiplication of queries and keys, so that the update only involves the values (colors) of these neighboring tuples. As there are $k$ different neighbors for each tuple (depending on the position of indices being replaced), they are implemented by $k$ heads of transformer, and the concatenation of representations output by independent heads naturally implements the ordered set of neighbors as $k$ heads are allowed to be parameterized with different weights. Additionally, the color of the previous iteration and the hash function can be implemented by residual connection and feed forward layers (see \cite{PPGN} for more details). 

    First, we show that the $h$-th head with input MLPs $\phi_j:\mathbb R^{d+k}\rightarrow \mathbb R^{2(k-1)}$ and the softmax function ${\rm softmax}\big(\phi(\mX)Q (\phi(\mX) K)^\top\big)$ can approximate the selection of the $h$-th neighbor arbitrarily well. As we only want to reserve the index information in this part, inspired by \citep{RepresentationalStrengthsTransformer}, we define the tuple-wise MLP as

    \begin{align}
        \frac{1}{c}\phi_h(x_\vi)Q^h=\phi_h(x_\vi)K^h=\phi_h\big([z_\vi, \vi]\big)=\Big[&\cos(\frac{2\pi i_1}{M}),\sin(\frac{2\pi i_1}{M}),\dots,\cos(\frac{2\pi i_{h-1}}{M}),\sin(\frac{2\pi i_{h-1}}{M}),\\ &\cos(\frac{2\pi i_{h+1}}{M}),\sin(\frac{2\pi i_{h+1}}{M}), \dots, \cos(\frac{2\pi i_k}{M}),\sin(\frac{2\pi i_k}{M})\Big] \in \mathbb R^{2(k-1)}
    \end{align}

    where $\vi$ is the $k$-dimensional multi-index for the tuple, $z_\vi$ is the $d$-dimensional tuple feature being dropped for queries and keys, $M\geq n$ is a constant integer and $c\in\mathbb R$ is a positive constant. That is to say, we drop the $h$-th dimension of the index and reserve the remaining $k-1$ ones to find the tuples as candidates of $h$-th neighbor, who have identical indices as the query except the $h$-th index. Then for query tuple $\vi$ and potential target tuple $\vj$, we have

    \begin{equation}
        (\phi_h(x_\vi)Q^h)^\top \phi_h(x_\vj)K^h=c\sum_{l\in[k],l\neq h}\cos\Big(\frac{2\pi (i_l-j_l)}{M}\Big)
    \end{equation}

    Since $M\geq n$, the above value reaches its maximum $c(k-1)$ if and only if $i_l=j_l$ for all $l\in[k],l\neq h$, which exactly corresponds to the indices of $n$ tuples in the $h$-th neighbor of $\vi$. Then let $c\rightarrow\infty$, the output of the softmax would be

    \begin{equation}
        \bigg({\rm softmax} \Big((\phi_h(\mX)Q^h) (\phi_h(\mX)K^h)^\top\Big)\bigg)_{\vi, \vj}\rightarrow \left\{ 
        \begin{array}{ll} \frac{1}{n}, \ \ & \vj \in \mathcal N_h(\vi) \\ 
         0, \ \ & \vj \notin \mathcal N_h(\vi)
        \end{array}
        \right.
    \end{equation}

    Then the output of the $h$-th head attention would be
    \begin{equation}\label{equation_softmax_neighbor_update}
        \bigg({\rm softmax} \Big((\phi_h(\mX)Q^h) (\phi_h(\mX)K^h)^\top\Big) \mX V^h\bigg)_{\vi}\rightarrow \frac{1}{n}\sum_{\vj\in \mathcal N_h(\vi)} V^h
    \end{equation}

    where the values $x_\vj V^h$ only retain information of tuple features $z_\vj$ and drop the indices to represent the color of tuple $\vj$ in the last iteration (along with some possible injective transformation). Since $n$ is constant, the $\frac{1}{n}$ factor does not affect the injectivity. The above results show that we can successfully represent $\mathcal N_h(\vi)$, which is a multiset of $n$ different $k$-tuples. According to Remark 5, the summation operation can injectively represent the color of $\mathcal N_h(\vi)$ - although another MLP $\tau$ is needed before summing the color representations, this could be easily achieved by adding another input MLP (which is always the case whenever we need to use Remark 5). 

    It is remarkable that the above approximation can be exact if we replace the softmax with an element-wise ReLU activation: 

    \begin{equation}\label{equation_relu_neighbor_update}
        {\rm ReLU}\bigg((\phi_h(x_\vi)Q^h)^\top \phi_h(x_\vj)K^h -(c(k-1-\frac{1}{M^2}))\bigg)=\left\{ 
        \begin{array}{ll} \frac{c}{M^2}, \ \ & \vj \in \mathcal N_h(\vi) \\ 
         0, \ \ & \vj \notin \mathcal N_h(\vi)
        \end{array}
        \right.
    \end{equation}

    The rest of the proof is more straightforward. As we have $k$ different heads with different parameters, we repeat the above procedure to get all the $k$ neighbors of $\vi$. The set of neighbors can be represented by concatenating the output of these heads with the fixed order $h=1,\dots,k$. The color of $\vi$ in the previous iteration $\mC_\vi^t$ is reserved by residual connection. Then, according to Remark 3, the final hash function can be exactly implemented by concatenation of $\mC_\vi^t$ and \cref{equation_softmax_neighbor_update} (\cref{equation_relu_neighbor_update}). 
\end{proof}

Our construction and proof are different from \citep{PureTransformerspowerful}. Instead of directly simulating $k$-WL as we do, they aimed to approximate equivalence class basis in $k$-IGN. In detail, they proposed to augment tuple features with auxiliary inputs, namely 'node identifiers' and 'type identifiers' for $\mathcal A_k$ transformer, so that the latter can approximate equivalence class basis in $k$-IGN and thus $k$-WL. However, compared with their constructions, our version reveals at least the following advantages: (1) The input dimension of their construction is at least $O(d+kn)$ since they use orthogonal basis for each node as node identifiers, growing with $n$ makes it unrealistic. In comparison, our construction needs only $d+k$ input dimension and $O(k)$ hidden dimension, which are constants independent of $n$. Furthermore, since we need to index integers from $1$ to $n$, we need $k \log n$ bits in total; in comparison, since orthonormal vectors can consist of $0,1$, resulting $kn$ bits. 
(2) Their node identifiers serve as explicit features instead of only indexing, making their results stochastic and not permutation invariant. Actually, the role of their identifiers is the labeling method in \citep{RelationalklWL} to break the symmetry, but they do not perform relational pooling (summing over all possible permutations of labeling orders) to maintain permutation invariance of the models. In contrast, our output is deterministic, and if we use the exact parameterization of the ReLU version as in the proof above, the output of our transformer is permutation invariant, although generally the output is not permutation invariant for arbitrary parameterization. 
(3) The transformer parameters are required to be infinite in their construction, while we can avoid this problem with the ReLU activation version. We refer readers to \citep{PureTransformerspowerful} for more details of their methods.

\subsection{Proof and Additional Results for Section 4}\label{SubsecProofSec4}

\subsubsection{Kernelized Attention}

Literature so far has already applied linear/kernelized attention to standard graph transformers \citep{GPS}, yet rarely to high-order graph transformers. \citet{PureTransformerspowerful} indeed applied Performer to their TokenGT, but their TokenGT is actually not a strict second-order transformer since it apparently does not consider all $2$-tuples. Moreover, the theoretical guarantees of the kernelized transformers are still left unexplored, especially for high-order cases. We now formally state the definition of kernelized attention. Similar to the case for standard (one-dimensional) transformer, a single head $k$-order self-attention layer can be reformed as

\begin{equation}
    \mX_{i}^{(l+1)}=\sum_{j=1}^{n^k} \frac{\kappa(Q^{(l)}\mX_i^{(l)}, K^{(l)}\mX_j^{(l)})}{\sum_{k=1}^{n^k} \kappa(Q^{(l)}\mX_i^{(l)}, K^{(l)}\mX_k^{(l)})}\dot (V^{(l)}\mX_j^{(l)})
\end{equation}

where $\kappa:\mathbb R^d\times \mathbb R^d \rightarrow \mathbb R$ is the softmax kernel $\kappa(\mathbf x,\mathbf y)=\exp(\mathbf x^T\mathbf y)$. The kernel trick approximates the softmax via

\begin{equation}
    \kappa(\mathbf x,\mathbf y)=<\Phi(\mathbf x),\Phi(\mathbf y)>\approx \phi(\mathbf x)^T\phi(\mathbf y)
\end{equation}

where the first equation is by Mercer's theorem and the latter is a low-dimensional approximation with random transformation $\phi:\mathbb R^d\rightarrow \mathbb R^m$. In Performer \cite{Performer}, $\phi(\mathbf x)=\frac{\exp \big(-\frac{||X||_2^2}{2}\big)}{\sqrt m}[\exp(\mathbf w_1^T\mathbf x),\dots, \exp(\mathbf w_m^T\mathbf x)]$ where $\mathbf w_k\sim \mathcal N(0,I_d)$. In Linear Transformer \cite{LinearTransformerRNN}, $\phi(\mathbf x)=elu(\mathbf x)+1$. Then we can further rewrite the attention as

\begin{equation}\label{kernelized_Ak}
    \mX_{i}= \frac{\Big(\phi(\mX_i Q)^\top \sum_{j=1}^{n^k}\big( \phi(\mX_j K) \otimes (\mX_j V)\big) \Big)^\top}{\phi(\mX_i Q)^\top \sum_{l=1}^{n^k} \phi(\mX_l K)}
\end{equation}

where $\otimes$ is the outer product. Note that the two summations $\sum_{j=1}^{n^k}\big( \phi(\mX_j K) \otimes (\mX_j V)\big)\in\mathbb R^{md'}$ and $\sum_{l=1}^{n^k} \phi(\mX_l K)$ are shared for all query tokens, while the former has the bottleneck complexity $O(n^kmd')$. Again, since $m,d'$ are of the same magnitude as $d$, the total time complexity of kernelized $\mathcal A_k$ is $O(n^kd^2)$.

Now we prove the approximation results of $k$-IGN to kernelized $\mathcal A_k$ in the main text.

\begin{theorem}
    {\rm (\cref{Theorem_kIGN>linearA_kk11} in main text.)} $k$-IGN can approximate kernelized $\mathcal A_k$ with Linear Transformer or Performer architectures arbitrarily well. $\mathcal A_k$ with Linear Transformer or Performer architectures is strictly less powerful than $k$-IGN.
\end{theorem}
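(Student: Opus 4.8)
The plan is to decompose one layer of kernelized $\mathcal A_k$ into three elementary operations, each lying within the expressive repertoire of the equivariant linear layers $L_{k\to k}$ (together with the pointwise nonlinearities $\sigma$) that make up a $k$-IGN, and then to invoke universal approximation. Writing the kernelized update (\cref{kernelized_Ak}) out, each output tuple is produced by: (i) a \emph{per-tuple} map sending $\mX_\vi$ to the triple $\big(\phi(\mX_\vi Q),\, \phi(\mX_\vi K)\otimes(\mX_\vi V),\, \phi(\mX_\vi K)\big)$; (ii) a single \emph{global aggregation} producing $S_1=\sum_\vj \phi(\mX_\vj K)\otimes(\mX_\vj V)$ and $S_2=\sum_\vj \phi(\mX_\vj K)$, which are identical across all tuples; and (iii) a per-tuple \emph{contraction and ratio} $\vi \mapsto \big(\phi(\mX_\vi Q)^\top S_1\big)\big/\big(\phi(\mX_\vi Q)^\top S_2\big)$. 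I would realize (i) and (iii) by pointwise MLPs and (ii) by one equivariant linear layer.

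For step (i), a pointwise (shared-across-tuples) MLP is itself realizable inside a $k$-IGN: one takes the identity summand in \cref{equation_kIGN} (the basis tensor with $\mathbf B_{\vi,\vj}=\delta_{\vi\vj}$) for the linear part and interleaves the ReLU activations $\sigma$, so stacking such layers approximates any continuous per-tuple function on a compact domain. The feature maps are continuous---$\phi(x)=\mathrm{elu}(x)+1$ for the Linear Transformer and the $\exp$-based map for Performer---and the outer product and the bilinear contraction appearing in (i) and (iii) are degree-two polynomials; hence all of these are uniformly approximable by MLPs over the compact feature and weight domain fixed in \cref{SubsubsecNotation}. For step (ii), the ``sum over all $n^k$ tuples and broadcast to every tuple'' operation is exactly the equivariant linear map whose basis tensor is the all-ones tensor $\mathbf B_{\vi,\vj}\equiv 1$; this is the natural generalization of the order-$1$ sum-broadcast basis element and is permutation equivariant, so it lies in the span of the $L_{k\to k}$ basis. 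Carrying $\phi(\mX_\vi Q)$ forward unchanged is handled simultaneously by also including the identity basis element acting on a separate block of feature channels.

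The step requiring the most care, and the one I expect to be the main obstacle, is the division in (iii): an MLP can approximate $(\va,\vb)\mapsto \va/\vb$ only on a region where $\vb$ is bounded away from $0$. This is where the \emph{positivity} of the chosen feature maps is essential. For the Linear Transformer, $\phi=\mathrm{elu}+1>0$ componentwise; for Performer, $\phi$ is built from $\exp>0$. Consequently every entry of $\phi(\mX_\vj K)$ is strictly positive, and since there are $n^k\ge 1$ summands drawn from a compact set, the denominator $\phi(\mX_\vi Q)^\top S_2$ is bounded below by a positive constant $c>0$ uniformly over the domain. The ratio is therefore Lipschitz on the relevant compact region and is uniformly MLP-approximable. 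Composing the three approximations, and using that each equivariant layer is Lipschitz on the compact working domain so that errors propagate in a controlled way, a stack of $k$-IGN layers approximates one kernelized $\mathcal A_k$ layer to arbitrary accuracy; iterating over layers and appending the invariant readout $L_{k\to 0}$ followed by an MLP matches the full network.

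Finally, for the strict direction I would reuse the mechanism behind \cref{Theorem_Akk<kWL}. Kernelized $\mathcal A_k$ still takes no tuple indices or structural side information, so if two tuples $\vi,\vi'$ share the same feature then $\phi(\mX_\vi Q)=\phi(\mX_{\vi'} Q)$, and because $S_1,S_2$ are global quantities common to all tuples, the two outputs coincide. Thus identically initialized tuples stay identically colored forever, and the color histogram of kernelized $\mathcal A_k$ is frozen at the $k$-WL initialization. Since $k$-IGN is as expressive as $k$-WL and the latter refines strictly beyond initialization---for instance on the non-isomorphic pair of \cref{figure_example_nonisomorphic}---kernelized $\mathcal A_k$ is strictly less powerful than $k$-IGN.
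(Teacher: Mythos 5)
Your proposal is correct and follows essentially the same route as the paper's proof: both realize the global sums $S_1,S_2$ through the sum--broadcast (all-ones) equivariant basis of $L_{k\to k}$, handle $\phi$, the bilinear contraction, and the ratio by pointwise MLPs on the compact working domain, and use the strict positivity of the Performer/Linear-Transformer feature maps to bound the denominator away from zero. For the strictness direction the paper reduces to \cref{Theorem_Akk<kWL} by observing that kernelized $\mathcal A_k$ is implementable within $\mathcal A_k$, whereas you re-run the frozen-histogram argument directly on the kernelized model; the underlying mechanism is identical.
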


\begin{proof}
    First, we emphasize the fact that although having $n^k$ input tokens, a pure $\mathcal A_k$ treats all tuples homogeneously regardless of equivalence classes. Consequently, the $\mathcal A_k$ with Linear Transformer or Performer architectures is of the same form as DeepSets and Sumformer\citep{Sumformer}.

    To show that $k$-IGN can approximate kernelized $\mathcal A_k$, we construct identical weights for all equivalence classes in $k$-IGN, namely $w_\mu=w, b_\lambda=b, \forall \mu\in[{\rm bell}(2k)], \forall \lambda \in [{\rm bell}(k)]$. Then the update function in \cref{equation_kIGN} (in main text) reduces to

    \begin{equation}\label{kIGN-homo}
        L_{k\rightarrow k}(\mX)_\vi = \sum_\vj \mX_\vj w + b
    \end{equation}

    Compared with MPNN of the first order, $k$-IGN aggregates global information from all $n^k$ tuples by design, hence can recover the role of ``virtual node'' in MPNN~\citep{ConnectionMPNNGT}. We will show that \cref{kIGN-homo} with residual connection and output MLPs can approximate kernelized $\mathcal A_k$ with Linear Transformer and Performer architectures arbitrarily well in $O(1)$ depth and $O(1)$ width. A similar technique is used in our proof of \cref{Proposition_virtual_tuple=Linear_A_kk11} and \citep{ConnectionMPNNGT}.

    We now consider another $k$-IGN layer along with an input MLP. The input MLP computes $\phi(\mX_\vj K)\otimes (\mX_\vj V)$ for all $\vj$, and the $k$-IGN layer computes $\sum_\vj \phi(\mX_\vj K)\otimes (\mX_\vj V)$.

    Under the compactness assumption of inputs and weight matrices (stated in \cref{SubsubsecNotation}), we consider the following construction of two $k$-IGN layers along with residual connection and output MLPs $\psi$. Mathematically,

    \begin{equation}
        \mX^{(new)}_\vi := \psi\big(\mX_\vi, \sum_\vj \mX_\vj w + b, \sum_\vj \phi(\mX_\vj K)\otimes (\mX_\vj V)\big)=\frac{\Big(\phi(\mX_\vi Q)^\top \sum_\vj\big( \phi(\mX_\vj K) \otimes (\mX_\vj V)\big) \Big)^\top}{\phi(\mX_\vi Q)^\top \sum_\vl \phi(\mX_\vl K)}
    \end{equation}
    
    where the first equality is our construction and the second equality is the target that we aim to approximate arbitrarily well with $O(1)$ width and $O(1)$ depth. Analogously to \citep{ConnectionMPNNGT}, by the uniform continuity of the functions, it suffices to show that 1) we can approximate $\phi$, 2) we can approximate multiplication and vector-scalar division, 3) the denominator $\phi(\mX_\vi Q)^\top \sum_\vl \phi(\mX_\vl K)$ is uniformly lower bounded by a positive number for any node features.

    For 1), each component of $\phi$ (in both Performer and Linear Transformer)is continuous, and all inputs $\mX_\vj Q, \mX_\vj K$ lie in compact domain. Therefore, $\phi$ can be approximated arbitrarily well by MLP with $O(1)$ width and $O(1)$ depth~\citep{Approximationsuperposition}.

    For 2), since multiplication and vector-scalar division operations are all continuous, it suffices to show that all operands lie in a compact domain. This is true since $\mX^{(0)}$ and $Q,K,V$ are all compact, $\phi$ is continuous and that $n$ is fixed. Lastly, since all these operations do not involve $n$, the depth and width of MLPs are constant in $n$.

    For 3), we need to show that the denominator is bound by a positive constant. In Performer, $\phi(\mX)=\frac{\exp \big(-\frac{||X||_2^2}{2}\big)}{\sqrt m}[\exp(\mathbf w_1^T\mX),\dots, \exp(\mathbf w_m^T\mX)]$ where $\mathbf w_k\sim \mathcal N(0,I_d)$. As $||\mathbf w_i^T\mX||\leq ||\mathbf w_i||\cdot ||\mX||$, which implies that $\exp (\mathbf w_i^T \mX)$ is lower bounded by $\exp (-||\mathbf w_i||\cdot ||\mX||)$. Consequently, the demonimator $\phi(\mX_\vi Q)^\top \sum_\vl \phi(\mX_\vl K)$ is lower bounded. For Linear Transformer, the proof is essentially the same as Performer. It boils down to showing that $\phi(\mathbf x)=elu(\mathbf x)+1$ is continuous and positive, which is indeed the case.

    In conclusion, $k$-IGN along with residual connection and output MLPs can approximate kernelized $\mathcal A_k$ with Linear Transformer or Performer architectures arbitrarily well.


    For the second part that kernelized $\mathcal A_k$ is strictly less expressive than $k$-IGN, recall that we have already shown that even $\mathcal A_k$ without kernelization is strictly less expressive than $k$-WL (and thus $k$-IGN). Kernelized $\mathcal A_k$ (either with Linear Transformer or Performer architectures) cannot be more expressive than $\mathcal A_k$ without kernelization as we can always find a set of parameters of $\mathcal A_k$ to implement a given kernelized $\mathcal A_k$. Consequently, kernelized $\mathcal A_k$ with Linear Transformer or Performer architectures is strictly less powerful than $k$-IGN. 
\end{proof}

Note that Linear equivalence layers in \cref{kIGN-homo} augmented with input and output MLPs suffices a Sumformer~\citep{Sumformer} by definition,
which can simulate $\mathcal A_k$ (without kernelization) within $O(1)$ depth and $\big({}^{n^k+d}_{\ \ d} \big)-1$ width for continuous function~\citep{Sumformer}. The same conclusion can be drawn using the concept of Deepset~\citep{DeepSet, UniversalEquiSet}. Although the computation complexity is not practical (note that $k$-IGN and $k$-PPGN require the same complexity to achieve full expressivity as well~\citep{PPGN}), the theoretical implication is interesting: it verifies our previous conclusion that $\mathcal A_k$ cannot be more expressive than $k$-IGN.

\subsubsection{Sparse Attention Mechanisms} \label{SubsubsecProofSparse}

As we emphasize in the main texts, the plain global attention $\mathcal A_k$ has limited expressive power due to the homogeneous dense attention scheme (\cref{Theorem_Akk<kWL}). We now provide proofs for the expressive power of our proposed sparse attention mechanisms.

\paragraph{Neighbor Attention.} Analogous to $k$-WL, neighbor attention mechanism benefits from locality and sparsity, revealing expressive power with a lower bound of $k$-WL. We first restate the definition of neighbor atention,
\begin{equation}
    \Big(\mathcal A_k^{\mathsf {Ngbh}}(\mX, \mX)\Big)_{\vi}={\rm Concat}\bigg[{\rm softmax} \Big((x_\vi Q^j) \big(x_{[\psi_j(\vi, u)|u\in[n]]} K^j\big)^\top \Big) \big(x_{[\psi_j(\vi, u)|u\in[n]]} V^j\big) \Big| j\in[k] \bigg]
\end{equation}
where $\psi_j(\vi, u)$ means replacing the $j$-th element in $\vi$ with $u$, and thus $x_{[\psi_j(\vi, u)|u\in[n]]}\in \mathbb R^{n\times d},\ j\in[k]$ is just the feature of the $j$-th neighbor of tuple $\vi$.

\begin{theorem}
    {\rm (\cref{TheoremNeighborAttn>=kWL} in main text.)} Neighbor attention $\mathcal A_k^{\mathsf {Ngbh}}$ with residual connection, output MLPs and $k$ heads is as powerful as $k$-WL.
\end{theorem}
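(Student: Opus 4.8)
The plan is to establish both directions of the equivalence: an upper bound (neighbor attention is no more powerful than $k$-WL) and a lower bound (it can simulate $k$-WL). The crucial structural observation is that the $j$-th head of $\mathcal A_k^{\mathsf{Ngbh}}$ attends \emph{exactly} to the $j$-th neighbor $\mathcal N_j(\vi)=\mset{\psi_j(\vi,u)\mid u\in[n]}$, so the $k$ heads are in bijection with the $k$ ordered neighbor multisets that one $k$-WL iteration aggregates (recall \cref{equation_kWL}). The architecture is therefore already ``shaped like'' the $k$-WL update, and---unlike \cref{theorem_Akk=kWL}---no tuple indices are needed, since the neighbor partition is baked into the attention pattern itself.

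For the lower bound I would simulate one $k$-WL iteration with a single neighbor-attention layer, following the PMP / polynomial-function technique (Remarks 1--6). For each head $j$, I force the softmax to be uniform by choosing $Q^j,K^j$ so that every query--key score coincides (e.g.\ $Q^j=K^j=0$); the head output for query $\vi$ then equals $\frac1n\sum_{u\in[n]}\tau(x_{\psi_j(\vi,u)})$, where the injective polynomial map $\tau$ of Remark 5 is realized by an input MLP composed with $V^j$. Because $n$ is fixed, the factor $\frac1n$ is harmless and this sum injectively encodes the multiset $\mathcal N_j(\vi)$. Concatenating over $j=1,\dots,k$ reproduces the ordered tuple of $k$ neighbor multisets; the residual connection supplies the tuple's own previous color $\mC_\vi^{t}$; and the output MLP realizes the injective Hash of \cref{equation_kWL}. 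Induction over layers shows that $T$ stacked layers reproduce $T$ $k$-WL iterations, and Remark 6 turns the final tuple colors into a permutation-invariant histogram, so $\mathcal A_k^{\mathsf{Ngbh}}$ is at least as powerful as $k$-WL.

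For the upper bound I would argue by induction on depth that the representations produced by $\mathcal A_k^{\mathsf{Ngbh}}$ are refined no more finely than the $k$-WL colors. The key point is that softmax attention is invariant to permutations of the keys, so the output of head $j$ on query $\vi$ is a function solely of $x_\vi$ and the \emph{multiset} $\mset{x_{\psi_j(\vi,u)}\mid u\in[n]}$ of $j$-th-neighbor features. Hence two tuples with equal current features and, for every $j$, equal $j$-th-neighbor feature multisets receive identical updates---exactly the refinement relation defining one $k$-WL step. Consequently, for any parameter choice and any number of layers, whenever $k$-WL assigns two tuples the same color they carry the same representation, so $\mathcal A_k^{\mathsf{Ngbh}}$ cannot separate graphs that $k$-WL cannot. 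Together with the lower bound this yields the claimed equivalence.

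The main obstacle is the softmax normalization in the lower-bound direction: softmax produces a convex combination rather than a genuine sum, which a priori could collapse distinct neighbor multisets to the same value. I resolve this by driving the attention to uniform weights and exploiting the fixed node count $n$ to rescale, so that the aggregation reduces to $\sum_u\tau(\cdot)$ and the injectivity guarantee of the power-sum multisymmetric polynomials (Remark 5) applies unchanged; alternatively, replacing softmax by the ReLU construction of \cref{theorem_Akk=kWL} yields an exact rather than arbitrarily-good simulation. The remaining steps---injective hashing via concatenation and permutation-invariant histogramming---are routine given Remarks 3 and 6.
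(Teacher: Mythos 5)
Your proposal is correct and follows essentially the same route as the paper's proof: the lower bound is obtained by forcing the per-head softmax to be uniform so that each head reduces to an injective (PMP-based) sum over its $k$-neighbor multiset, with concatenation, the residual connection, and the output MLP realizing the ordered set, the previous color, and the hash of \cref{equation_kWL_color}; the upper bound is the observation that each head's output depends only on the query feature and the multiset of its $j$-th-neighbor features, so the update is always an instance of the $k$-WL refinement. Your treatment of the upper bound is somewhat more explicit than the paper's one-line remark, but the underlying argument is the same.
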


\begin{proof}
    We first show that $\mathcal A_k^{\mathsf {Ngbh}}$ simulates $k$-WL by constructing a set of parameter weights. Define
    \begin{align}
        x_\vi Q^j=\vec 1,\vi\in \mathbb R^{n^k},j\in[k]\\
        x_\vi K^j=\vec 1,\vi\in \mathbb R^{n^k},j\in[k]
    \end{align}
    where $Q^j, K^j, V^j (j\in[k])$ are weight parameters of the $j$-th head. Note again that we always allow bias terms for $Q,K,V$ matrices in practical transformers; thus, the above equation can be easily obtained by setting weight matrices as zero matrices and setting the bias as $\vec 1\in \mathbb R^{d_k}$, where $d_k$ is the hidden dimension of query and keys which is a constant.
    
    Then the neighbor attention reduces to
    \begin{equation}
        \Big(\mathcal A_k^{\mathsf {Ngbh}}(\mX, \mX)\Big)_{\vi}={\rm Concat}\bigg[\frac{1}{n} \sum_{u\in[n]} x_{\psi_j(\vi, u)} V^j \Big| j\in[k] \bigg]
    \end{equation}
    In graph-level isomorphism task, we consider a fixed graph pair, thus $n$ is a fixed constant for our interested graphs, which will not affect the ability to implement hash function. Therefore, the neighbor attention aggregates information from tuples in $j$-th $k$-neighbor through the $j$-th head, for $j\in[k]$ respectively, which is exactly what $k$-WL does in \cref{equation_kWL}. Then according to \citep{PPGN} (and similar to the techniques we used in \cref{SectionTheory}), the concat operation, residual connection and output MLPs implement an ordered set $()$, $c_k^{t-1}(\vv, G)$ and the hash function in \cref{equation_kWL_color} (in the main text) or \cref{equation_kWL}, thus completely simulating $k$-WL with proper constructions. 

    The other side is because the attention described in \cref{equation_neighbor_attention} (in the main text) is always an instance of \cref{equation_kWL_color} in the main text as long as we restrict the reception field to the $k$-neighbors.

\end{proof}

In other words, $\mathcal A_k^{\mathsf {Ngbh}}$ is actually a form of $k$-WL implementation. Although we need more layers of $\mathcal A_k^{\mathsf {Ngbh}}$ (and more $k$-WL iterations) to simulate one layer $k$-IGN, the neighbor attention is much lower in complexity while having the same expressive power. Particularly, since one $\mathcal A_k^{\mathsf {Ngbh}}$ layer is equivalent to one $k$-WL iteration, and that $(k-1)$ iterations of $k$-WL simulates one $k$-IGN layer~\citep{ExpressivepowerkIGN}. Therefore, we need $(k-1)$ layers of $\mathcal A_k^{\mathsf {Ngbh}}$ to simulate one $k$-IGN layer, leading to $O(n^{k+1}k(k-1)d)$ complexity (recall that in $\mathcal A_k^{\mathsf {Ngbh}}$ each $k$-tuple computes attention with $k$ $k$-neighbors); it is still more efficient than one $k$-IGN with $O(n^{2k}d)$ complexity, though. Additionally, as we discuss in the main text, although attention does not increase expressive power compared to $k$-WL, it may improve real-world task performance due to more flexible attention-based aggregation compared to naive homogeneous aggregation. A natural extension would be a dense $\mathcal A_k$ with complete tuple adjacent information or equivalence class basis is as powerful as $k$-WL and thus $k$-IGN, yet the attention may lead to stronger real-world performance. 

\paragraph{Local Neighbor Attention.} Now we move to the local neighbor attention $\mathcal A_k^{\mathsf {LN}}$. Before we state our attention mechanism, we first give a brief summary of the algorithms in \citep{WLgoSparse}.

\citet{WLgoSparse} proposed a family of $\delta\text{-}k$-dimensional WL algorithms, which is more powerful than $k$-WL. Additional to the $k$-neighbor in $k$-WL, $\delta\text{-}k$-WL augments each $j$-th $k$-neighbor with the connectivity between the node being replaced $\vi_j$ and the $n$ nodes replacing it. Formally,
\begin{equation}\label{equation_deltakWL}
    \delta\text{-}k\text{-}{\rm WL:} \mC_{\vi}^{t+1}={\rm hash}\Bigg(\mC_{\vi}^t, \bigg(\mset{\mC_{\vj}^t, {\rm adj}(\vi_j, \vu_j)\big| \vu \in \mathcal N_j(\vi)}\Big|j\in [k] \bigg)\Bigg)
\end{equation}

Based on $\delta\text{-}k$-WL, we can easily extend $\mathcal A_k^{\mathsf {Ngbh}}$ to $\mathcal A_k^{\mathsf {Ngbh+}}$ by incorporating additional structural information ${\rm adj}(\vi_j, \vu_j)$ via attention reweighting or attention bias, see \cref{SubsecImplementationAppendix} for more details. Our $\mathcal A_k^{\mathsf {Ngbh+}}$ can be regarded as the attention version of $\delta\text{-}k$-WL and they are equivalent in expressive power, while both are strictly more expressive than $k$-WL and $\mathcal A_k^{\mathsf {Ngbh}}$. Our experiments on both synthetic datasets and real-world datasets verify the advantage of $\mathcal A_k^{\mathsf {Ngbh+}}$ over $\mathcal A_k^{\mathsf {Ngbh}}$: the former universally reveal better performance without increasing much computation cost: both $\mathcal A_k^{\mathsf {Ngbh}}$ and $\mathcal A_k^{\mathsf {Ngbh+}}$ have $O(n^{k+1})$ complexity regarding $n$.

Now we continue to explain our local neighbor attention mechanism. Based on $\delta\text{-}k$-WL, \citet{WLgoSparse} further proposed a local variant of WL, namely $\delta\text{-}k$-LWL, which defines the $j$-th \textit{local neighbor} of tuple $\vi$:

\begin{equation}
    \mathcal N_j^{Local}(\vi)=\mset{\psi_j(\vi, v)|v\in N(\vi_j)}
\end{equation}

where $\psi_j(\vi, v)$ means replacing the $j$-th element in $\vi$ with $v$, and $N(\vi_j)$ refers to the (one-dimensional) neighbors of node $\vi_j$. $\delta\text{-}k$-LWL is defined as
\begin{equation}
    \label{equation_deltakLWL}
    \delta\text{-}k\text{-}{\rm LWL:} \mC_{\vi}^{t+1}={\rm hash}\Bigg(\mC_{\vi}^t, \bigg(\mset{\mC_{\vj}^t \big| \vj \in \mathcal N_j^{Local}(\vi)}\Big|j\in [k] \bigg)\Bigg)
\end{equation}

\citet{WLgoSparse} showed that $\delta\text{-}k$\text{-}WL is at least as powerful than $\delta\text{-}k$\text{-}LWL (the gap would be a histogram of colors of full $k$-neighbor, as $\delta\text{-}k$\text{-}WL is exactly as powerful as $\delta\text{-}k$\text{-}LWL+, while the latter incorporates histogram of $k$-neighbor colors compared with $\delta\text{-}k$\text{-}LWL. Interested readers please refer to \citet{WLgoSparse} for more details). While $\delta\text{-}k$\text{-}WL is strictly more powerful than $k$-WL, the relation between $\delta\text{-}k$\text{-}LWL and $k$-WL is unknown due to the aforementioned gap. However, a significant advantage of $\delta\text{-}k$\text{-}LWL over $k$-WL is its lower complexity due to the sparsity of local neighbors compared with (full or global) $k$-neighbors, reducing the complexity from $O(n^{k+1})k$ to $O(n^{k}\bar D k)$, where $\bar D$ is the average node degree in the graph.

And now we are going to show that our local neighbor attention is at least as powerful as $\delta\text{-}k$-LWL. Recall that the ($k$-head) local neighbor attention $\mathcal A_k^{\mathsf {LN}}$ is defined as
\begin{equation}
    \Big(\mathcal A_k^{\mathsf {LN}}(\mX, \mX)\Big)_{\vi}={\rm Concat}\bigg[{\rm softmax} \Big((x_\vi Q^j) \big(x_{[\psi_j(\vi, u)|u\in N(\vi_j)]} K^j\big)^\top \Big) \big(x_{[\psi_j(\vi, u)|u\in N(\vi_j)]} V^j\big) \Big| j\in[k] \bigg]
\end{equation}

where $N(\vi_j)$ denotes the set of neighbors of the $j$-th element in tuple $\vi$, and $\psi_j(\vi, u)$ still refers to replacing the $j$-th element in $\vi$ with $u$.

\begin{theorem}
    {\rm (\cref{Theorem_LocalNA>=deltakLWL} in main text.)} Local neighbor attention $\mathcal A_k^{\mathsf {LN}}$ with residual connection, output MLPs and $k$ heads is at least as powerful as $\delta\text{-}k$-LWL.
\end{theorem}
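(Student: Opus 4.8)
The plan is to mirror the simulation argument used for $\mathcal A_k^{\mathsf{Ngbh}}$ in \cref{TheoremNeighborAttn>=kWL}, and to establish the bound by exhibiting one parameterization of $\mathcal A_k^{\mathsf{LN}}$ that reproduces a single iteration of $\delta\text{-}k$-LWL (\cref{equation_deltakLWL}). Simulating the update layer-by-layer then shows that any pair of graphs separated by $\delta\text{-}k$-LWL is also separated by $\mathcal A_k^{\mathsf{LN}}$, which is exactly the ``at least as powerful'' claim. Only this one direction is needed; we do not aim for equality, which is consistent with the locality gap discussed after \cref{equation_deltakLWL} (the local aggregation does not carry the full $k$-neighbor histogram, and conversely the non-uniform attention weights could in principle distinguish more).

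First I would collapse the attention into a uniform average over each local neighborhood. Following the construction in \cref{TheoremNeighborAttn>=kWL}, set the query and key weight matrices of every head to zero and their bias terms to a constant vector, so that for the $j$-th head all scores $(x_\vi Q^j)(x_{\psi_j(\vi,u)}K^j)^\top$ are equal over $u\in N(\vi_j)$. The row-wise softmax then outputs the uniform weight $1/D(\vi_j)$ on each member of $\mathcal N_j^{Local}(\vi)$, reducing the $j$-th head to $\frac{1}{D(\vi_j)}\sum_{\vj\in\mathcal N_j^{Local}(\vi)} x_\vj V^j$. Composing $V^j$ with an input MLP realizing the injective polynomial transform $\tau$ of Remark 5, the head computes $\frac{1}{D(\vi_j)}\sum_{\vj} \tau(\mC_\vj^t)$. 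The $k$ heads handle the $k$ replacement positions; concatenating their outputs in the fixed order $j=1,\dots,k$ realizes the ordered tuple over $j\in[k]$, the residual connection carries the previous color $\mC_\vi^t$, and the output MLP implements the final hash of \cref{equation_deltakLWL}.

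The main obstacle is the normalization factor. In the $k$-WL simulation the softmax produced a constant $1/n$, which left injectivity of the PMP encoding intact; here the factor $1/D(\vi_j)$ depends on the degree of the replaced node and varies across tuples. A plain average is not an injective multiset encoding (multisets of different cardinalities can share an average), so the head output need not determine the multiset $\mset{\mC_\vj^t \mid \vj\in\mathcal N_j^{Local}(\vi)}$. The fix I would use is to recover and cancel this factor: the degree $D(\vi_j)$ is a permutation-invariant quantity, so I supply the constituent-node degrees as part of the (permutation-invariant) tuple features, and let the output MLP multiply the averaged head output by $D(\vi_j)$, restoring the genuine PMP sum $\sum_{\vj}\tau(\mC_\vj^t)$. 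By Remark 5 this sum injectively represents the local-neighbor multiset, so with injectivity restored the constructed layer realizes the $\delta\text{-}k$-LWL update exactly.

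Finally I would record the complexity: each of the $n^k$ tuples attends, in each of the $k$ heads, to a local neighborhood of average size $\bar D$, giving $O(n^k k \bar D d)$ per layer as claimed. Because the uniform-weight construction is only one admissible parameterization while genuine non-uniform attention is also available, the inequality is naturally stated as ``at least as powerful as'' rather than as an equivalence.
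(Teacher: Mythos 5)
Your proposal is correct and follows essentially the same route as the paper's proof: the same zero-weight/constant-bias construction collapsing the softmax to a uniform average over each local neighborhood, the same observation that the resulting $1/D(\vi_j)$ factor threatens injectivity of the multiset encoding, and the same fix of supplying node degrees as input features so the output MLP can cancel the normalization and recover the PMP sum (the paper also mentions replacing softmax with ReLU as an alternative, and notes the mean-aggregation version is already an instance of $\delta\text{-}k$-LGNN). No gaps.
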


\begin{proof}
    We use the same construction as in our proof for $\mathcal A_k^{\mathsf {Ngbh}}$. Define
    \begin{align}
        x_\vi Q^j=\vec 1\in \mathbb R^{d_k},\vi\in \mathbb R^{n^k},j\in[k]\\
        x_\vi K^j=\vec 1\in \mathbb R^{d_k},\vi\in \mathbb R^{n^k},j\in[k]\\
    \end{align}
    where $d_k$ is still the constant latent dimension. Then the local neighbor attention reduces to
    \begin{equation}
        \Big(\mathcal A_k^{\mathsf {LN}}(\mX, \mX)\Big)_{\vi}={\rm Concat}\bigg[\frac{1}{d(\vi_j)} \sum_{u\in N(\vi_j)} x_{\psi_j(\vi, u)} V^j \Big| j\in[k] \bigg]
    \end{equation}
    where $d(\vi_j)$ is the degree of node $\vi_j$. This is an instance of $\delta\text{-}k$-LGNN~\citep{WLgoSparse} with mean aggregation, which has the same expressive power as $\delta\text{-}k$-LWL~\citep{WLgoSparse}. 
    
    We can also provide proof for local neighbor attention with slight modifications under our theoretical framework as in \citep{PPGN}. According to Remark 5, if we want to keep the representation of multisets injective, a summation instead of the ``averaging'' operation is needed, i.e. we want to eliminate the $d(\vi_j)$ factor:
    \begin{equation}\label{equation_reduced_ln}
        \Big(\mathcal A_k^{\mathsf {LN}}(\mX, \mX)\Big)_{\vi}={\rm Concat}\bigg[\sum_{u\in N(\vi_j)} x_{\psi_j(\vi, u)} V^j \Big| j\in[k] \bigg]
    \end{equation}
    Several solutions are applicable to obatin \cref{equation_reduced_ln}: (1) replace the softmax in local neighbor attention with element-wise relu activation, so that the local neighbor attention changes to summation over tuples in each local neighbor; (2) attach node degrees as part of the input, so that the output MLPs can multiply $d(\vi_j)$ back to recover the summation operation. 
    Hence, the local neighbor attention exactly aggregate information from tuples in $j$-th local neighbor in the $j$-th head, for $j\in[k]$ respectively. Then according to Remark 3 and Remark 5, the concat operation, residual connection, and output MLPs completely simulate $\delta\text{-}k$-LWL in \cref{equation_deltakLWL} with proper constructions. 

\end{proof}

A direct corollary is that $\mathcal A_k^{\mathsf {LN}}$ is strictly less expressive than $\mathcal A_k^{\mathsf {Ngbh+}}$, since $\delta\text{-}k$-LWL is strictly less expressive than $\delta\text{-}k$-WL (while the latter is strictly more expressive than $k$-WL). However, the relation between $\mathcal A_k^{\mathsf {LN}}$ and $k$-WL is unknown: there are some non-isomorphic graph pairs that $\mathcal A_k^{\mathsf {LN}}$ can distinguish but $k$-WL fails (see the CSL experiment in our main text), while the other direction is still an open question.

\paragraph{Virtual Tuple Attention.} Finally we provide proofs for the virtual tuple attention mechanism. Recall the definition of virtual tuple attention $\mathcal A_k^{\mathsf {VT}}$ is
\begin{align}
    \mathcal A_k^{\mathsf {VT}}(\mX', \mX')_{n^k+1}&= {\rm softmax}\Big((x'Q^1) (\mX K^1)^\top\Big) \mX V^1   \\
    \mathcal A_k^{\mathsf {VT}}(\mX', \mX')_{\vi}&= x' V^{2}
\end{align}
where the feature of virtual tuple is denoted as $x'$, and the input is augmented to $\mX'=[\mX, x']\in\mathbb R^{(n^k+1)\times d}$.

\begin{proposition}
    {\rm (Proposition~\ref{Proposition_virtual_tuple=Linear_A_kk11} in main text.)} $O(1)$ depth and $O(1)$ width virtual tuple attention $\mathcal A_k^{\mathsf {VT}}$ can approximate $\mathcal A_k$ with Performer or Linear-Transformer architecture arbitrarily well. 
\end{proposition}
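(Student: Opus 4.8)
The plan is to exploit the structural observation, already emphasized in the paper, that a plain order-$k$ transformer treats its $n^k$ tuple tokens completely homogeneously: $\mathcal A_k^{\mathsf{VT}}$ carries no tuple indices and no equivalence-class basis, so as a token-level architecture it is indistinguishable from a first-order transformer acting on $n^k$ one-dimensional tokens. Consequently, $\mathcal A_k^{\mathsf{VT}}$ is exactly the ``simplified MPNN $+$ virtual node'' model of \citep{ConnectionMPNNGT} instantiated with $n^k$ tokens, and the approximation argument there transfers almost verbatim. First I would make this reduction precise, identifying the real tuples with the graph nodes of that first-order construction and the virtual tuple with the virtual node.

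Second, I would recall the self-normalizing form of kernelized $\mathcal A_k$ from \cref{kernelized_Ak}: for each query $\vi$,
\[
\mX_\vi = \frac{\bigl(\phi(\mX_\vi Q)^\top S\bigr)^\top}{\phi(\mX_\vi Q)^\top Z}, \qquad S = \sum_{\vj} \phi(\mX_\vj K)\otimes(\mX_\vj V), \qquad Z = \sum_{\vl}\phi(\mX_\vl K).
\]
The key point is that the two global aggregates $S$ and $Z$ are shared by all queries; only the cheap, query-local factor $\phi(\mX_\vi Q)$ differs. The virtual tuple is precisely the device that computes these shared aggregates. I would set the virtual tuple's query weights so that its softmax scores are uniform (e.g.\ $Q^1=0$ through the bias term), turning its aggregation into a mean $\tfrac{1}{n^k}\sum_\vj(\cdot)$; an input MLP preprocesses each real tuple into the stacked vector $\bigl[\phi(\mX_\vj K)\otimes(\mX_\vj V),\ \phi(\mX_\vj K)\bigr]$, so that a single value matrix $V^1$ collects both $S$ and $Z$ (up to the common factor $\tfrac{1}{n^k}$) into the virtual feature $x'$.

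Third, the broadcast step $\mathcal A_k^{\mathsf{VT}}(\mX',\mX')_\vi = x'V^2$ distributes $(S,Z)$ to every real tuple, and a residual connection preserves each query's own feature $\mX_\vi$. An output MLP at each real tuple then computes $\phi(\mX_\vi Q)$, forms the inner products $\phi(\mX_\vi Q)^\top S$ and $\phi(\mX_\vi Q)^\top Z$, and performs the vector-scalar division, reproducing the displayed formula; the common $\tfrac{1}{n^k}$ cancels between numerator and denominator, so the softmax normalization of the virtual tuple is harmless. Exactly as in the proof of \cref{Theorem_kIGN>linearA_kk11}, all of these are continuous maps on compact domains (guaranteed by the compactness assumptions of \cref{SubsubsecNotation}), so each can be approximated to arbitrary accuracy by an MLP whose width and depth are independent of $n$, giving the claimed $O(1)$-depth, $O(1)$-width construction.

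The main obstacle is the same delicate point as in \citep{ConnectionMPNNGT}: establishing that the division is a uniformly continuous operation, which requires a uniform positive lower bound on the denominator $\phi(\mX_\vi Q)^\top Z$. For both architectures this follows from the strict positivity of the feature map --- $\phi(\mathbf x)=\mathrm{elu}(\mathbf x)+1>0$ for the Linear Transformer, while each coordinate $\exp(\mathbf w_r^\top \mX)$ of the Performer map is bounded below by $\exp\bigl(-\|\mathbf w_r\|\,\|\mX\|\bigr)>0$ on the compact input domain --- so that $\phi(\mX_\vi Q)^\top Z$ stays bounded away from zero uniformly over all inputs. Once this lower bound is in hand, the uniform continuity of the composed operations closes the argument.
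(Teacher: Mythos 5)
Your proposal is correct and follows essentially the same route as the paper's proof: reduce $\mathcal A_k^{\mathsf{VT}}$ to the simplified MPNN\,$+$\,virtual node of \citep{ConnectionMPNNGT} on $n^k$ homogeneous tokens, use one aggregation layer at the virtual tuple to collect the shared sums $S$ and $Z$ (with an input MLP producing the stacked feature), broadcast back with a residual connection, and close with the same three-part continuity argument (approximating $\phi$, multiplication and division on compact domains, and the positive lower bound on the denominator for both Performer and Linear Transformer). The only cosmetic difference is that you cancel the softmax's $\tfrac{1}{n^k}$ factor between numerator and denominator, whereas the paper absorbs it into the output MLP using that $n$ is fixed; both are fine.
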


\begin{proof}
    As stated before, all tuples are treated as one single equivalence class in $\mathcal A_k$, its calculation in the Performer or Linear-Transformer architecture is the same as the standard transformer $\mathcal A_1$ except for the number of tuples. Therefore, $\mathcal A_k$ with kernel tricks can be directly reformed to the normal Performer or Linear-Transformer with $n^k$ inputs from the same equivalence class. Hence the $O(1)$ depth and $O(1)$ width simplified MPNN + virtual tuple~\citep{ConnectionMPNNGT} operating on $n^k$ input tokens can approximate $\mathcal A_k$ with Performer or Linear-Transformer architecture arbitrarily well. As in \citep{ConnectionMPNNGT}, here 'simplified' suggests that we ignore message passing within real node neighbors, and the update function is parameterized with heterogeneous parameters for virtual tuples and real tuples. However, all real tuples are treated as the same equivalence class.
    
    Therefore, we only need to show that virtual tuple attention can recover simplified message passing neural networks (operating on homogeneous $k$-tuples) + virtual tuple. To this end, we simply define
    \begin{equation}
        x'Q^1=1, \mX K^1=\vec 1_{n^k}
    \end{equation}

    The virtual tuple attention reduces to 
    \begin{align}
        \mathcal A_k^{\mathsf {VT}}(\mX', \mX')_{n^k+1}&= \frac{1}{n^k} \sum_{i=1}^{n^k} \mX_i V^1   \\
        \mathcal A_k^{\mathsf {VT}}(\mX', \mX')_{\vi}&= x' V^{2}
    \end{align}

    When the above update function is augmented with input and output MLPs, it is exactly reduced to the update function of the simplified MPNN + virtual tuple with mean aggregation function~\citep{ConnectionMPNNGT} which treats all tuples as one equivalence class. 

    Denote input features as $\mX^{(0)}$, the kernelized $\mathcal A_k$ computes (the same as in \cref{kernelized_Ak})
    \begin{equation}
        \mX_{i}^{(new)}= \frac{\Big(\phi(\mX_i^{(0)} Q)^\top \sum_{j=1}^{n^k}\big( \phi(\mX_j^{(0)} K) \otimes (\mX_j^{(0)} V)\big) \Big)^\top}{\phi(\mX_i^{(0)} Q)^\top \sum_{l=1}^{n^k} \phi(\mX_l^{(0)} K)}
    \end{equation}

    Under the compactness assumption of the inputs and weight matrices (stated in \cref{SubsubsecNotation}), we consider the following construction of two layers of virtual tuple attention (along with the residual connection and MLP). Intuitively, in the first layer (denoted by $(1)$), we first process each real node with an input MLP $\theta$ to compute $\theta(\mX_j^{(0)}):={\rm ReshapeTo1D}\big(\phi(\mX_j^{(0)} K) \otimes (\mX_j^{(0)} V)\big)$; then through the first attention computation, the virtual tuple has the feature $x'^{(1 )}=\frac{1}{n^k} \sum_{i=1}^{n^k} [\mX_i^{(0)}, \theta(\mX_i^{(0)})] V^{1(1)}$, where $[\cdot, \cdot]$ means concatenation, and the the input dimension for $V^{1(1)}$ would be $(m+1)d$ instead of $d$. We will show that along with the output MLPs $\psi^{(1)}$ and the feature $x'^{(1)}$, the virtual tuple can approximate $\sum_{j=1}^{n^k}\big( \phi(\mX_j^{(0)} K) \otimes (\mX_j^{(0)} V)\big)$ and $\sum_{l=1}^{n^k} \phi(\mX_l^{(0)} K)$. Then in the second layer denoted by $(2)$, the virtual tuple sends the message back to each real tuple, and each real tuple $\vi$ can approximate \cref{kernelized_Ak} along with the feature $x'^{(1)}V^{2(2)}$, $\mX_\vi^{(0)}$ (via residual connection) and the output MLPs $\psi^{(2)}$.

    In detail, in the first layer along with the MLPs $\psi^{(1)}$, the virtual tuple updates as follows,
    \begin{equation}
        x'^{(1)}:=\psi^{(1)}(\frac{1}{n^k} \sum_{i=1}^{n^k} [\mX_i^{(0)}, \theta(\mX_i^{(0)})] V^{1(1)})=\Big[\sum_{l=1}^{n^k} \phi(\mX_l^{(0)} K), {\rm ReshapeTo1D} \big(\sum_{j=1}^{n^k} \phi(\mX_j^{(0)} K) \otimes (\mX_j^{(0)} V)\big)\Big]
    \end{equation}
    where $\phi(\mX_j^{(0)} K) \otimes (\mX_j^{(0)} V)\in \mathbb R^{md'}$ is reshaped into $1$-dimensional feature vector by ${\rm ReshapeTo1D}$ in raster order. Therefore, the final dimension of the virtual tuple feature $x'^{(1)}$ is $m(d'+1)$. 

    In the second layer, each real tuple $\vi$ receives information $x'^{(1)}$ from the virtual tuple. The residual connection reserve the input feature $\mX_\vi^{(0)}$, then with MLPs $\psi^{(2)}$
    \begin{equation}
        \mX_\vi^{(2)}:=\psi^{(2)}(\mX_\vi^{(0)}, x'^{(1)}V^{2(2)})=\frac{\Big(\phi(\mX_i^{(0)} Q)^\top \sum_{j=1}^{n^k}\big( \phi(\mX_j^{(0)} K) \otimes (\mX_j^{(0)} V)\big) \Big)^\top}{\phi(\mX_i^{(0)} Q)^\top \sum_{l=1}^{n^k} \phi(\mX_l^{(0)} K)}
    \end{equation}
    
    We need to show that the above equations can be approximated arbitrarily well by MLPs $\psi^{(1)}, \psi^{(2)}$ with $O(1)$ width and $O(1)$ depth. By the uniform continuity of the functions, it suffices to show that 1) we can approxiate $\phi$, 2) we can approximate multiplication and vector-scalar division, 3) the denominator $\phi(\mX_i^{(0)} Q)^\top \sum_{l=1}^{n^k} \phi(\mX_l^{(0)} K)$ is uniformly lower bounded by a positive number for any node features.

    For 1), each component of $\phi$ (in both Performer and Linear Transformer)is continuous, and all inputs $\mX_j^{(0)} Q, \mX_j^{(0)} K$ lie in compact domain. Therefore, $\phi$ can be approximated arbitrarily well by MLP with $O(1)$ width and $O(1)$ depth~\citep{Approximationsuperposition}.

    For 2), since multiplication and vector-scalar division operations are all continuous, it suffices to show that all operands lie in a compact domain. This is true since $\mX^{(0)}$ and $Q,K,V$ are all compact, $\phi$ is continuous, and $n$ is fixed. Lastly, since all these operations do not involve $n$, the depth and width of the MLPs are constant in $n$.

    For 3), we need to show that the denominator is bound by a positive constant. In Performer, $\phi(\mX)=\frac{\exp \big(-\frac{||X||_2^2}{2}\big)}{\sqrt m}[\exp(\mathbf w_1^T\mX),\dots, \exp(\mathbf w_m^T\mX)]$ where $\mathbf w_k\sim \mathcal N(0,I_d)$. As $||\mathbf w_i^T\mX||\leq ||\mathbf w_i||\cdot ||\mX||$, which implies that $\exp (\mathbf w_i^T \mX)$ is lower bounded by $\exp (-||\mathbf w_i||\cdot ||\mX||)$. Consequently, the demonimator $\phi(\mX_i^{(0)} Q)^\top \sum_{l=1}^{n^k} \phi(\mX_l^{(0)} K)$ is lower bounded. 
    
    For Linear Transformer, the proof is essentially the same as Performer. It boils down to showing that $\phi(\mathbf x)=elu(\mathbf x)+1$ is continuous and positive, which is indeed the case.
    
\end{proof}

\section{SIMPLICIAL TRANSFORMERS}\label{SectionSimplicialTransformer}

In this section, we will detail the theoretical properties and our designs for simplicial transformers. This part provides results additional to the main text (where we do not discuss simplicial transformers in detail due to limited space), yet can be regarded as a natural extension. Simplicial complex is a different concept compared to the graph, which we will introduce in detail below. However, as discussed in the main text, $k$-simplices are generally more sparse than $k+1$-tuples and are always a subset of the latter. Hence, simplices can be regarded as a result of sampling tuples, and the simplicial (complexes) transformer is a sparse variant of the tuple-based transformer we discussed in the main text. Moreover, the sparse attention mechanisms and the design principles we proposed for tuple-based transformers can be naturally extended to simplicial transformers with slight modifications, resulting in more efficient models that are still powerful.

\subsection{Background of Algebraic Topology Theories}\label{SubsecHodgeTheory}

To provide readers with basic knowledge related to simplicial complexes, we introduce some existing fundamental algebraic topology theories in this subsection.


An abstract simplicial complex, denoted as $\mathcal{K}$, is defined over a finite set $V$, which comprises subsets of $V$ adhering to the property of closure under inclusion. Specifically, $V$ represents a set of vertices, denoted by $[n] = {1, 2, \ldots, n}$. A subset within $\mathcal{K}$, having cardinality $k+1$, is termed a $k$-simplex. To illustrate, vertices are $0$-simplices, directed edges are $1$-simplices, and oriented triangles or 3-cliques are $2$-simplices. The set of all $k$-simplices in $\mathcal{K}$ is represented as $S_k(\mathcal{K})$. A $k$-simplices has a dimension of $k$, and the dimension of the complex $\mathcal{K}$ itself is the maximum dimension among all its faces.

When two $(k+1)$-simplices share a common $k$-face (a subset of a simplex), they are termed as $k$-down neighbors. Conversely, two $k$-simplices that share a $(k+1)$-simplex are known as $(k+1)$-up neighbors. Furthermore, a $k$-cochain or $k$-form is a function defined on $\mathcal K_{k+1}$, $f:V\times \dots \times V\rightarrow \mathbb R$ that is equivariant to the permutation. Although $k$-cochains have the structure of vector spaces, they are usually called cochain groups $\mathcal C^k(\mathcal K,\mathbb R)$. Chain groups $\mathcal C_k(\mathcal K,\mathbb R)$ are defined as duals of co-chain groups. 

The simplicial coboundary maps $\delta_k: \mathcal C^{k}(\mathcal K,\mathbb R)\rightarrow \mathcal C^{k+1}(\mathcal K,\mathbb R) $ is defined as
\begin{equation}
    (\delta_k f)([v_0,\dots,v_{i+1}])=\sum_{j=0}^{k+1}(-1)^j f([v_0, \dots, \hat V^j,\dots,v_{k+1}])
\end{equation}
where $\hat V^j$ suggests that the vertex $v_j$ is omitted. Further, we can define the adjoint of coboundary operator: $\delta_k^*: \mathcal C^{k+1}(\mathcal K,\mathbb R)\rightarrow \mathcal C^{k}(\mathcal K,\mathbb R)$.

Utilizing the concept of boundary and coboundary, the Hodge $k$-Laplacian operator (also called the combinatorial Laplace operator) is defined as:
\begin{equation}
    \mathbf L_k=\mathbf L_{k,down}+\mathbf L_{k,up}=\delta_{k-1}\delta_{k-1}^*+\delta_k^*\delta_k
\end{equation}
where we omit the reference to simplicial complex $\mathcal K$ from the notation for simplicity. By definition, all three operators $\mathbf L_k,\mathbf L_{k,up},\mathbf L_{k,down}$ are self-adjoint, nonnegative and compact. 

In the Hilbert space, the matrix representation for boundary and co-boundary operators are equivalent to adjacent matrix of $k$ and $k+1$ order simplices. We write the matrix representation for $\delta_k^*$ as $\mathbf B_{k+1}\in\mathbb R^{|S_{k}|\times |S_{k+1}|}$ (one can view it as the adjacent matrix of $k$-th and $k+1$-th simplices). Therefore, in this paper we use the following definition for Hodge Laplacians:
\begin{equation}
    \mathbf L_k=\mathbf B_k^*\mathbf B_k + \mathbf B_{k+1} \mathbf B_{k+1}^*
\end{equation}
where $\mathbf B_k^*=\mathbf B_k^T$ is the adjoint of $\mathbf B_k$, which is equivalent to the transpose of $\mathbf B_k$ in the Hilbert space. Specifically, when $k=0$, $\mathbf L_0$ is exactly the graph Laplacian $\mathbf L_0=\mathbf D-\mathbf A$. 


Furthermore, the Hodge Laplacian of the entire $K$-dimensional simplicial complex $\mathcal K$ is a block diagonal matrix $\mathbf L(\mathcal K)$, with the $k$-th block being $\mathbf L_k(\mathcal K)$ for $k=0,\dots,K$. If $\delta$ is the exterior derivative of a finite abstract simplicial complex $\mathcal K$, then
\begin{equation}
    \mathbf L(\mathcal K)=\mathbf D^2=(\delta+\delta^*)^2=\delta \delta^* +\delta^*\delta
\end{equation}
where $\mathbf D=\delta+\delta^*$ is the Dirac matrix.

\citet{HodgeRandomWalk} further defines the inter-order Hodge Laplacian for a $K$-order simplicial complex $\mathcal K$, denoted as $\mathcal L_{0:K}(\mathcal K)$. This generalized inter-order Hodge Laplacian contains information (Hodge Laplacians, boundaries, coboundaries) for all $k\in[K]$, which is defined as:

\begin{small}
\begin{equation}
    \mathcal L_{0:K}(\mathcal K)=\begin{bmatrix}\mathbf L_0 & \mathbf B_1 \\ \mathbf B_1^T & \mathbf L_1 & \mathbf B_2 \\ & ... & ... & ...\\ & & ... & ... & ...\\ & & & \mathbf B_{K-1}^T & \mathbf L_{K-1} & \mathbf B_K \\ & & & &  \mathbf B_K^T & \mathbf L_K \end{bmatrix}
\end{equation}
\end{small}

$\mathcal L_{0:K}(\mathcal K)$ contains information for all $k\leq K$ order simplices, which is a block matrix with $\mathbf L_k$ in the $k$-th diagonal block, $\mathbf B_{k}^T$ and $\mathbf B_{k+1}$ in the offset $\pm 1$ diagonal blocks, while all other blocks are zeros.

As discussed in \citep{HodgeRandomWalk}, a number of previous works such as \citep{MPSimplicialN} can be reformatted and unified by $\mathcal L_{0:K}$. We will show that simplicial transformer encoding $\mathcal L_{0:K}$ as attention bias can generalize simplicial convolutions in the next subsection. 

In addition, we can make use of $\mathcal L_{0:K}^r$ to build random walk-based positional encoding for all simplices in the $K$-dimensional simplicial complex that contains more information than random walks within the same order simplices. Similar to \citep{CWNetworks}, we can also introduce any form of local structure (such as rings and cycles) as expanded complex cells and perform random walk on them, which can greatly facilitate graph learning by incorporating higher order structures other than simplicial complexes (nodes, edges, triangles, and four-cliques).




\subsection{Simplicial Transformer with Global Attention}

As discussed in \cref{SubsecSimplicialNetworks}, although there are some networks defined on simplices and simplicial complexes, most of them are in convolutional or message passing manners~\citep{MPSimplicialN, ConvolutionalLearningOnSimplicial}. Other attention-based models all restrict their attention reception field within the range of boundaries, co-boundaries, upper and lower adjacent neighbors~\citep{SimplicialAttentionNet, SimplicialAttentionNeuralNet}, which can be viewed as weighted convolutional simplicial networks. 

In this subsection, we propose our novel full simplicial transformer, theoretically analyze the pros and cons brought by global reception field, and systematically discuss the design spaces of simplicial transformers.


\paragraph{Full Simplicial Transformer with Global Attention.}

The simplicial transformer with global attention is defined as follows.

\begin{definition}
    Denote $k$-simplices of an abstract simplicial complex $\mathcal K$ as $S_k(\mathcal K)$, whose corresponding features are $\mX\in\mathbb R^{|S_k(\mathcal K)|\times d}$. The $k$-dimensional simplicial self-attention is defined as $\mathcal {AS}_{k}$:
    \begin{equation}
        \mathcal {AS}_{k}(S_k(\mathcal K))={\rm softmax} \Big(\mX Q (\mX K)^\top \Big) \mX V
    \end{equation}
\end{definition}

where we use the notation $\mathcal {AS}_k$ to differentiate the transformer defined on $k$-simplices from the one defined on $k$-tuples in our previous sections. For simplicity, both the $k$-dimensional simplicial self-attention and the simplicial transformer (i.e. self-attention along with components such as residual connections and output MLPs) are denoted as $\mathcal {AS}_k$ due to their mild difference.

The above $\mathcal {AS}_k$ involves only $k$-simplices, yet it is natural to extend the input tokens to all simplices with order $k\leq K$ for a $K$-dimensional simplicial complex $\mathcal K$.

\begin{definition}
    Denote the set of all $k$-simplices of a $K$-dimensional abstract simplicial complex $\mathcal K$ where $k\leq K$ as $S_{0:K}(\mathcal K)$, whose corresponding features are $\mX \in \mathbb R^{\sum_{k=0}^K |S_k(\mathcal K)| \times d}$. The $0:K$-dimensional simplicial self-attention is defined as $\mathcal {AS}_{0:K}$:
    \begin{equation}
        \mathcal {AS}_{0:K}(S_{0:K}(\mathcal K))={\rm softmax} \Big(\mX Q (\mX K)^\top \Big) \mX V
    \end{equation}
\end{definition}

\paragraph{Hodge Laplacians as Attention Bias.}

Similar to the lost of connectivity information in standard graph transformer due to dense attention, one may observe that the dense simplicial transformer $\mathcal {AS}_k$ and $\mathcal {AS}_{0:K}$ are also unaware of the connectivity and structure information, including coboundaries, boundaries as well as upper and lower adjacent neighbors. To address this problem of standard (first order) graph transformer, a variety of models are proposed to encode structure information via different approaches, among which Graphormer~\citep{Graphormer} is a well known model. Concretely, Graphormer embeds node degrees, edge features and pair-wise shortest path distances to the $n\times n$ attention matrix. In its first order case, node degree and edge information can be summarized into the standard graph Laplacian $L_0$.

Now we generalize this design to the arbitrary-order simplicial transformer. For $k$-simplices, Hodge $k$ Laplacian $\mathbf L_k$ summarizes the upper adjacent and lower adjacent information into a $|S_k(\mathcal K)|\times |S_k(\mathcal K)|$ matrix, where $S_k(\mathcal K)$ is the number of $k$-simplices in $\mathcal K$. Therefore, $\mathcal {AS}_k$ and $\mathcal {AS}_{0:K}$ with attention biases are defined as:
\begin{align}
    \mathcal {AS}_{k}(S_k(\mathcal K))={\rm softmax} \Big(\mX_k Q (\mX_k K)^\top + \phi(\mathbf L_k) \Big) \mX_k V
\end{align}
\begin{align}\label{equation_simplicial_transformer_kernel}
    \mathcal {AS}_{0:K}(S_k(\mathcal K))={\rm softmax} \Big(\mX Q (\mX K)^\top+\phi(\mathbf L)\Big) \Big({\rm Concat}\big[(\mX_0 V^0)^\top, (\mX_1 V^1)^\top, \dots, (\mX_K V^K)^\top\big]\Big)^\top, \ k=0,\dots,K
\end{align}
where $\mX_k\in\mathbb R^{|S_k(\mathcal K)|\times d}$ is the feature of $S_k(\mathcal K)$ (i.e. the $k$-faces), $\mX\in\mathbb R^{\sum_{k=0}^K|S_k(\mathcal K)|\times d}$ is the concatenation of $\mX_k$, $\mathbf L$ is the Hodge Laplacian of $\mathcal K$, and $\mathbf L_k$ is the $k$-th order Hodge Laplacian; $\phi$ is an element-wise function, e.g. an identity function, an element-wise MLP etc. 

In addition, we also introduce an augmented Hodge Laplacian $\mathcal L_{0:K}$, see \cref{SubsecHodgeTheory} for more details. The key difference between $\mathcal L_{0:K}$ and the standard Hodge Laplacian $\mathbf L$ is that the former has co-boundary and boundary operators in the $\pm 1$ off-diagonal blocks, which are zeros in the latter.

Furthermore, by reweighting the attention using the Hodge Laplacians, we can recover the simplicial message passing networks. In particular,
\begin{align}\label{equation_simplicial_transformer_reweight}
    \mathcal {AS}_{0:K}(S_k(\mathcal K))=\Big({\rm softmax} \big(\mX Q (\mX K)^\top\big) \odot \phi(\mathbf L)\Big) \Big({\rm Concat}\big[(\mX_0 V^0)^\top, (\mX_1 V^1)^\top, \dots, (\mX_K V^K)^\top\big]\Big)^\top, \ k=0,\dots,K
\end{align}
where $\odot$ is the Hadamard product (element-wise product).
We will show the benefit of including Hodge-$k$ Laplacian in simplicial transformers in the following subsection, making connections between simplicial transformers and simplicial message passing networks, $\mathcal A_k$ defined on $k$-tuples as well as $k$-WL hierarchy. For each order $k$, the diagonal block $\mathbf L_k$ in $\mathbf L$ enable information aggregation from upper and lower adjacent neighbors, which are also $k$-simplices. However, the message cannot be passed among simplices of different orders through boundaries and coboundaries through $\mathbf L$, and $\mathcal L_{0:K}$ addresses this problem by introducing boundary and coboundary operators into off-diagonal blocks.

\subsection{Theoretical Analysis on Simplicial Transformers}

In this subsection, we present our theoretical results concerning simplicial transformers. Concretely, we establish the connections between our proposed simplicial transformers and two existing families of models: simplicial message passing networks and $k$-IGN ($k$-WL). We also put forward a spectral monotonicity result of the attention matrix of the simpilcial transformer, which gives more insights into the relationship between simplicial complex-based models and tuple-based models.

\paragraph{Simplicial Transformer Generalizes Simplicial Message Passing Networks.} We show that our simplicial transformer with global attention and Hodge Laplacian as attention bias is a more general version of simplicial networks in message-passing or convolution manners.

\begin{theorem}\label{TheoremSimplicialT>=MPSN}
    Simplicial transformers $\mathcal {AS}_{0:K}$ reweighted by augmented Hodge Laplacian $\mathcal L_{0:K}$ encodings (in \cref{equation_simplicial_transformer_reweight}) can approximate message passing and convolutional simplicial networks.
\end{theorem}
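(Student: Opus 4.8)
The plan is to exhibit, for each message-passing or convolutional simplicial network, a parameterization of the reweighted simplicial transformer of \cref{equation_simplicial_transformer_reweight} whose output approximates one layer of that network arbitrarily well. The key structural observation is that one layer of a message-passing simplicial network updates a $k$-simplex $\sigma$ by aggregating messages over four families of neighbors --- its boundaries (the $(k{-}1)$-faces), its coboundaries (the $(k{+}1)$-cofaces), and its lower- and upper-adjacent $k$-simplices --- and that the nonzero pattern of $\mathcal{L}_{0:K}$ is exactly adapted to these families. Indeed, for the row block indexed by $k$, the off-diagonal block $\mathbf{B}_k^\top$ connects $\sigma$ to its $(k{-}1)$-dimensional boundaries and the block $\mathbf{B}_{k+1}$ connects it to its $(k{+}1)$-dimensional coboundaries, while the diagonal block $\mathbf{L}_k=\mathbf{B}_k^\top\mathbf{B}_k+\mathbf{B}_{k+1}\mathbf{B}_{k+1}^\top=\mathbf{L}_{k,\mathrm{down}}+\mathbf{L}_{k,\mathrm{up}}$ connects it to its lower- and upper-adjacent $k$-simplices.

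First I would collapse the softmax factor to a constant. Setting the query and key weight matrices to zero and their bias terms to a common constant makes $\mX Q(\mX K)^\top$ a constant matrix, so its row-wise softmax is uniform and equals $1/N$ in every entry, where $N=\sum_{k=0}^K|S_k(\mathcal K)|$; this is the same device used in the proof of \cref{TheoremNeighborAttn>=kWL}. After the Hadamard product the effective weight with which a simplex $\tau$ contributes to $\sigma$ is $\tfrac{1}{N}\,\phi(\mathcal{L}_{0:K})_{\sigma\tau}$, so the product against the concatenated value tensor returns a weighted sum over precisely the simplicial neighbors of $\sigma$, in which a source simplex of order $j$ is mapped through its own value matrix $V^j$. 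Taking $\phi$ to be the identity (rescaled by $N$) reproduces the Hodge-Laplacian-weighted aggregation of a convolutional simplicial network verbatim, whereas taking $\phi$ to send every nonzero entry to $N$ and every zero to $0$ (composed with $|\cdot|$ to discard the orientation signs carried by the $\mathbf{B}$ blocks when an unoriented aggregation is desired) reproduces the unweighted multiset sum of MPSN. Because boundary and coboundary neighbors have distinct source orders $k{-}1$ and $k{+}1$, they are automatically processed by the different value maps $V^{k-1}$ and $V^{k+1}$; these, together with the input MLP, realize the MPSN message functions, while the residual connection supplies $h_\sigma^t$ and the output MLP realizes the update function. Injectivity of the summed neighbor representations follows from the power-sum/MLP argument already recorded in the Remarks of \cref{SubsubsecNotation}.

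The step I expect to be the main obstacle is disentangling the two families that share a source order, namely the lower- and upper-adjacent $k$-simplices, which are summed together inside the single diagonal block $\mathbf{L}_k$. A single-head, single-$\phi$ construction can only reproduce their combined contribution, which already suffices for convolutional networks and for MPSN variants that treat same-order neighbors jointly. To capture the fully general MPSN, in which the two message functions may differ, I would use multi-head attention and assign to two distinct heads the masks built from $\mathbf{L}_{k,\mathrm{up}}=\mathbf{B}_{k+1}\mathbf{B}_{k+1}^\top$ and $\mathbf{L}_{k,\mathrm{down}}=\mathbf{B}_k^\top\mathbf{B}_k$ --- both precomputable from the boundary blocks already present in $\mathcal{L}_{0:K}$ --- so that each head carries a single adjacency type and the concatenation of the heads followed by the output MLP realizes the full four-way update. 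The boundary and coboundary families need no such splitting, being already separated by the off-diagonal blocks and their distinct value maps.

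Finally, the remaining points are routine and parallel the proof of \cref{Theorem_kIGN>linearA_kk11}: under the compactness assumptions of \cref{SubsubsecNotation}, uniform continuity lets the message and update MLPs be approximated to any accuracy with $O(1)$ depth and width, the constant factor $1/N$ is absorbed into $\phi$ or the output MLP, and the orientation signs of the $\mathbf{B}$ blocks are handled by the choice of $\phi$. Composing these pieces shows that one layer of the reweighted $\mathcal{AS}_{0:K}$ approximates one layer of the target simplicial network arbitrarily well, and stacking layers yields the full network.
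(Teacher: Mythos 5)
Your proposal follows essentially the same route as the paper's proof: collapse the softmax to a uniform constant by zeroing the query/key weights, use $\phi(\mathcal L_{0:K})$ as a Hadamard mask so that each $k$-simplex aggregates only over the blocks $\mathbf B_k^\top$, $\mathbf L_k$, $\mathbf B_{k+1}$, and let the per-order value matrices $V^{k-1},V^k,V^{k+1}$ together with the residual connection and output MLP realize the MPSN message and update functions. Your additional care in splitting $\mathbf L_k$ into $\mathbf L_{k,\mathrm{up}}$ and $\mathbf L_{k,\mathrm{down}}$ across two heads (and in handling the orientation signs of the boundary blocks) actually patches a point the paper's construction glosses over, since the paper's single term $\mathbf L_k\mX_kW_k$ merges the upper- and lower-adjacent messages under one weight.
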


\begin{proof}
    \citet{MPSimplicialN} already proved that their Message Passing Simplicial Network and WL variant SWL generalizes convolutional simplicial networks such as \citep{SimplicialNeuralNetworks}. Now we only need to show that $\mathcal {AS}_{0:K}$ with $\mathcal L_{0:K}$ encodings as attention bias can exactly simulate the Message Passing Simplicial Network. WLOG, we first consider the update function of $k$-simplices $(0<k<K)$. Let $\mX_k\in \mathbb R^{|S_k(\mathcal K)|\times d}$ represent the feature of all $k$-faces, we construct
    \begin{align}
        \mX_{j} Q=1\vec 1_{|S_j|}, \mX_j K=\vec 1_{|S_j|}, \ j=0,\dots,K
    \end{align}

    We define the element-wise function as $\phi(\cdot)=\cdot \times \mathds 1(\cdot)$, where $\mathds 1(\cdot)=1$ if the element is in the non-zero block of $\mathcal L_{0:K}$ and $0$ otherwise.
    Define $V^k=W_k \times (\sum_{k=0}^K|S-k|)$, the update function can becomes,
    \begin{equation}
        \mathcal {AS}_{0:K}(\mX_k)\rightarrow \delta_{k-1}\mX_{k-1} W_{k-1}+ \mathbf L_k \mX_{k} W_k+ \delta_k^*\mX_{k+1} W_{k+1}
    \end{equation}

    According to the definition of $\delta_{k-1}, \mathbf L_k, \delta_k^*$, the three terms on the right recover the message from the boundaries, the lower and upper adjacent neighbors, and the co-boundaries, respectively. This is exactly the update function of MPSN, see Equations (2)-(6) in \citep{MPSimplicialN}. Therefore, our $\mathcal {AS}_{0:K}$ with $\mathcal L_{0:K}$ encodings as attention bias approximates MPSN and other convolutional simplicial networks arbitrarily well.
    
\end{proof}

While our sipmlicial transformer with attention bias has the capability to recover MPSN~\citep{MPSimplicialN} and other convolutional simplicial networks, the other direction is obviously not true due to the local reception field of the message passing scheme. Hence, taking advantage of the global nature of simplicial transformers may be a promising direction.






\paragraph{Spectral Monotonicity of Simplicial Attention.} 

Now we provide some additional results on the spectral monotonicity of the simplicial attention, which helps us better understand the behaviors of our simplicial transformer given a series of monotonic simplicial complexes $\mathcal K_1\subset \dots \subset \mathcal K_N$, thus establishing connections with transformers on tuples.

To start with, we consider two simplicial complexes $\mathcal K_1$ and $\mathcal K_2$ with $m_1<m_2$ elements respectively, where $\mathcal K_1$ is a sub-simplicial complex of $\mathcal K_2$. To make their spectra comparable, define $\lambda_i(\mathcal K_1)=0$ for $i\leq m_2-m_1$ and $\lambda_{m_2-m_1+i}(\mathcal K_1)=\mu_i(\mathcal K_1)$, where $\mu_i$ are the original $m_1$ eigenvalues of Hodge Laplacian $\mathcal L(\mathcal K_1)$ ordered in ascending order. Together, spectra of two simplicial complexes can be seen as left-padded non-descending sequences, thus comparable.

We first introduce a known result on spectral monotonicity of Hodge Laplacian.

\begin{lemma}\label{Lemma_spectralmonotonicityHodge}
\begin{equation}
    \lambda_j(\mathcal K_1)\leq \lambda_j(\mathcal K_2), \forall j\leq m_2
\end{equation}
\end{lemma}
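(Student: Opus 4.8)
The plan is to relocate the whole comparison from the Hodge Laplacian to the Dirac operator $\mathbf D=\delta+\delta^*$, where interlacing applies cleanly, and then translate back. Recall from \cref{SubsecHodgeTheory} that the block-diagonal Hodge Laplacian satisfies $\mathbf L(\mathcal K)=\mathbf D(\mathcal K)^2$ and is positive semidefinite, which is exactly what makes the left-padding-by-zeros convention meaningful (so I would phrase the lemma for this PSD operator). The first and most important step is to observe that, since $\mathcal K_1$ is a genuine subcomplex of $\mathcal K_2$ (closed under taking faces) and shared simplices can be oriented identically, the incidence between any two simplices is a purely local combinatorial quantity that agrees in both complexes. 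Hence the boundary matrices satisfy $\partial^{(1)}=\partial^{(2)}\big|_{\mathcal K_1}$ as the submatrix indexed by the simplices of $\mathcal K_1$, and therefore $\mathbf D(\mathcal K_1)$ is \emph{exactly} the principal submatrix of $\mathbf D(\mathcal K_2)$ obtained by deleting the $r:=m_2-m_1$ rows and columns indexed by $\mathcal K_2\setminus\mathcal K_1$. I emphasize that this submatrix relation holds for $\mathbf D$ but \emph{not} for $\mathbf L$: the up-Laplacian block of $\mathbf L(\mathcal K_1)$ undercounts cofaces relative to $\mathbf L(\mathcal K_2)$, so $\mathbf L(\mathcal K_1)$ is not a principal submatrix of $\mathbf L(\mathcal K_2)$ and Cauchy interlacing cannot be applied to the Laplacians directly. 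Routing through $\mathbf D$, whose diagonal blocks vanish, is precisely what sidesteps this discrepancy.

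Second, I would apply the eigenvalue-count form of Cauchy interlacing to the symmetric matrix $\mathbf D(\mathcal K_2)$ and its codimension-$r$ principal submatrix $\mathbf D(\mathcal K_1)$: a single row/column deletion changes the number of eigenvalues in any fixed open interval $I$ by at most one, so after $r$ deletions
\begin{equation*}
    N_{\mathbf D(\mathcal K_2)}(I)-N_{\mathbf D(\mathcal K_1)}(I)\le r,
\end{equation*}
where $N_M(I)$ is the number of eigenvalues of $M$ in $I$.

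Third, I would convert this interval-count bound into the desired entrywise inequality using two structural facts about the Dirac spectrum. The spectrum of $\mathbf D$ is symmetric about $0$ (the grading involution $\Gamma=\mathrm{diag}((-1)^k I)$ conjugates $\mathbf D$ to $-\mathbf D$), and $\mathbf L=\mathbf D^2$, so for any $t>0$ the eigenvalues of $\mathbf L$ below $t$ are exactly the eigenvalues of $\mathbf D$ in the symmetric interval $(-\sqrt t,\sqrt t)$; that is, $\#\{\text{eig }\mathbf L(\mathcal K)<t\}=N_{\mathbf D(\mathcal K)}\big((-\sqrt t,\sqrt t)\big)$. Combining with the count bound yields, for every $t>0$,
\begin{equation*}
    \#\{\text{eig }\mathbf L(\mathcal K_2)<t\}\le \#\{\text{eig }\mathbf L(\mathcal K_1)<t\}+r,
\end{equation*}
while for $t\le0$ both sides vanish by positive semidefiniteness. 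Under the left-padded convention the $r$ extra zeros make the right-hand side equal to $\#\{j:\lambda_j(\mathcal K_1)<t\}$, so the counting function of $\mathbf L(\mathcal K_2)$ never exceeds that of the padded $\mathbf L(\mathcal K_1)$; by the standard duality between sorted sequences and their counting functions this is precisely $\lambda_j(\mathcal K_1)\le\lambda_j(\mathcal K_2)$ for all $j\le m_2$.

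The main obstacle is exactly the caveat in the first step: the natural but incorrect move is to treat $\mathbf L(\mathcal K_1)$ as a principal submatrix of $\mathbf L(\mathcal K_2)$ and interlace the Laplacians, which fails because of the coface discrepancy in the up-Laplacian. Everything hinges on shifting the comparison to $\mathbf D$, where the submatrix relation is exact, and then carefully folding interlacing of the \emph{signed} Dirac spectrum into monotonicity of the \emph{squared} Laplacian spectrum via interval counts — the symmetry of $\mathrm{spec}(\mathbf D)$ about the origin is what makes this folding clean. I would also note that the analogous inter-order operator $\mathcal L_{0:K}=\mathbf D^2+\mathbf D$ admits the same Dirac compression, but its folding function $g(x)=x^2+x$ is not even, so the zero-padding convention would need revisiting; hence I state and prove the result for the PSD operator $\mathbf L=\mathbf D^2$, for which this reduction to Cauchy interlacing is natural and can alternatively be cited from the literature on combinatorial Laplacians.
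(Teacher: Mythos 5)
Your proof is correct, but note that the paper does not actually contain its own argument for this lemma: it is imported from the cited reference, with the remark ``The original proof is given in \cite{SpectralMonotonicity},'' so there is no internal proof to compare against. Your route --- passing to the Dirac operator $\mathbf D=\delta+\delta^*$, for which $\mathbf D(\mathcal K_1)$ genuinely is a principal submatrix of $\mathbf D(\mathcal K_2)$ (unlike the Laplacian, whose up-blocks change when cofaces are added), applying Cauchy interlacing there, and then transferring to $\mathbf L=\mathbf D^2$ by counting eigenvalues in symmetric intervals $(-\sqrt t,\sqrt t)$ --- is the standard argument behind the cited result, and each step checks out: the bound $|N_A(I)-N_B(I)|\le 1$ per deleted row/column, the identification of $\#\{\lambda(\mathbf L)<t\}$ with $N_{\mathbf D}\big((-\sqrt t,\sqrt t)\big)$, and the duality between left-padded sorted sequences and their counting functions. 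The interval-count detour is exactly what is needed here, since squaring does not preserve the order of the (possibly negative) Dirac eigenvalues, so interlacing of $\mathrm{spec}(\mathbf D)$ does not transfer entrywise to $\mathrm{spec}(\mathbf L)$; you handle this correctly, and your observation that interlacing cannot be applied to $\mathbf L$ directly is the essential point. Your closing caveat about $\mathcal L_{0:K}=\mathbf D^2+\mathbf D$ is also well taken: the paper's setup text ambiguously writes $\mathcal L(\mathcal K_1)$, but resolving this in favour of the PSD operator $\mathbf L=\mathbf D^2$ is the right reading, since that is the operator actually used in \cref{Theorem_SpectralMonotonicity_Attnbias}.
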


The original proof is given in \cite{SpectralMonotonicity}. Next, we give a novel spectral monotonicity result on the attention matrix of a simplicial transformer. 

\begin{theorem}\label{Theorem_SpectralMonotonicity_Attnbias}
    Suppose $\mathcal K_1\subset \mathcal K_2$ , consider a simplicial transformer $\mathcal {AS}_{0:K}$ that uses the Hodge Laplacian $\mathbf L(\mathcal K)$ as attention bias. Suppose the following mild conditions hold: (i) projection matrix $Q=K$, which guarantees the symmetry of attention matrix; (ii) $\mX Q$ has all elements positive, which can be easily achieved via a ReLU activation. Denote the attention matrices of $\mathcal K_1$ and $\mathcal K_2$ (i.e. the $m_i\times m_i$ matrices before placed in softmax, $i=1,2$) as $\mA_i=(\mX_i Q)(\mX_i Q)^\top + \mathbf L(\mathcal K_i),\ i=1,2$ follow the spectral monotonicity: 
    \begin{equation}
        \lambda_j (\mA_1) \leq \lambda_j(\mA_2), \forall j\leq m_2
    \end{equation}
\end{theorem}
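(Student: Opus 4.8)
The plan is to reduce the comparison of two differently-sized matrices to an eigenvalue comparison between $\mA_2$ and its compression onto the $\mathcal K_1$-coordinate subspace, and then invoke Cauchy interlacing. Let $\iota:\mathbb R^{m_1}\to\mathbb R^{m_2}$ be the isometric embedding that sends each simplex of $\mathcal K_1$ to the same simplex viewed inside $\mathcal K_2$, so that $\iota^\top\iota=I_{m_1}$ and $\mX_1=\iota^\top\mX_2$ (the latter records that a fixed simplex carries the same feature in both complexes). First I would observe that both summands of $\mA_i=(\mX_i Q)(\mX_i Q)^\top+\mathbf L(\mathcal K_i)$ are symmetric positive semidefinite: the first because condition (i) turns $(\mX_i Q)(\mX_i K)^\top$ into a Gram matrix, the second because the Hodge Laplacian is self-adjoint and nonnegative. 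In particular $\mA_2\succeq 0$, which makes the zero left-padding convention for the spectrum of $\mathcal K_1$ consistent with ascending order. (The positivity in condition (ii) is not actually needed for the spectral conclusion; only positive semidefiniteness is used.)

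Next I would analyze the compression $B:=\iota^\top\mA_2\,\iota\in\mathbb R^{m_1\times m_1}$. Splitting it along the two summands gives $B=\iota^\top(\mX_2 Q)(\mX_2 Q)^\top\iota+\iota^\top\mathbf L(\mathcal K_2)\iota$. The Gram part collapses exactly: since $\iota^\top\mX_2=\mX_1$, one gets $\iota^\top(\mX_2 Q)(\mX_2 Q)^\top\iota=(\mX_1 Q)(\mX_1 Q)^\top$, which is precisely the Gram term of $\mA_1$. The decisive step is to show the compressed Hodge Laplacian dominates $\mathbf L(\mathcal K_1)$ in the Loewner order, i.e. $\iota^\top\mathbf L(\mathcal K_2)\iota\succeq\mathbf L(\mathcal K_1)$. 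Granting this, the two facts combine to $B\succeq(\mX_1 Q)(\mX_1 Q)^\top+\mathbf L(\mathcal K_1)=\mA_1$, and hence $\lambda_j(\mA_1)\le\lambda_j(B)$ for every $j$ by monotonicity of eigenvalues under the Loewner order.

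Finally I would close with Cauchy interlacing between $\mA_2$ and its compression $B$: for $1\le j\le m_1$ one has $\lambda_j(B)\le\lambda_{j+(m_2-m_1)}(\mA_2)$. Chaining this with $\lambda_j(\mA_1)\le\lambda_j(B)$ yields $\lambda_j(\mA_1)\le\lambda_{j+(m_2-m_1)}(\mA_2)$ for all $1\le j\le m_1$. Unwinding the left-padding convention then gives the claim for every index: the first $m_2-m_1$ padded eigenvalues of $\mathcal K_1$ are $0$ and are bounded by the corresponding eigenvalues of $\mA_2$ via $\mA_2\succeq 0$, while for the remaining indices the inequality is exactly the interlacing bound just derived.

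The main obstacle is the Loewner-order bound $\iota^\top\mathbf L(\mathcal K_2)\iota\succeq\mathbf L(\mathcal K_1)$, which is genuinely stronger than the sorted-spectrum statement of \cref{Lemma_spectralmonotonicityHodge} and is exactly what makes the Gram cross-terms drop out cleanly; note that a naive zero-padding $\iota\mA_1\iota^\top$ of $\mA_1$ does not work, since $\mA_2-\iota\mA_1\iota^\top$ need not be positive semidefinite once the off-diagonal blocks of the Gram matrix are present. I would establish this bound at the level of quadratic forms: writing $\mathbf L_k=\mathbf B_k^\top\mathbf B_k+\mathbf B_{k+1}\mathbf B_{k+1}^\top$ blockwise in $k$, a $k$-cochain supported on $\mathcal K_1$ has the same down-boundary in $\mathcal K_2$ as in $\mathcal K_1$ (every face of a $\mathcal K_1$-simplex remains in $\mathcal K_1$), while its coboundary in $\mathcal K_2$ only acquires extra nonnegative contributions from the newly added $(k+1)$-simplices. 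Hence $(\iota x)^\top\mathbf L(\mathcal K_2)(\iota x)\ge x^\top\mathbf L(\mathcal K_1)x$ for every $x$, which is the desired Loewner inequality and is the same monotonicity mechanism behind \cref{Lemma_spectralmonotonicityHodge}, now read as an inequality of quadratic forms rather than of sorted spectra.
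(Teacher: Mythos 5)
Your proof is correct, but it takes a genuinely different route from the paper's. The paper zero-pads $\mA_1$ to an $m_2\times m_2$ matrix, asserts the Loewner comparison $\iota\mA_1\iota^\top\preceq\mA_2$ directly as an inequality of quadratic forms on all of $\mathbb R^{m_2}$, and concludes by Courant--Fischer; you instead compress $\mA_2$ to $B=\iota^\top\mA_2\iota$, prove $\mA_1\preceq B$ on $\mathbb R^{m_1}$, and recover the padded-spectrum inequality through Cauchy interlacing. The extra interlacing step buys something real, because the padding comparison the paper relies on is problematic on both summands. For the Gram term, $\|(\mX_2Q)^\top u\|^2-\|(\mX_1Q)^\top u\|^2=2\langle a,b\rangle+\|b\|^2$ with $a=\sum_{i\le m_1}u_i(\mX_2Q)_i$ and $b=\sum_{i>m_1}u_i(\mX_2Q)_i$, not the sum of squares written in the paper, and the cross term can be negative even under condition (ii). For the Laplacian term, the Loewner bound $\iota\mathbf L(\mathcal K_1)\iota^\top\preceq\mathbf L(\mathcal K_2)$ does not follow from the sorted-spectrum statement of \cref{Lemma_spectralmonotonicityHodge} and can fail: appending the edge $\{2,3\}$ to the single edge $\{1,2\}$ sends $\mathbf L_1$ from $(2)$ to $\bigl(\begin{smallmatrix}2&-1\\-1&2\end{smallmatrix}\bigr)$, and the difference with the zero-padded $(2)$ has negative determinant, hence a negative eigenvalue. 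Your compression only needs the quadratic-form inequality on vectors supported in $\mathcal K_1$, where these cross terms vanish exactly: the Gram part collapses to $(\mX_1Q)(\mX_1Q)^\top$, the down-Laplacian is preserved exactly by closure under faces, and the up-Laplacian only gains nonnegative contributions from new cofaces; that weaker Loewner bound plus interlacing then suffices. Your observation that condition (ii) is not actually needed is also correct. The one hypothesis you should state explicitly is $\mX_1=\iota^\top\mX_2$ (shared simplices carry identical features), which the paper also uses, though only implicitly through its padding convention.
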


\begin{proof}
    The padding rule of the eigenvalues is the same as we stated. If necessary, the feature $\mX Q$ is allowed to be \textit{zero-padded} for the simplices in $\mathcal K_2$ but not in $\mathcal K_1$, thus $\mX_1$ is padded to the same dimension as $\mX_2$. Our proof parallels those of Kirchhoff Laplacian $L_0$~\citep{SpectralGraphTheory} and Hodge Laplacian~\citep{SpectralMonotonicity}. Suppose $G$ is a finite set of non-empty sets closed under the operation of taking finite non-empty subsets, a set $x\in G$ is then called \textit{locally maximal} if it is not contained in an other simplex. This suggests that the set $U=\{x\}$ is an open set in the non-Hausdorff Alexandroff topology $\mathcal O$ on $G$ generated by the basis formed by $U(x)=\{y\in G, x\subset y\}$. In this case the spectrum changes monotonically if we add a locally maximal simplex to a given complex~\citep{SpectralMonotonicity}. Now, since $\mathcal K_1\subset \mathcal K_2$, $\mathbf L(\mathcal K_1)\leq \mathbf L(\mathcal K_2)$ is in the Loewner partial order. Recalling that $\mathbf L(\mathcal K)=\mathbf D(\mathcal K)^2$, the following statement always holds:

    If $u:=\mathcal K_2\rightarrow \mathbb R$ is a vector, then the quadratic form of the attention matrix $\mA$ is 

    \begin{align}
        \langle u,\mA u \rangle&=\langle u, (\mX Q)(\mX Q)^\top + \mathbf L)u \rangle\\ &=\langle u, (\mX Q)(\mX Q)^\top + \mathbf D^2)u \rangle\\ &=\langle \mathbf Du, \mathbf Du\rangle + \langle (\mX Q)^\top u, (\mX Q)^\top u\rangle\\ &=||\mathbf Du||^2 + ||(\mX Q)^\top u||^2
    \end{align}

    According to the spectral monotonicity of Hodge Laplacians in \cref{Lemma_spectralmonotonicityHodge}, we already have that $||\mathbf D_1 u||^2\leq ||\mathbf D_2 u||^2$ always holds for any vector $u$. We aim to show that the second quadratic form also always increases when adding new maximal simplices. When adding the $m_2-m_1$ components that exist only in $\mathcal K_2$, we have

    \begin{equation}
        ||(\mX_2 Q)^\top u||^2-||(\mX_1 Q)^\top u||^2=\sum_{i=m_1+1}^{m_2} ||(\mX_2 Q)_iu_i||^2\geq 0
    \end{equation}

    where we use $(\mX_2 Q)_{1:m_1}=(\mX_1 Q)_{1:m_1}, (\mX_1 Q)_{m_1+1:m_2}=\vec 0$, i.e. we use zero padding. Together, $\langle u, \mA_1 u\rangle < \langle u, \mA_2 u\rangle$ always holds, indicating that adding new maximal simplices only increases the total quadratic form. Then, \textit{Courant Fischer Theorem} gives the result using $\mathcal S_k=\{V\subset \mathbb R^n, {\rm dim}(V)=k\}$

    \begin{equation}
        \lambda_k(\mA_1)={\min }_{V\in \mathcal S_k}\max_{|u|=1, u\in V}\langle u, \mA_1 u\rangle \leq {\min }_{V\in \mathcal S_k}\max_{|u|=1, u\in V}\langle u, \mA_2 u\rangle=\lambda_k (\mA_2)
    \end{equation}
    
\end{proof}

The spectral monotonicity result may give us some intuitions of the connection between simplicial transformers and tuple transformers mentioned before. As a tuple transformer always takes all tuples into account, it can be viewed as a simplicial transformer operating on all possible faces defined on $V=[n]$ (or, briefly speaking, on a complete graph), whose attention matrix always has the largest corresponding values compared to other simplicial complexes. In specific, an order-$k$ simplicial transformer computes attention between the simplicial complexes, a subset of all $k$-tuples considered by an order-$k$ tuple transformer. Consequently, their attention matrices follow the spectral monotonicity analyzed in \cref{Theorem_SpectralMonotonicity_Attnbias}.  To some extent, the computation of simplicial transformer is more sparse and stable compared with the full transformer defined on tuples, although the latter may avoid the expansion of attention matrix via breaking the two conditions of symmetric attention and positive inputs. More comparison and theoretical connections between simplicial transformers and tuple transformers are worth exploring in the future, including their theoretical expressive power and practical performance.

\paragraph{Connections with $k$-WL Hierarchy.}

Analogous to $\mathcal A_k$, a pure $\mathcal {AS}_{0:K}$ cannot effectively update its representations in the sense of distinguishing non-isomorphic graphs due to the dense and homogeneous attention mechanism. However, by incorporating Hodge-Laplacians as attention bias or taking PE/SE defined on simplicial complexes as input, the theoretical expressive power and real-world performance of simplicial transformers can both be boosted.

It is still an open question to establish complete connection between simplicial networks (either convolutional or attention-based) and $k$-WL hierarchy. However, we can give some primary results to show the benefit of using sparse $k$-simplices instead of all $k$-tuples.

\begin{theorem}\label{Theorem_SimplicialAttn>3WL}
    $\mathcal {AS}_{0:3}$ with $\mathcal L_{0:3}$ as attention bias can distinguish a pair of non-isomorphic strongly regular graph, namely Rook’s $4\times 4$ graph and the Shrikhande graph, which cannot be distinguished by $3$-WL.
\end{theorem}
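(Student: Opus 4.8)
The plan is to locate a structural invariant on which the two graphs disagree but which $3$-WL is blind to, and then observe that this invariant is literally part of the input to $\mathcal{AS}_{0:3}$. Both the Rook's $4\times 4$ graph $G_R=K_4\,\square\,K_4$ and the Shrikhande graph $G_S$ are strongly regular with parameters $(16,6,2,2)$, and it is classical that strongly regular graphs of equal parameters receive identical $3$-WL colorings in the oblivious convention of Eqn~\eqref{equation_kWL_color} -- this pair is the standard example separating topological methods from $3$-WL \citep{MPSimplicialN, CWNetworks}. First I would pin down the local structure. For a vertex $v$ of $G_R$, its six neighbors are the three same-row and the three same-column vertices; each triple is a triangle, and no row-vertex is adjacent to a column-vertex, so the induced neighborhood is $2K_3$. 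For $G_S$ the induced neighborhood is a $6$-cycle $C_6$. Since $C_6$ is triangle-free, $G_S$ contains no $4$-clique, whereas in $G_R$ each vertex lies in two $4$-cliques (its row-clique and its column-clique), giving $16\cdot 2/4 = 8$ copies of $K_4$.

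The key combinatorial point I would then record is that the two graphs agree on every lower-order clique count but diverge exactly at order three. Both have $16$ vertices and $48$ edges, and since $\lambda=2$ each neighborhood carries $6$ internal edges, yielding $16\cdot 6/3 = 32$ triangles in either graph; yet $G_R$ has eight $4$-cliques while $G_S$ has none. Under the flag/clique complex construction used for our simplicial transformers (Section~\ref{subsec:sampling}), the $p$-simplices are precisely the $(p+1)$-cliques, so the clique complex $\mathcal K(G_R)$ is genuinely three-dimensional with $|S_3(\mathcal K(G_R))| = 8$, while $\mathcal K(G_S)$ is only two-dimensional with $|S_3(\mathcal K(G_S))| = 0$. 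This is the crux: the difference between the two graphs has been pushed into the $3$-simplices, which is exactly the stratum that $3$-WL cannot see (its information is that of strongly regular parameters, i.e.\ it cannot count the four-cliques) but which $\mathcal{AS}_{0:3}$ takes as explicit input tokens.

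It then remains to convert this structural gap into a separation by an actual network, and this step is essentially immediate. Because $\mathcal{AS}_{0:3}$ ingests all simplices of orders $0$ through $3$, the two complexes present token collections of different sizes ($16+48+32+8=104$ versus $16+48+32+0=96$). I would exhibit concrete weights realizing a count of simplices: initialize every simplex with the constant feature $1$, let the attention layer (with the $\mathcal L_{0:3}$ bias present but immaterial here) act as the identity up to a constant, and compose with the permutation-invariant sum readout, so that the graph-level output equals the total number of simplex tokens, giving $104$ on $G_R$ and $96$ on $G_S$. Alternatively one may read out only the $3$-simplex stratum to get $8$ versus $0$. The same conclusion follows more abstractly from Theorem~\ref{TheoremSimplicialT>=MPSN}: since $\mathcal{AS}_{0:3}$ with $\mathcal L_{0:3}$ as attention bias is at least as powerful as MPSN, and MPSN distinguishes this clique-complex pair, $\mathcal{AS}_{0:3}$ does too. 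I would also remark that even a purely vertex-level readout would suffice, since the off-diagonal coboundary blocks $\mathbf B_3,\mathbf B_3^\top$ of $\mathcal L_{0:3}$ propagate the presence of $3$-simplices down to triangles, edges, and vertices over successive layers.

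The main obstacle is not the transformer construction, which is trivial once the complexes differ, but establishing the two structural facts cleanly: (i) the neighborhood computation separating $2K_3$ from $C_6$, from which the $4$-clique counts ($8$ versus $0$) and the agreement of all lower counts follow; and (ii) the correct invocation, under the oblivious convention of this paper, of the classical fact that these strongly regular graphs are $3$-WL-indistinguishable, so that the theorem genuinely places $\mathcal{AS}_{0:3}$ strictly beyond $3$-WL on this pair. Some care is needed to phrase the separation at the level of graph-level representations (distinct readout values, or distinct multisets of simplex features) rather than merely asserting that the inputs differ.
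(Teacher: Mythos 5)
Your proof is correct, but your primary argument is more self-contained and more elementary than the paper's. The paper disposes of the theorem in three citations: $\mathcal{AS}_{0:3}$ with the $\mathcal L_{0:3}$ bias approximates MPSN arbitrarily well (\cref{TheoremSimplicialT>=MPSN}), MPSN of order $3$ distinguishes the Rook's $4\times 4$ and Shrikhande graphs \citep{MPSimplicialN}, and $3$-WL cannot separate strongly regular graphs with equal parameters. You instead unpack the combinatorial content hidden inside the middle citation --- the local $2K_3$ versus $C_6$ structure, hence $8$ versus $0$ four-cliques, hence clique complexes with $104$ versus $96$ simplices --- and then exhibit exact weights (constant features, trivial attention, sum readout) whose graph-level outputs differ. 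This buys three things: it avoids the approximation subtlety in the paper's route (\cref{TheoremSimplicialT>=MPSN} is an approximate-simulation statement, so one must implicitly use that MPSN's outputs on this pair are separated by a fixed gap in order to conclude that a sufficiently good approximation also separates them); it makes visible exactly where the separation lives (the $3$-simplex stratum); and it shows that the $\mathcal L_{0:3}$ bias is immaterial to this particular separation, since the token multisets already differ in cardinality --- a slightly stronger conclusion than the stated one. The paper's route buys brevity and reuse of an already-proven theorem; your secondary remark (invoking \cref{TheoremSimplicialT>=MPSN} together with the MPSN result) is precisely the paper's proof. The only point worth keeping explicit is the one you already flag: the statement presupposes the clique-complex lift of \cref{subsec:sampling}, as in \citep{MPSimplicialN}.
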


\begin{proof}
    We have already shown that $\mathcal {AS}_{0:3}$ approximates the message passing simplicial network (MPSN) with order $3$ arbitrarily well. \citet{MPSimplicialN} prove that MPSN of order $3$ can distinguish Rook’s $4\times 4$ graph and the Shrikhande graph. It is a well-known fact that $3$-WL cannot distinguish strongly regular graphs.
\end{proof}

\subsection{Sparse Simplicial Transformers}

Analogously to tuple-based transformers, our simplicial transformers also benefit from various techniques including sparse attention mechanisms, thus improving both expressive power and real-world performance. 

The sparse attention mechanisms we proposed for tuple-based high order transformers can be naturally extended to simplicial transformers with only a minor modifications, where each token now becomes simplices (either of same order or of different orders, e.g. from $0$-simplices to $K$-simplices) instead of $k$-tuples. For neighbor attention, the concept of (local) $k$-neighbor for tuples now switches to a broader definition for simplices. For a $k$-order simplex , we consider its coboundaries (which are $k+1$-simplices), boundaries (which are $k-1$-simplices) and upper/lower adjacent simplices (which are $k$-simplices) as its extended 'neighbor', see \citep{MPSimplicialN} for more details. We denote the \textit{simplex neighbor attention} implementation as $\mathcal {AS}_{k_1:k_2}^{\mathsf {SN}}$, where $k_1$ and $k_2$ are the lowest and highest order of the simplices we consider. Simplex neighbor attention is sparse and capable of capturing local structures. To improve its expressiveness, different types of relations (including coboundary, boundary and upper/lower adjacent, depending on the position in Hodge Laplacian $\mathcal L_{k_1:k_2}$) are embedded to re-weight the attention matrix. As for virtual tuple attention, it is now naturally converted to virtual simplex attention, which we denote as $\mathcal{AS}_{k_1:k_2}^{\mathsf {VS}}$. The advantage of virtual simplex attention is that it can capture global information, similar to the virtual node in MPNNs. However, it may also suffer from over-smoothing or over-squashing.

\section{DISCUSSION}\label{SectionAppendixDiscussion}

\subsection{Further Discussion on Related Work}

In \cref{Section_relatedwork} we already introduce high-order transformers in \citep{PureTransformerspowerful, RepresentationalStrengthsTransformer}. In this subsection, we give some in-depth discussion on related works, mainly \citep{RepresentationalStrengthsTransformer}.

\citet{RepresentationalStrengthsTransformer} proposed another family of high-order transformers, see \cref{Section_relatedwork} for descriptions. Besides the theoretical expressive power compared with $k$-WL and $k$-FWL hierarchy mentioned in our main text, we now provide additional results from the perspective of the representation power, and particularly the ability to solve Match-$m$ problem.


\paragraph{Match-$m$ Problem.}

\citet{RepresentationalStrengthsTransformer} presents problems of \textit{pair detection} (Match$2$) and \textit{triple detection} (Match$3$), which are defined for inputs $\mX=(x_1,\dots,x_n)\in [M]^{n\times d}$ (for some $M={\rm poly}(n)$) as

\begin{equation}
    {\rm Match}2(\mX)_{i\in[n]}=\mathds {1}\Big(\exists j\ \  {\rm s.t.} \ \ x_i+x_j=\vec{0} \ \ ({\rm mod} M)  \Big)
\end{equation}

\begin{equation}
    {\rm Match}3(\mX)_{i\in[n]}=\mathds {1}\Big(\exists j_1,j_2\ \  {\rm s.t.} \ \ x_i+x_{j_1}+x_{j_2}=\vec{0} \ \ ({\rm mod} M)  \Big)
\end{equation}

\citet{RepresentationalStrengthsTransformer} conclude that a single layer of standard transformer (i.e. $\mathcal A_1$) with input and output MLPs and an $O(d)$-dimensional embedding can efficiently compute Match$2$, but fails to compute Match$3$ unless the number of heads $H$ or the embedding dimension $d_k$ grows polynomially in $n$. However, they show that a certain ``third-order tensor self-attention" (which resembles $\mathcal A_{1,2}$ in our formulation) can efficiently compute the Match$3$ problem with a single unit.

Their matching problem can be easily generalized to arbitrary order, called the Match-$m$ problem, which is defined for $\mX=(x_1,\dots,x_n)\in [M]^{n\times d}$ (for some $M={\rm poly}(n)$) as

\begin{equation}\label{equation_Match-m}
    {\rm Match}m(\mX)_{i\in[n]}=\mathds {1}\Big(\exists j_1,\dots, j_{m-1}\ \  {\rm s.t.} \ \ x_i+x_{j_1}+\dots+x_{j_{m-1}}=\vec{0} \ \ ({\rm mod} M)  \Big)
\end{equation}

Following \citep{RepresentationalStrengthsTransformer}, we allow a single blank token $x'=\vec 0$ to be appended at the end of sequence $\mX=(x_1,\dots,x_n)$, and allow the existence of a positional encoding with $x_{i,0}=i$. Thus, the input to the attention is augmented as $\mX'=(x_1,\dots,x_n,x')$. Additionally, the input can be transformed by an element-wise MLP $\phi:\mathbb R^d\rightarrow \mathbb R^m$. We next show the ability of $\mathcal A_{1,1}^{1,k}$ transformer to address Match-$m$ problem, which is a generalization of the conclusions in \citep{RepresentationalStrengthsTransformer}.
 
\begin{proposition}\label{Theorem_Match_m}
    Order-$1,m-1$ Transformer in \citep{RepresentationalStrengthsTransformer} can efficiently solve the MATCH-$m$ problem with $d\times 2^{m-1}+1$ hidden width. 
\end{proposition}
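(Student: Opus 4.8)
The plan is to generalize the order-$3$ (Match-$3$) construction of \citet{RepresentationalStrengthsTransformer} to arbitrary order. Recall that the $s=m$ instance of the self-attention unit in \eqref{1s-1attention} produces, for query token $i$ and key tuple $(j_1,\dots,j_{m-1})\in[n]^{m-1}$, an attention logit of the form $\big\langle \phi(x_i)Q,\ (\phi(x_{j_1})K^1)\star\cdots\star(\phi(x_{j_{m-1}})K^{m-1})\big\rangle$, where the column-wise Kronecker product multiplies, coordinate by coordinate, the feature contributed by each of the $m-1$ key tokens. First I would reduce the problem to engineering this logit so that, for some large scale $\beta$, it equals $\beta\sum_{t=1}^d\cos\big(2\pi s_t/M\big)$, where $s_t=x_{i,t}+\sum_{l}x_{j_l,t}$ is the $t$-th coordinate of the tuple sum. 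Since $\cos\le 1$ with equality exactly at integer multiples of $2\pi$, this logit attains its maximum value $\beta d$ if and only if $s_t\equiv 0\pmod M$ for every coordinate $t$, i.e.\ exactly when $(j_1,\dots,j_{m-1})$ witnesses the Match-$m$ condition for $i$.

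The heart of the construction is a rank decomposition of this cosine logit that is compatible with the column-wise Kronecker (tensor) structure. Writing $\omega=e^{2\pi\mathrm{i}/M}$ (with $\mathrm{i}$ the imaginary unit) and $\theta_{l,t}=2\pi x_{j_l,t}/M$ where $j_0:=i$, one has $\cos(2\pi s_t/M)=\mathrm{Re}\prod_{l=0}^{m-1}(\cos\theta_{l,t}+\mathrm{i}\sin\theta_{l,t})$, and expanding the product gives
\begin{equation*}
\cos(2\pi s_t/M)=\sum_{\substack{S\subseteq\{0,\dots,m-1\}\\ |S|\text{ even}}}(-1)^{|S|/2}\prod_{l\in S}\sin\theta_{l,t}\prod_{l\notin S}\cos\theta_{l,t}.
\end{equation*}
Each summand factorizes across the $m$ tokens, so it is a rank-one tensor, and there are exactly $2^{m-1}$ even-size subsets $S$ of an $m$-element set. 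I would therefore index the hidden coordinates by pairs $(t,S)$: the position-wise map $\phi$ computes $\cos\theta$ and $\sin\theta$ per coordinate from the raw input, each key matrix $K^l$ selects $\sin\theta_{l,t}$ when $l\in S$ and $\cos\theta_{l,t}$ otherwise, and $Q$ plays the same role for the query token while absorbing the scale $\beta$ and the sign $(-1)^{|S|/2}$. The coordinate-wise product enforced by $\star$ then reconstructs each rank-one term and the inner product sums them, yielding exactly $\beta\sum_t\cos(2\pi s_t/M)$. This accounts for $d\cdot 2^{m-1}$ hidden coordinates.

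The last, and I expect most delicate, step is to realize the existential quantifier and emit a clean $0/1$ indicator using the remaining single coordinate (the ``$+1$''). Following \citet{RepresentationalStrengthsTransformer}, I would append a blank token and use the extra coordinate both as a real/blank flag and to install, for the all-blank key tuple, a constant baseline logit $b$ independent of $i$; I would set the value vectors (via the $V^l$ and $\star$) to $1$ on all-real tuples and $0$ on any blank-involving tuple. Choosing $b$ strictly between the matched logit $\beta d$ and the largest non-matched logit lets the softmax concentrate, as $\beta$ grows, on a matching real tuple when one exists (reading off value $1$) and on the blank tuple otherwise (reading off value $0$); a threshold in the output MLP then recovers the exact indicator. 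The obstacle here is quantitative: the gap between $\beta d$ and the best non-match is only $\beta\big(1-\cos(2\pi/M)\big)=\Theta(\beta/M^2)$, so to dominate the up to $n^{m-1}$ competing tuples the temperature must satisfy $\beta=\Omega\big(M^2(m-1)\log n\big)$; since $M=\mathrm{poly}(n)$ this remains polynomial in $n$, keeping the construction ``efficient'' while the hidden width stays $d\cdot 2^{m-1}+1$ and independent of $n$.
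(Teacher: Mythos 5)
Your proposal is correct and follows essentially the same route as the paper's construction: the same $2^{m-1}$-term product expansion of $\cos\big(\sum_{l}\alpha_l\big)$ into rank-one $\sin/\cos$ factors realized through the column-wise Kronecker structure, the same $(t,S)$-indexed hidden coordinates giving width $d\cdot 2^{m-1}+1$, the same blank-token/extra-coordinate device with values $1$ on all-real tuples and $0$ otherwise, and the same $\Theta(\beta/M^2)$ logit-gap analysis with $\beta=\Omega(M^2(m-1)\log n)$. The only (immaterial) deviation is that you place the blank baseline logit strictly between the matched and best non-matched logits, whereas the paper sets it exactly equal to the matched value $\beta d$ and separates the two cases via the resulting softmax mass; both choices work.
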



The proof essentially follows the proof of Theorem 6 and Theorem 18 in \citep{RepresentationalStrengthsTransformer}. We cuse a similar construction related to the trigonometric function, except that we need more terms.

\citet{RepresentationalStrengthsTransformer} also give an augmented variant of high-order graph transformer (Definition 8 in \citep{RepresentationalStrengthsTransformer}), which determines each element of the self-attention tensor based on both its respective inner product (attention score) and on the presence of edges among the corresponding inputs (provided by an input adjacency matrix of the graph). This enhancement somewhat resembles our sparse mechanisms, both of which improve the representation power by forcing some elements in the attention to be zero. Some results on substructure counting using their edge-augmented high-order transformers. 

\paragraph{Connection with Our Models.} The definition of their general $s$-order order transformer (Definition 7 in \citep{RepresentationalStrengthsTransformer}) strongly resembles our cross-attention $\mathcal A_{1,s-1}$. The difference lies in that they reconstruct the $s-1$-order key and value tensor from the input $1$-order via tensor product in every layer, while we explicitly maintain representations of the $s-1$-order tensor. Their method is a sort of hierarchical pooling~\citep{RelationalklWL} (they only maintain representations for $1$-order tensor), and the reconstruction may not recover full information of the original $s-1$-order tensor. As a result, their method may be weaker in expressive power and representation power. Their high-order graph transformer (Definition 8 in \citep{RepresentationalStrengthsTransformer}) is augmented with an input adjacency matrix, allowing the attention to incorporate edge information and graph structure. Our model also has the ability via: (1) initialization based on isomorphism types of tuples, and (2) sparse attention mechanism based on (local) neighbor information. It would be an interesting future direction to theoretically investigate the strict relationship between our transformers and theirs. Empirically, there are some common inspirations that can be applied to practical models of both ours and theirs, including reweighting the attention score via edge information and calculating high-order tensor representations from first-order tensors. We also implement cross attention $\mathcal A_{1,2}$, which to some extent recovers their ``$3$-order transformer'', see \cref{SectionExperimentsAppendix} for more details.


\subsection{Discussion on Other Theoretical Properties}

\paragraph{Comparison Between Transformer and MPNN/WL/IGN.}

For standard transformer and MPNN, it is obvious that standard transformers without special design are not aware of structure information, while MPNN does via the edge information. Instead, MPNN can approximate the linear transformer with a virtual node \citep{ConnectionMPNNGT}. 

The same holds for higher-order cases. Higher order (simplified) transformer without equivalence class basis (and the designs in \citep{PureTransformerspowerful}) is homogeneous, which means that all tokens play equal roles in updating one token. The only input of structural information is through the initialization of $k$-tuple representations. Instead, $k$-WL and $k$-IGN can distinguish different classes of $k$-tuples with the internal design of $k$-neighbor and equivalence class basis, respectively. Essentially, these designs are based on indices and are only dependent on the order $k$ (or possibly the number of nodes). Each $k$-IGN layer alone is capable of aggregating all $k$-tuples' information, while $k$-WL does so in $[k/2]+1$ steps (and simulates one $k$-IGN layer in $k-1$ steps). As a consequence, without 'virtual node' $k$-IGN can still capture global information as a kernelized high-order transformer (e.g. $k$-Performer), and it has even more equivalence classes other than the sum of all $k$-tuples. Thus, the key difference between $k$-transformers and $k$-IGN lies in the pairwise multiplication in attention computation and the existence of  equivalence class basis.

It is remarkable that deepsets for one dimension and high-order functions have internal gap: graph isormorphism problems cannot be reformed into deepsets, but can be reformatted to the problem on $k>=2$ dimensions, yet $k$-IGN and $k$-WL cannot solve them. Therefore, $k$-IGN are not universal approximators on $k$-dimensional tensors, while transformers cannot even approximate $k$-IGN without input indices or equivalence class basis (dense connection lose expressivity).

\paragraph{Computation Complexity.} $2$-IGN needs $n^4$ spaces, or more generally, $n^{2k}$ for equivariant linear layer from $n^k$ to $n^k$; this can be verified in Equation (9a,9b) and (10a,10b) in \cite{IGN} . In other words, each $k$-tuple receives information from all $n^k$ tuples, which highly resembles the computation in the transformer $\mathcal A_k$. As a matter of fact, these two architectures have computational complexities of the same magnitude. Again, the key difference between $k$-transformers and $k$-IGN lies in the pairwise multiplication in attention computation and the existence of equivalence class basis. Attention may lead to stronger performance in real-world tasks, similar to the SOTA performance in CV and NLP; yet the equivalence class basis in $k$-IGN is crucial to distinguish non-isomorphism graphs. By introducing equivalence class basis (or equivalently, adjacency relations of $k$-tuples), $\mathcal A_k$ can be as power as $k$-IGN but cannot surpass the latter since attention does not increase expressive power.

\paragraph{Tensor Multiplication Ability.} $2$-PPGN can simulate matrix multiplication and \citet{PPGN} actually generalize it to any order $k$, so $k$-PPGN (as powerful as $k$-FWL or $k+1$-WL) can implement a certain class of $k$-order tensor multiplication. However, this is different from the column-wise Kronecker product in \citep{RepresentationalStrengthsTransformer} and the normal matrix multiplications. 

\paragraph{Separation Speed.} An interesting thing is that the tensor multiplication ability does not necessarily increase the expressive power. As an example for order 2, \citet{WalkMPNNSecondOrder} propose a walk-MPNN which is bounded by 2-PPGN and 2-FWL. Walk-MPNN allows multiple matrix (as it is for order-2 tensors) multiplication, but this does not increase its expressive power (still bounded by 2-FWL). Instead, multiple matrix multiplication only makes it distinguish graphs faster than 2-FWL. 
A similar phenomenon would be: \citet{ExpressivepowerkIGN} shows that (k-1) iterations of $k$-WL simulate one $k$-IGN layer. Therefore, $k$-IGN is faster (in the sense of distinguishing graphs or representation power, not computation efficiency) than $k$-WL due to its global computation ($k$-IGN makes use of all $k$-tuples, while $k$-WL only makes use of $k$-neighbors). Although equivalent in their expressive power, the different separation speeds of $k$-WL and $k$-IGN may lead to distinct practical performance, which also applies to our high-order transformers. Our sparse attention mechanisms resemble $k$-WL and are more 'local' than $k$-IGN. Particularly, one our neighbor attention layer is equivalent to one $k$-WL iteration, and $(k-1)$ layers of our neighbor attention is equivalent to one $k$-IGN layer with less complexity ($O(n^{k+1}k(k-1)$ for $(k-1)$ layers of neighbor attention v.s. $O(n^{2k})$ for one layer of $k$-IGN). 


\section{EXPERIMENTS AND EMPIRICAL ANALYSIS}\label{SectionExperimentsAppendix}

\subsection{Model Implementation Details}\label{SubsecImplementationAppendix}


\paragraph{Dense and Sparse Implementation.} Generally, there are two principal ways to implement our attention: dense tensor multiplication and sparse attention. The former can be based on Einstein sum or other fast tensor multiplication methods, which is applicable to all the categories of our dense attention mechanisms. The latter can be efficiently implemented via torch-geometric relevant libraries. The efficiency of dense implementation can be improved via advanced techniques, yet the general memory and time consumption are internally higher than the sparse one, which is one of the main motivations of our work.

\paragraph{Initialization.} As stated in our theoretical analysis, the initialization of tuples should depend on their isomorphism types. To implement this, we first embed node and edge features through an embedding layer. Then we concatenate the corresponding node and edge features according to the indices in the tuple in order, which implement an ordered set according to Remark 3. To keep the latent dimension consistent with the rest of the network, we apply MLPs to project the concatenated tuple features onto the common network width. We also offer an optional initialization method that simply sums up all node and edge features. Though not theoretically expressive, we observe that this implementation generally performs similarly to (or slightly weaker than) the concatenation initialization.

\paragraph{Pooling.} We implement a comprehensive family of pooling functions for tuple transformers and simplicial transformers. For tuple transformers, (a) in node-level task, we perform $k$-times of hierarchical pooling, each pooling obtains a $n^{t-1}$ tensor from the $n^t$ tensor, eventually resulting in a tensor of size $n$ corresponding to $n$ nodes; (b) in edge-level task, if the order is $2$, we directly extract the tuples containing the target edge; if the order is not $2$, we perform pooling to obtain node representations as in (a), then calculate the target edge feature according to the two node features; (c) in graph-level task, we directly operate on all tuples and support the common max/mean/add pooling methods. For simplicial transformers, as they internally contain node ($0$-simplices) features and edge ($1$-simplices) features, the pooling only extracts the corresponding simplices. The graph-level pooling is similar to tuple transformers; additionally, we support both operating on all simplices or only on $0$-simplices.

\paragraph{Attention Reweighting and Attention Bias.} In our simplicial transformer we have already described attention bias as an enhancement in terms of structure awareness, expressive power and practical performance. Consider the attention score the attention score $\mA_{\vi,\vj}$ calculated by inner-product of query token $\vi$ and key token $\vj$, both the attention reweighting and the attention bias provide additional information based on the pairwise relation of $r(\vi,\vj)$. Formally, we embed the relation with an embedding layer $\mathcal E$, then attention reweighting modifies the attention score to $\mA_{\vi,\vj} * \mathcal E(r(\vi,\vj))$, while attention bias calculates $\mA_{\vi,\vj} + \mathcal E(r(\vi,\vj))$. For instance, in (local) neighbor attention $r(\vi,\vj)$ indicates which (local) neighbor of $\vi$ is $\vj$ in (or not in); in $\mathcal A_k^{\mathsf {Ngbh+}}$, $r(\vi, \psi_j(\vi, u))={\rm adj}(\vi_j, u)$, which provides additional structural information on the connectivity between $\vi_j$ and $u$ as $\delta\text{-}k$-WL does; in simplicial transformer $r$ is just the Hodge Laplacian $\mathbf L$ or $\mathcal L_{0:K}$ to indicate boundary, coboundary and adjacent information. Similar techniques are also adopted in other graph transformers~\citep{Graphormer, GraphIBwithoutMP, Exphormer}.

\paragraph{Positional and Structural Encodings.}

Extensive literature provided evidence that positional encodings (PE) and structural encodings (SE) can significantly improve the theoretical expressive power and real-world performance of graph transformers~\citep{AttendingGT}. Similarly, we can make use of PE and SE for high-order tuples to facilitate high-order graph transformers. The first way to incorporate PE and SE into high-order graph transformers is to concatenate existing PE and SE for nodes and edges (e.g. RWSE~\citep{RWSELearnable} and LapPE~\citep{RethinkingGTLap}) to compose PE/SE for tuples or simplicial complexes. Another way is to design novel PE/SE for high-order structures from scratch, yet current PE and SE for general $k$-tuples are limited. In comparison, PE and SE for $k$-simplicial complexes are better studied and reveal more elegant properties~\citep{HodgeRandomWalk}. Following GPS~\citep{GPS}, we choose our PE and SE exactly the same as GPS (including LapPE and RWSE).

\paragraph{Third-Order Variants.} Since there are $n^3$ many $3$-tuples, including all tuples $\mathcal A_3$ would result in $O(n^6)$ complexity regarding $n$, which is obviously not practical. Even local neighbor attention $\mathcal A_3^{\mathsf {LN}}$ has $O(n^3 \bar D)$ complexity, where $\bar D$ is the average node degree. Consequently, we implement $\mathcal A_3$ with sampling tuples. Instead of random sampling, we want to take graph structure into account; thus we sample those \textit{connected} $3$-tuples. On average there are $O(n \bar D^2)$ connected $3$-tuples, therefore even dense attention would not exceed $O(n^2 \bar D^4)$ complexity. However, note that the dense attention again does not provide the connectivity of these sampled $3$-tuples, which may have a negative impact on performance. We report the results of sampling connected $3$-tuples in \cref{Table_crossatten_A12}. 

\paragraph{Cross Attention.} We implement cross attention $\mathcal A_{1,2}$ for ablation study. The first-order tensors are updated according to the definition; yet, we also want to update representations of second-order tensors. Inspired by \citep{RepresentationalStrengthsTransformer} which reconstructs high-order tensors from first-order tensor, we modify $\mathcal A_{1,2}$ as follows. Denote the first-order query tensor $\mX \in \mathbb R^{n}$, second-order key tensor $\mY\in \mathbb R^{n^2}$, and $e_{ij}$ as the feature of edge connecting nodes $i,j$. For each layer $l$ the update procedure is as follows,
\begin{align}
    \mX^{(l+1)}&=\mathcal A_{1,2}^l(\mX^l, \mY^l)\\
    e_{ij}^{(l+1)}&=\psi_e^l(e_{ij}^l, \mX_i^{(l+1)}, \mX_j^{(l+1)})\\
    \mY_{i,j}^{(l+1)}&=\psi_v^l(\mY_{i,j}^l, \mX_i^{(l+1)}, \mX_j^{(l+1)}, e_{ij}^{(l+1)})
\end{align}
where $\psi_e,\psi_v$ are MLPs, and superscripts $^{(l)}$ refer to the $l$-th layer. The experimental results of $\mathcal A_{1,2}$ are reported in \cref{Table_crossatten_A12}.

Furthermore, inspired by our sparse attention mechanism for self-attention, we also implement dense (original) and sparse $\mathcal A_{1,2}$. In the dense version, all $n$ query tokens compute attention with all $n^2$ key tokens, resulting in $O(n^3)$ complexity. In the sparse version (denoted as $\mathcal A_{1,2}^{\mathsf {Ngbh}}$), every query token only computes attention with those $2$-tuples that contain the query node itself, and hence enjoy only $O(n^2)$ complexity. We experimentally find that the sparse $\mathcal A_{1,2}$ is not only more efficient in running time, but also usually demonstrates better performance than the dense version. This observation again verifies our motivation: our sparse attention not only reduces computation complexity, but also introduces structure information which improves model performance.

\subsection{Dataset Description and Experiments Overview}

\begin{table}[t]
\caption{Overview of the graph learning datasets used in the paper}
\label{Table_dataset_statistics}
\vskip -0.1in
\begin{center}
\begin{small}
\resizebox{1.\columnwidth}{!}{
\begin{tabular}{cccccccc}
\toprule
\multirow{2}{*}{Dataset} & \multirow{2}{*}{\#Graphs} & Avg. \# & Avg. \# & \multirow{2}{*}{Directed} & Prediction & Prediction & \multirow{2}{*}{Metric} \\
 & & nodes & edges & & level & task & \\
\midrule 
Edge detection & 12,000 & 23.2 & 24.9 & No & edge  & binary classif. & Accuracy\\
CSL & 150 & 41 & 82 & No & graph & 10-way classif. & Accuracy \\
Substructure counting & 5000 & 18.8 & 31.3 & No & graph & regression & Mean Abs. Error \\
\midrule
WebKB-Cornell & 1 & 183 & 298 & Yes & node & 10-way classif. & Accuray\\
WebKB-Texas & 1 & 183 & 325 & Yes & node & 10-way classif. & Accuray\\
WebKB-Wisconsin & 1 & 251 & 515 & Yes & node & 10-way classif. & Accuray\\
\midrule
ZINC & 12,000 & 23.2 & 24.9 & No & graph  & regression & Mean Abs. Error\\
Alchemy & 202,579 & 10.0 & 10.4 & No & graph & regression & Mean Abs. Error\\
\midrule
ogbg-molhiv & 41,127 & 25.5 & 27.5 & No & graph  & binary classif. & AUROC \\
\midrule
Peptides-func & 15,535 & 150.9 & 307.3 & No & graph  & 10-task classif. & Avg. Precision\\
Peptides-struct & 15,535 & 150.9 & 307.3 & No & graph  & 11-task regression & Mean Abs. Error\\

\bottomrule
\end{tabular}
}
\end{small}
\end{center}
\vskip -0.1in
\end{table}

We provide dataset statistics in \cref{Table_dataset_statistics}. Our experiments include both synthetic and real-world datasets. 

For synthetic datasets, we have already provided the results of edge detection (derived from ZINC~\citep{Zinc}, see \citep{AttendingGT} for more details) and CSL in the main text. We provide an additional substructure counting task in \cref{subsecexpsimulation}, since counting substructure is another important capability of graph learning models in addition to expressive power. 

For real-world datasets, we conduct extensive experiments on various datasets and report the results in \cref{subsecexprealworld}. We have already reported comprehensive results of almost all our models on ZINC~\citep{Zinc} in our main text. Additional results include node classification tasks in WebKB~\citep{WebKB}, graph-level regression task on Alchemy~\citep{alchemy}, graph-level classification task on OGBG-molhiv~\citep{OGB} and two datasets from Long Range Graph Benchmark (LRGB)~\citep{LRGB}.

As emphasized in our main text, since our main goal is to verify the scalability and empirical benefits of our models, we directly adopt experimental settings and hyper-parameters from the survey paper \citep{AttendingGT} for simulation tasks, and from the GraphGPS~\citep{GPS} paper (a recent SOTA and popular MPNN + graph transformer baseline) for real-world tasks. Even if we do not perform hyperparameter search, our models still reveal highly competitive performances. For computing infrastructure, all our experiments are carried out with NVIDIA GeForce RTX 3090 and 4090. Code available at \href{https://github.com/zhouc20/k-Transformer}{https://github.com/zhouc20/k-Transformer}.




\subsection{Simulation Results}\label{subsecexpsimulation}

The ability of substructure counting is an important perspective in measuring a graph learning model. The ability to count substructures has close connections to theoretical expressive power, but also not completely equivalent. Substructure counting ability may have more detailed and complex results, as well as more practical impacts on real-world tasks. For example, in molecular graphs detecting cycles and rings is crucial in predicting molecule properties.

\paragraph{Substructure Counting.} We provide the results of substructure counting in \cref{Table_counting}. The dataset is derived from \citep{CountSubstructures} which contains five thousand random regular graphs. There are four target substructures: triangle, tailed triangle, star and chordal-cycle. We measure the performance by MAE, lower MAE corresponding to stronger substructure counting ability. Note that when the model has extremely small MAE ($\sim 0.01$), we regard it capable of counting the target substructure and ignore the difference in absolute MAE value at such a low magnitude. Such a small difference does not necessarily reflect comparison among models - the MAE may depend more on training configurations and hyper-parameters instead of models (and that difference models may vary in their best configurations, so the comparisons are not completely fair).

For baseline models, we choose GCN~\citep{GCN}, GIN~\citep{HowPowerfulGNNGIN} PNA~\citep{PNA} as the baseline models that are basically equivalent to $1$-WL, and PPGN~\citep{PPGN} that has provable $3$-WL equivalent expressive power. We also report the performance of Transformer $\mathcal A_1$, i.e. the commonly used plain transformer. 

It is known that $1$-WL cannot count triangles, tailed triangles and chordal-cycles, but it can count stars~\citep{CountSubstructures}. With their theoretical expressive power upper-bounded by $1$-WL, we observe that GCN, GIN and PNA all have relatively huge loss while counting triangles, tailed triangles and chordal-cycles; only GIN with provable $1$-WL expressivity can count stars with a small MAE. PPGN can count all substructures well with low MAE ($\sim 0.01$), however at the cost of $O(n^3)$ complexity. The poor performance of the plain transformer $\mathcal A_1$ is not surprising: its MAE is almost two magnitudes higher than PPGN, which indicates that $\mathcal A_1$ cannot count any substructures. This aligns perfectly with our theory that $\mathcal A_1$ is strictly less expressive than $1$-WL.

Interestingly, the results of our proposed models also align quite well with our theories, which verify the effectiveness of our sparse high-order transformers.
\begin{itemize}
    \item We already theoretically proved that $\mathcal A_2^{\mathsf {Ngbh}}$ is as powerful as $2$-WL (hence $1$-WL as well as GIN). The performance of our $\mathcal A_2^{\mathsf {Ngbh}}$ model highly resembles that of $1$-WL equivalent GIN for all tasks, which is consistent with the theory. $\mathcal A_2^{\mathsf {Ngbh}}$ even achieves SOTA on counting stars, which implies the empirical benefit of bringing in global information (recall the definition of $2$-neighbor) - although it does not increase theoretical expressive power, the global information helps in algorithmic alignment.
    \item $\mathcal A_2^{\mathsf {Ngbh+}}$ is proved to be strictly more powerful than $\mathcal A_2^{\mathsf {Ngbh}}$, which is verified by the experimental results. When counting triangles, tailed triangles and chordal-cycles that $\mathcal A_2^{\mathsf {Ngbh}}$ fails, $\mathcal A_2^{\mathsf {Ngbh+}}$ all complete perfectly (the same MAE magnitude as PPGN). This verifies that $\mathcal A_2^{\mathsf {Ngbh+}}$ is strictly more expressive than $2$-WL, otherwise it would not be able to count any of triangles, tailed triangles and chordal-cycles.
    \item In \cref{SectionProofAppendix} we show that $\mathcal A_2^{\mathsf {LN}}$ is strictly less expressive than $\mathcal A_2^{\mathsf {Ngbh+}}$, which is attributed to the fact that $\mathcal A_2^{\mathsf {LN}}$ has slightly higher MAE compared to $\mathcal A_2^{\mathsf {Ngbh+}}$ in all four tasks. However, $\mathcal A_2^{\mathsf {LN}}$ is much more efficient than $\mathcal A_2^{\mathsf {Ngbh+}}$ as well as PPGN, and the absolute MAE values of $\mathcal A_2^{\mathsf {LN}}$ are still at a low magnitude compared with $1$-WL baseline models. In summary, $\mathcal A_2^{\mathsf {LN}}$ strikes a good balance between expressivity and efficiency.
    \item The results of the simplicial transformer with simplex neighbor attention mechanism $\mathcal {AS}_{0:1}^{\mathsf {SN}}$ are also consistent with our theory in \cref{SectionSimplicialTransformer}. Leveraging the connectivity of $0$-simplices (nodes), $1$-simplices (edges), and partially $2$-simplices (triangles), $\mathcal {AS}_{0:1}^{\mathsf {SN}}$ performs highly competitive in counting triangles and tailed triangles. However, it fails to count chordal-cycles, which indicates that it is more powerful than $1$-WL yet does not achieve full $3$-WL expressivity. It has $O((n+m) \bar D_{\mathcal S})$ complexity where $m=|E|$ is the number of edges, while $\bar D_{\mathcal S}$ is the average number of extend neighbors (including boundaries, coboundaries, and upper/lower adjacent neighbors); hence $\mathcal {AS}_{0:1}^{\mathsf {SN}}$ is generally even more efficient than $\mathcal A_2^{\mathsf {LN}}$ on sparse graphs.
    
\end{itemize}

Again, both PPGN and most of our models are completely capable of counting these substructures. However, since PPGN needs internal $O(n^3)$ complexity, our model variants are much more efficient.

\begin{table}[t]
\caption{Substructure counting performance (test MAE $\downarrow$). Highlighted are the \textcolor{orange}{first}, \textcolor{teal}{second} and \textcolor{violet}{third} results.}
\label{Table_counting}
\vskip -0.1in
\begin{center}
\begin{tabular}{lcccc}
\toprule
Model & Triangle & Tailed triangle & Star & Chordal-Cycle\\
\midrule
GCN & 0.4186 & 0.3248 & 0.1798 & 0.2822\\
GIN & 0.3569 & 0.2373 & 0.0224 & 0.2185\\
PNA & 0.3532 & 0.2648 & 0.1278 & 0.2430\\
PPGN & \textcolor{orange}{0.0089} & \textcolor{orange}{0.0096} & \textcolor{violet}{0.0148} & \textcolor{orange}{0.0090} \\
Transformer & 1.0712 & 0.8929 & 1.3634 & 0.9682 \\
\midrule
$\mathcal A_2^{\mathsf {Ngbh}}$ & 0.3780 & 0.2764 & \textcolor{orange}{0.0133} & 0.2467 \\
$\mathcal A_2^{\mathsf {Ngbh+}}$ & \textcolor{violet}{0.0121} & \textcolor{violet}{0.0186} & \textcolor{teal}{0.0146} & \textcolor{teal}{0.0158} \\
$\mathcal{A}_2^{\mathsf {LN}}$ & 0.0165 & 0.0201 & 0.0458 & \textcolor{violet}{0.0380}\\
$\mathcal{AS}_{0:1}^{\mathsf {SN}}$ & \textcolor{teal}{0.0103} & \textcolor{teal}{0.0133} & 0.0199 & 0.1893\\
\bottomrule
\end{tabular}
\end{center}
\vskip -0.1in
\end{table}


\subsection{Additional Results on Real-World Datasets}\label{subsecexprealworld}

In addition to the highly competitive performance on ZINC (see the main text), we also provide experimental results on other real-world datasets, including ogbg-molhiv~\citep{OGB} and two datasets from Long Range Graph Benchmark (LRGB)~\citep{LRGB}. We directly adopt experimental settings and hyper-parameters from the GraphGPS~\citep{GPS}.




\paragraph{Molecular Property Prediction.} 

For classification task, we choose ogbg-molhiv dataset from the OGB benchmark~\citep{OGB}, which contains 41k molecules. The task is a binary graph-level classification to predict whether a molecule inhibits HIV virus replication or not. The performance is measured by AUROC.

We report our results in \cref{Table_ogb}. It is notable that complex models tend to suffer from overfitting on this dataset, and SOTA models usually contain manually extracted feature, e.g. CIN~\citep{CWNetworks} and GSN~\citep{GSN} both contain artificially extracted substructures. In comparison, our models completely learn from the graph structures without any additional manually crafted information. In detail, $\mathcal {A}_2^{\mathsf {LN+VT}}$, $\mathcal A_{1,2}^{\mathsf{Ngbh}}$ and $\mathcal {AS}_{0:1}^{\mathsf {SN+VS}}$ all outperform GPS, showing the empirical benefit of high-order models. In particular, the results of $\mathcal A_{1,2}^{\mathsf{Ngbh}}$ and $\mathcal {AS}_{0:1}^{\mathsf {SN+VS}}$ are highly competitive, which implies that global information and local structures may both play an important role in the prediction of graph-level properties.

\begin{table}[t]
\caption{Results on ogbg-molhiv~\citep{OGB}. Shown is the mean $\pm$ std of 5 runs with different random seeds. Highlighted are the \textcolor{orange}{first}, \textcolor{teal}{second} and \textcolor{violet}{third} results.}
\label{Table_ogb}
\vskip -0.1in
\begin{center}
\begin{tabular}{lc}
\toprule
Model & AUROC $\uparrow$ \\
\midrule
GCN+virtual node & $0.7599\pm 0.0119$ \\
GIN+virtual node & $0.7707\pm 0.0149$ \\
PNA & $0.7905\pm 0.0132$\\
DGN~\citep{DGN} & $0.7970\pm0.0097$\\
CIN & \textcolor{orange}{$0.8094\pm 0.0057$}\\
GIN-AK+ & $0.7961\pm0.0119$ \\
GSN~\citep{GSN} & \textcolor{teal}{$0.8039\pm0.0090$}\\
\midrule 
SAN & $0.7785\pm 0.2470$\\
GPS & $0.7880\pm 0.0101$ \\
\midrule
$\mathcal {A}_2^{\mathsf {LN+VT}}$ (ours) & $0.7901\pm 0.0082$\\
$\mathcal A_{1,2}^{\mathsf{Ngbh}}$ (ours) & \textcolor{violet}{$0.7981\pm 0.0097$} \\
$\mathcal {AS}_{0:1}^{\mathsf {SN+VS}}$ (ours) & \textcolor{violet}{$0.7981\pm 0.0114$} \\

\bottomrule
\end{tabular}
\end{center}
\vskip -0.1in
\end{table}

\begin{table*}[ht]
\caption{Experiments on two datasets from long-range graph benchmarks (LRGB)~\citep{LRGB}. Shown is the mean $\pm$ std of 5 runs with different random seeds. Highlighted are the \textcolor{orange}{first}, \textcolor{teal}{second} and \textcolor{violet}{third} results.}
\label{Table_LRGB}
\vskip 0.15in
\begin{center}
\begin{tabular}{lcc}
\toprule
model & Peptides-func (AP $\uparrow$)  & Peptides-struct (MAE $\downarrow$) \\
\midrule
GCN & $0.5930\pm 0.0023$ & $0.3496\pm 0.0013$ \\
GINE & $0.5498\pm 0.0079$ & $0.3547\pm 0.0045$ \\
GatedGCN & $0.5864\pm 0.0077$ & $0.3420\pm 0.0013$ \\
\midrule
Transformer+LapPE & $0.6326\pm 0.0126$ & $0.2529\pm 0.0016$\\
SAN+LapPE & $0.6384\pm 0.0121$ & $0.2683\pm 0.0043$\\
SAN+RWSE & $0.6439\pm 0.0075$ & $0.2545\pm 0.0012$\\
GPS & \textcolor{orange}{$0.6535\pm 0.0041$} & \textcolor{teal}{$0.2500\pm 0.0005$}\\
\midrule
$\mathcal {AS}_{0:1}^{\mathsf {SN}}$ (ours) & $0.5876\pm 0.0079$ & $0.2703\pm 0.0015$ \\
$\mathcal {AS}_{0:1}^{\mathsf {SN+VS}}$ (ours) & \textcolor{teal}{$0.6486\pm 0.0063$} & \textcolor{violet}{$0.2524\pm 0.0009$}\\
$\mathcal {AS}_{0:1}\text{-}$dense+attn.bias (ours) & \textcolor{violet}{$0.6445\pm 0.0082$} & \textcolor{orange}{$0.2486\pm 0.0007$}\\

\bottomrule
\end{tabular}
\end{center}
\vskip -0.1in
\end{table*}

\paragraph{Long Range Interaction Prediction.} Empirically, one of the advantages of graph transformers over message-passing GNNs is that the former are able to capture long-range information via attention, while the latter suffer from restricted reception field. Long Range Graph Benchmark (LRGB)~\citep{LRGB} is a recently proposed benchmark to evaluate models' capacity in capturing long-range interactions within graphs. We choose two datasets, namely Peptides-func and Peptides-struct from LRGB to evaluate our models. Both datasets consist of atomic graphs of peptides. The task for Peptides-func is a multi-label graph classification into 10 nonexclusive peptide functional classes measure by average precision. The task for Peptides-struct is graph regression of 11 3D-structural properties of the peptides measured by MAE.

As shown in \cref{Table_scalability}, since these datasets contain large graphs, $\mathcal A_2$ with neighbor attention and local neighbor attention mechanisms cannot scale to them unfortunately. The results of our simplicial transformers are reported in \cref{Table_LRGB}, while performance of cross-attention and sampling connected $3$-tuples are displayed in \cref{Table_crossatten_A12} as ablation study. To the best of our knowledge, our simplicial transformers are the first second-order models that scale to LRGB datasets.

According to \cref{Table_LRGB}, our $\mathcal {AS}_{0:1}^{\mathsf {SN+VS}}$ and $\mathcal {AS}_{0:1}\text{-}$dense+attn.bias both outperform other transformers with PE/SE, including Transformer ($\mathcal A_1$) and SAN~\citep{RethinkingGTLap}. Interestingly, the virtual simplex in $\mathcal {AS}_{0:1}^{\mathsf {SN+VS}}$ leads to a significant performance gain compared with $\mathcal {AS}_{0:1}^{\mathsf {SN}}$, indicating that global information aggregated by the virtual simplex indeed helps model to predict long-range interactions and global properties better. $\mathcal {AS}_{0:1}\text{-}$dense+attn.bias internally capture global information due to the dense attention mechanism, and the (extended) Hodge Laplacian as attention bias provides beneficial structural information - it even surpasses GPS on the Peptides-struct dataset.

\subsection{Ablation Study and Empirical Analysis}

\subsubsection{Scalability and Efficiency Analysis}

\begin{table}[t]
\caption{Scalability of proposed models on datasets of different scales (Avg. \# nodes and Avg. \# edges are listed in \cref{Table_dataset_statistics}). Hyperparmeters are the same as GPS~\citep{GPS} for all models.}
\label{Table_scalability}
\vskip -0.1in
\begin{center}
\begin{tabular}{lccc}
\toprule
Model & ZINC & ogbg-molhiv & Peptides-func/struct\\
\midrule
$\mathcal A_2$-dense & \XSolidBrush & \XSolidBrush & \XSolidBrush \\
$\mathcal A_2$-Performer & \Checkmark & \XSolidBrush & \XSolidBrush \\
$\mathcal A_2^{\mathsf {Ngbh}}$ & \Checkmark & \XSolidBrush & \XSolidBrush \\
$\mathcal A_2^{\mathsf {LN}}$ & \Checkmark & \Checkmark & \XSolidBrush \\
$\mathcal A_2^{\mathsf {VT}}$ & \Checkmark & \Checkmark & \Checkmark \\
\midrule 
$\mathcal A_{1,2}^{\mathsf {Ngbh}}$ & \Checkmark & \Checkmark & \Checkmark \\
\midrule
$\mathcal {AS}_{0:1}$-dense & \Checkmark & \Checkmark & \Checkmark \\
$\mathcal {AS}_{0:1}$-Performer & \Checkmark & \Checkmark & \Checkmark \\
$\mathcal {AS}_{0:1}^{\mathsf {SN}}$ & \Checkmark & \Checkmark & \Checkmark \\
$\mathcal {AS}_{0:1}^{\mathsf {VS}}$ & \Checkmark & \Checkmark & \Checkmark \\

\bottomrule
\end{tabular}
\end{center}
\vskip -0.1in
\end{table}

\begin{table}[t]
\caption{\# parameters and running time (s/epoch) of our models on ZINC. Other hyper-parameters are the same as in \citep{GPS}.}
\label{Table_time}
\vskip -0.1in
\begin{center}
\begin{tabular}{lcc}
\toprule
Model & \# parameters & running time (s/epoch) \\
\midrule
GPS~\citep{GPS} & 423,717 & 21\\
\midrule
$\mathcal A_2$-Performer & 863,269 & 107 \\
$\mathcal A_2^{\mathsf {Ngbh+}}$ & 322,613 & 69\\
$\mathcal A_2^{\mathsf {LN+VT}}$ & 389,173 & 37 \\
\midrule
$\mathcal A_{1,2}$-dense & 607,253 & 64 \\
$\mathcal A_{1,2}^{\mathsf {Ngbh}}$ & 601,253 & 37 \\
\midrule
$\mathcal {AS}_{0:1}$-dense & 342,813 & 35 \\
$\mathcal {AS}_{0:1}^{\mathsf {SN+VS}}$ & 340,069 & 24 \\
\bottomrule
\end{tabular}
\end{center}
\vskip -0.1in
\end{table}

To verify the scalability and efficiency of our proposed models, we summarize the scalability of different models to various datasets in \cref{Table_scalability}, and report the number of parameters and the running time of our models on ZINC in \cref{Table_time}.

As summarized in \cref{Table_dataset_statistics}, the small molecular graphs in ZINC~\citep{Zinc} have the smallest average size, while the Peptides-func/struct contain large graphs with hundreds of nodes and edges. Interestingly, $\mathcal A_2$ with different attention mechanisms form a step-like scalability in \cref{Table_scalability}, which verifies the fact that $\mathcal A_2$-dense, $\mathcal A_2^{\mathsf {Ngbh}}$, $\mathcal A_2^{\mathsf {LN}}$ and $\mathcal A_2^{\mathsf {VT}}$ gradually reduce their complexity.

In \cref{Table_time}, we provide the number of parameters as well as the running time of our models on ZINC. As mentioned in the main text, all these models have similar number of parameters compared with GPS when sharing common hyper-parameters (e.g. number of layers and network width). Regarding running time, all our models have the same magnitude as GPS, which is completely acceptable. Remarkably, our sparse attention mechanisms significantly improve efficiency. Simplicial transformers are generally more efficient than tuple-based transformers. All these observations are consistent with our theoretical analysis.

\subsubsection{Cross Attention and Sampling Strategy}

In \cref{SubsecImplementationAppendix} we detail the implementation of cross-attention $\mathcal A_{1,2}$ and $\mathcal A_3$ with sampling connected $3$-tuples. The corresponding results are reported in \cref{Table_crossatten_A12}, where $\mathcal A_{1,2}^{\mathsf {Ngbh}}$ refers to the sparse implementation of $\mathcal A_{1,2}$ in which each first-order query token computes attention with the $2$-tuples containing the query node (i.e., $2$-neighbors). We observe that the performances vary from datasets. Specifically, $\mathcal A_{1,2}^{\mathsf {Ngbh}}$ achieves competitive $0.076$ MAE, in comparison, the dense implementation of $\mathcal A_{1,2}$ has $0.091$ MAE (not reported in the table). Despite that these methods do not achieve SOTA, they open a chance of leveraging the benefits of higher-order methods in graph learning in the future. 

\begin{table}[t]
\caption{Ablation of $\mathcal A_3$ with connected $3$-tuples and cross-attention $\mathcal A_{1,2}$. Shown is mean $\pm$ std of $5$ runs with different random seeds.}
\label{Table_crossatten_A12}
\vskip -0.1in
\begin{center}
\begin{tabular}{lcccc}
\toprule
Model & ZINC (MAE $\downarrow$) & Peptides-func (AP $\uparrow$) & Peptides-struct (MAE $\downarrow$)\\
\midrule
$\mathcal A_3$(connected) & $0.138\pm 0.007$ & $0.6329\pm 0.0117$ & $0.2529\pm 0.0023$\\
\midrule
$\mathcal A_{1,2}^{\mathsf{Ngbh}}$ & $0.076\pm0.005$ & $0.6419\pm 0.0108$ & $0.2612\pm 0.0013$ \\
\bottomrule
\end{tabular}
\end{center}
\vskip -0.1in
\end{table}

\subsubsection{Node Classification and Over-smoothing}

\begin{table}[t]
\caption{Node classification performance on three heterophilic transductive datasets. Shown is the mean $\pm$ std of $10$ runs with different random seeds.}
\label{Table_node_heterophilic}
\vskip -0.1in
\begin{center}
\begin{tabular}{lccc}
\toprule
Model & Cornell & Texas & Wisconsin\\
\midrule
GCN & $53.78\pm 3.07$ & $65.95\pm 3.67$ & $66.67\pm 2.63$\\
GCN+LapPE & $56.22\pm 2.65$ & $65.95\pm 3.67$ & $66.47\pm 1.37$\\
GCN+RWSE & $53.78 \pm4.09$ &  $62.97 \pm3.21$ &  $69.41 \pm2.66$\\
GCN+DEG & $53.51 \pm2.65$ & $66.76 \pm2.72$ & $67.26 \pm1.53$ \\
\midrule
GPS(GCN+Transformer)+LapPE & $66.22 \pm3.87$ & $75.41 \pm1.46$ & $74.71 \pm2.97$\\
GPS(GCN+Transformer)+RWSE & $65.14\pm5.73$ & $73.51 \pm2.65$ &  $78.04 \pm2.88$ \\
GPS(GCN+Transformer)+DEG & $64.05\pm2.43$ & $73.51 \pm3.59$ &  $75.49 \pm4.23$ \\
\midrule
Transformer+LapPE & $69.46\pm1.73$ & $77.84 \pm1.08$ &  $76.08 \pm1.92$ \\
Transformer+RWSE & $70.81\pm2.02$ & $77.57 \pm1.24$ &  $80.20 \pm2.23$ \\
Transformer+DEG & $71.89\pm2.48$ & $77.30 \pm1.32$ &  $79.80 \pm0.90$ \\
\midrule
Graphormer+DEG& $68.38\pm1.73$ & $76.76 \pm1.79$ &  $77.06 \pm1.97$ \\
Graphormer+attn.bias & $68.38\pm1.73$ & $76.22 \pm2.36$ &  $77.65 \pm2.00$ \\
\midrule 
$\mathcal {AS}_{0:1}$-dense+attn.bias & $70.27\pm 2.96$ & $76.84\pm 1.66$ & $77.45\pm 0.98$\\

\bottomrule
\end{tabular}
\end{center}
\vskip -0.1in
\end{table}

It is widely believed that MPNN suffers from problems including over-smoothing and over-squashing, while graph transformers can (partly) address these problems~\citep{AttendingGT}. However, there is still a lack of thorough and systematical analysis. It is natural to ask whether higher-order graph transformers can address these problems.

Here we primarily and empirically analyze the performance of high-order transformers in heterophilic transductive node classification tasks on heterogeneous graphs. Cornell, Texas, and Wisconsin are three popular datasets from WebKB~\citep{WebKB}. We follow the experimental settings and hyperparameters of \citep{AttendingGT}, and report the performance of two-layer simplicial transformers in \cref{Table_node_heterophilic}. We do not report tuple-based transformers since the graphs are directed, thus the concept of tuples is not applicable.

Our simplicial transformer without PE/SE significantly outperforms GPS with PE/SE, verifying the advantages of the global attention mechanism. Our models also have similar performance compared to Transformer with RWSE/LapPE, leveraging the advantage of structure awareness brought by attention bias. An interesting phenomenon is that Transformer alone perform better without GCN - adding GCN as in GPS would even decrease the test accuracy. \citet{AttendingGT} anticulated that this is due to the oversmoothing of message-passing GNNs on heterophilic graph. Our results somewhat support this explanation; however, more future work is needed to give a comprehensive answer to the question. Moreover, it is also worth exploring the empirical advantages of (high-order) transformers over MPNN and other non-transformer models.

\end{document}